\author{}
\definecolor{DarkGreen}{rgb}{0.1,0.5,0.1}
\definecolor{DarkRed}{rgb}{0.5,0.1,0.1}
\definecolor{DarkBlue}{rgb}{0.1,0.1,0.5}
\definecolor{RoyalBlue}{RGB}{0,100,170}
\definecolor{SapphireBlue}{RGB}{0, 100, 200}
\newtheorem{theorem}{Theorem}
\newtheorem*{theorem*}{Theorem}
\newtheorem{lemma}{Lemma}
\newtheorem{corollary}{Corollary}
\newtheorem{proposition}{Proposition}
\newtheorem{definition}{Definition}
\newtheorem{remark}{Remark}
\newtheorem*{question}{Question}
\newcommand{\numberthis}{\addtocounter{equation}{1}\tag{\theequation}}
\newcommand{\argmin}{\mathop{\mathrm{argmin}}}
\def\R{\mathbb{R}}
\def\N{\mathbb{N}}
\def\divergence{\mathrm{div}}
\def\eg{\emph{e.g.}}
\DeclareMathOperator{\spn}{span}
\DeclareMathOperator{\nn}{nn}
\def\h{H^1_0(\Omega)}
\def\hh{H^1(\Omega)}
\def\ch{\mathcal{H}}
\def\l2{L^2(\Omega)}
\def\linf{L^\infty(\Omega)}
\def\pc{C_p}
\def\Keigenfunctions{\Phi_k}
\def\Keigenfunctionsmath{\Phi_k}
\def\Keigenfunctionsapprox{\tilde{\Phi}_k}
\def\numassumptions{(i)-(iii)}
\def\tl{\tilde{L}}
\def\ta{\tilde{A}}
\def\tc{\tilde{c}}
\def\tlambda{\tilde{\lambda}}
\def\tvarphi{\tilde{\varphi}}
\def\fkeigen{f_{\spn}}
\def\fkeigenapprox{\tilde{f}_{\spn}}
\def\fnn{f_{\nn}}
\def\invkappa{\frac{\tlambda_{k} - \tlambda_{1}}{\tlambda_{k} + \tlambda_{1}}}
\def\fnumparams{N_f}
\def\ustar{u^\star}
\def\uspansolution{u_{\spn}^\star} 
\def\uapproxsolution{\tilde{u}_{\spn}^\star} 
\def\usemieigen{\tilde{u}} 
\def\hu{\hat{u}} 
\def\errordaviskahan{\frac{\|f\|_{\l2}}{\gamma - \delta }}
\def\gammavalue{\frac{1}{\lambda_k} - \frac{1}{\lambda_{k+1}}}
\def\deltavalue{\max\left\{\frac{\epsilon_A}{m}, \frac{\epsilon_c}{\zeta}\right\}}
\def\errorspan{
\frac{\epsilon_{\spn}}{\lambda_1}
+ \frac{\delta}{\lambda_1} \errordaviskahan
+ \delta \|\uapproxsolution\|_{\l2}
}
\def\Cvalue{
(2d^2 + 1)\max\left\{\max_{\alpha:|\alpha|\leq 3}\max_{i,j}\|\partial^{\alpha}a_{ij}\|_{\linf},
\max_{\alpha:|\alpha|\leq 2}\|\partial^\alpha c\|_{\linf}\right\}}
\def\errorf{f}
\def\errorspanstar{\frac{\|\errorf\|_{\l2}\delta}{\gamma - \delta}}
\def\errorLntildeLnold{\delta}
\def\errorLntildeLn{2\delta}
\def\learningrate{\frac{2}{\tlambda_1 + \tlambda_k}}
\def\learningratenotilde{\frac{2}{\lambda_1 + \lambda_k}}
\def\constantdeltagamma{\frac{\delta}{\gamma - \delta}}
\def\constantnumber{4}
\definecolor{prune}{rgb}{0.44, 0.11, 0.11}
\definecolor{myblue}{rgb}{0, .5, 1}
\definecolor{maroon}{rgb}{0.5450, 0, 0}
\definecolor{darkred}{rgb}{0.5450, 0, 0}
\definecolor{RoyalBlue}{RGB}{0,100,170}
\definecolor{DarkBlue}{RGB}{20,70,200}
\definecolor{peach}{rgb}{1, 0.56, 0.56}
\definecolor{NotionGreen}{RGB}{15,123,108}
\definecolor{NotionOrange}{RGB}{217,115,13}
\definecolor{NotionRed}{RGB}{224,62,62}
\definecolor{MontrealBlue}{RGB}{0, 30, 98}
\def\shownotes{1}  
\newcommand{\authnote}[2]{{$\ll$\textsf{\footnotesize #1: #2}$\gg$}}
\newcommand{\authnote}[2]{}
\title{
Parametric Complexity Bounds for  Approximating PDEs
\\with Neural Networks
}
\author{Tanya Marwah,
Zachary C. Lipton,
Andrej Risteski \\ 
Machine Learning Department, Carnegie Mellon University \\ 
\texttt{\{tmarwah, zlipton, aristesk\}@andrew.cmu.edu}}
\date{}
\begin{document}

\maketitle
\begin{abstract}
Recent experiments have shown
that deep networks can approximate solutions
to high-dimensional PDEs, 
seemingly escaping the curse of dimensionality.
However, questions regarding
the theoretical basis 
for such approximations,
including the required network size, 
remain open.
In this paper, we investigate the 
representational power of neural networks 
for approximating solutions to linear elliptic PDEs 
with Dirichlet boundary conditions.
We prove that when a PDE's coefficients 
are representable by small 
neural networks,
the parameters required to approximate its solution 
scale polynomially with the input dimension $d$
and proportionally to the parameter counts 
of the coefficient networks.
To this we end, we develop a proof technique that
simulates gradient descent
(in an appropriate Hilbert space)
by growing a neural network architecture
whose iterates each participate
as sub-networks in their 
(slightly larger) successors, 
and converge
to the solution of the PDE.
We bound the size of the solution,
showing a polynomial dependence on $d$
and no dependence
on the volume of the domain.

\end{abstract}

\section{Introduction} 

A partial differential equation (PDE) 
relates a multivariate function 
defined over some domain 
to its partial derivatives.
Typically, one's goal is to solve 
for the (unknown) function,
often subject to additional constraints,
such as the function's value 
on the boundary of the domain.
PDEs are ubiquitous in both 
the natural and social sciences,
where they model such diverse processes 
as heat diffusion \citep{crank1947practical, ozicsik2017finite},
fluid dynamics \citep{anderson1995computational, temam2001navier}, 
and financial markets \citep{black1973pricing, ehrhardt2008fast}.
Because most PDEs of interest 
lack closed-form solutions,
computational approximation methods
remain a vital and an active field of research \citep{ames2014numerical}.
For low-dimensional functions, 
dominant approaches include the finite differences 
and finite element methods \citep{leveque2007finite},
which discretize the domain.
After partitioning the domain into a \emph{mesh},
these methods solve for the function value at its vertices.
However, 
these techniques
scale exponentially with the input dimension, 
rendering them unsuitable 
for high-dimensional problems. 

Following breakthroughs in deep learning 
for approximating high-dimensional functions
in such diverse domains as computer vision 
\citep{NIPS2012_c399862d, radford2015unsupervised}
and natural language processing 
\citep{bahdanau2014neural, devlin2018bert,vaswani2017attention},
a burgeoning line of research
leverages neural networks 
to approximate solutions to PDEs.
This line of work has produced 
promising empirical results for common PDEs 
such as the Hamilton-Jacobi-Bellman 
and Black-Scholes equations 
\citep{han2018solving, grohs2018proof, sirignano2018dgm}.
Because they do not explicitly discretize the domain,
and given their empirical success 
on high-dimensional problems, 
these methods \emph{appear} 
not to suffer the curse of dimensionality.
However, these methods are not well understood theoretically,
leaving open questions about when they are applicable,
what their performance depends on,
and just how many parameters are required
to approximate the solution to a given PDE.

Over the past three years,
several theoretical works 
have investigated questions 
of representational power 
under various assumptions.
Exploring a variety of settings,
\citet{kutyniok2019theoretical},
\citet{grohs2018proof},
and \citet{jentzen2018proof}, 
proved that the number of parameters 
required to approximate a solution to a PDE
exhibits a less than exponential dependence
on the input dimension 
for some special parabolic PDEs
that admit straightforward analysis.
\citet{grohs2020deep} consider elliptic PDEs with
Dirichlet boundary conditions.
However, their rate depends 
on the volume of the domain, 
and thus can have an implicit 
exponential dependence on dimension 
(e.g., consider a hypercube 
with side length greater than one). 

In this paper, 
we focus on linear elliptic PDEs 
with Dirichlet boundary conditions,
which are prevalent in science and engineering 
(\eg, the Laplace and Poisson equations).   
Notably, linear elliptic PDEs 
define the steady state of processes 
like heat diffusion and fluid dynamics. 
Our work asks:
\begin{question}
How many parameters suffice 
to approximate the solution 
to a linear elliptic PDE 
up to a specified level of precision
using a neural network?
\end{question}
 
We show that when the coefficients of the PDE
are expressible as small neural networks 
(note that PDE coefficients are functions), 
the number of parameters required 
to approximate the PDE's solution 
is proportional to 
the number of parameters required 
to express the coefficients.
Furthermore, we show that the number of parameters 
depends polynomially on the dimension
and does not depend upon the volume of the domain.

\section{Overview of Results}
To begin, we formally define linear elliptic PDEs.
\begin{definition}[Linear Elliptic PDE \citep{evans1998partial}]
\label{eq:main_pde}
Linear elliptic PDEs
with Dirichlet boundary condition
can be expressed in the following form:
\[\left\{
\begin{aligned}
    \left(Lu\right)(x) \equiv \left(-\text{div}\left(A \nabla u\right) + cu\right)(x) &= f(x), 
    \forall x \in \Omega, \\
     u(x)&= 0, \forall x \in  \partial \Omega,
\end{aligned} \right.\]
where $\Omega \subset \R^d$ 
is a bounded open set 
with a boundary $\partial \Omega$.
Further, for all $x \in \Omega$, 
$A: \Omega \to \mathbb{R}^{d \times d}$ 
is a matrix-valued function, s.t. 
$A(x) \succ 0$, and $c: \Omega \to \R$, s.t. $c(x) > 0$.
\footnote{
Here, $\text{div}$ denotes the divergence operator.
Given a vector field $F:\mathbb{R}^d \to \mathbb{R}^d$,
$\text{div}(F) = 
\nabla \cdot F = \sum_{i=1}^d \frac{\partial F_i}{\partial x_i}$
}
\end{definition}

We refer to $A$ and $c$ 
as the \emph{coefficients} of the PDE. 
The divergence form in Definition \ref{eq:main_pde}
is one of two canonical ways to define 
a linear elliptic PDE \citep{evans1998partial} 
and is convenient for several technical reasons
(see Section~\ref{section:prelim_notation_definitions}). 
The Dirichlet boundary condition
states that the solution 
takes a constant value (here $0$) 
on the boundary $\partial \Omega$.


Our goal is to express
the number of parameters 
required to approximate the solution of a PDE 
in terms of 
those
required to approximate its coefficients $A$ and $c$.
%
Our key result shows:
\begin{theorem*}[Informal]
    If the coefficients $A, c$ 
    and the function $f$ 
    are approximable by neural networks 
    with at most $N$ parameters,
    the solution $u^\star$ to the PDE 
    in Definition \ref{eq:main_pde} 
    is approximable by a neural network 
    with $O\left(\mbox{poly}(d)N\right)$ parameters.
\end{theorem*}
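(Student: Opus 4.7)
The plan is to reformulate the PDE as a minimization problem, run (preconditioned) gradient descent in the appropriate Hilbert space, and then inductively represent each iterate as a neural network whose size grows by only an additive $O(\poly(d)\,N)$ per step. Concretely, the solution $\ustar$ to the elliptic PDE in Definition~\ref{eq:main_pde} is the unique minimizer in $\h$ of the Dirichlet energy
\[
\mathcal{E}(u) = \tfrac{1}{2}\langle Lu, u\rangle_{\l2} - \langle f, u\rangle_{\l2}
              = \tfrac{1}{2}\int_\Omega \bigl(\nabla u^\top A\, \nabla u + c\,u^2\bigr)\,dx - \int_\Omega f u\,dx.
\]
Since $A\succ 0$ and $c>0$, this functional is strongly convex, and its $L^2$-gradient is exactly $Lu-f$. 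Applying gradient descent gives the iteration $u_{t+1} = u_t - \eta(Lu_t - f)$, which converges linearly at a rate controlled by the condition number $\kappa = \lambda_{\max}(L)/\lambda_{\min}(L)$. To reach error $\epsilon$ in $\l2$, it suffices to take $T = O\!\bigl(\kappa \log(1/\epsilon)\bigr)$ iterations, and the first step will be to verify that $\kappa$ is bounded by a polynomial in $d$ under the stated assumptions on $A$ and $c$ (using e.g.\ Poincar\'e's inequality, which on a bounded domain is dimension-free in the sense relevant to us, and uniform bounds on $\|A\|_{\op}$ and $\|c\|_{\linf}$).

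The second step is the neural-network simulation. By hypothesis, $A$, $c$, and $f$ are each realized by networks of size at most $N$. I will build $u_t$ inductively so that $u_t$ is a neural network $\mathcal{N}_t$ of size $O(t\cdot \poly(d)\cdot N)$. For the induction step, I need to argue that $Lu_t = -\divergence(A\nabla u_t) + c u_t$ is itself exactly representable as a network whose size exceeds that of $u_t$ by only $O(\poly(d)\,N)$. This uses three facts: (a) gradients $\partial_i u_t$ of a ReLU-type network are themselves networks of comparable size (one applies the standard differentiation rules to the computation graph); (b) products and sums of small networks are small networks; and (c) the divergence $\sum_i \partial_i (A\nabla u_t)_i$ adds only a $\poly(d)$ factor in width. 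Then $u_{t+1} = u_t - \eta(Lu_t - f)$ is assembled by a residual-type connection that \emph{reuses} $\mathcal{N}_t$ as a sub-network, so the size grows additively, not multiplicatively, across iterations. After $T$ iterations the final network has size $O(T\cdot\poly(d)\cdot N) = O(\poly(d)\cdot N \cdot \log(1/\epsilon))$.

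The third step is to account for the fact that $A$, $c$, $f$ are only \emph{approximately} represented by networks $\tilde A,\tilde c,\tilde f$. This replaces $L$ by a perturbed operator $\tl$ and $f$ by $\tf$, so the iteration actually converges to the solution $\tilde{u}^\star$ of a perturbed PDE. A standard stability estimate for elliptic operators (Lax--Milgram plus an operator-norm perturbation bound) shows $\|\ustar - \tilde u^\star\|_{\l2}$ is controlled by the approximation errors in $A$, $c$, $f$; this lets the total approximation budget be distributed across the sub-networks and controls the overall precision without harming the parameter count.

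The main obstacle will be step two, specifically controlling the size blow-up when representing $Lu_t$ as a network and ensuring the iterate can genuinely be re-used as a sub-network of its successor. Any step that required recomputing $u_t$ symbolically (rather than wiring it in as a subgraph) would introduce a multiplicative blow-up and destroy the bound; so the key technical care is in explicitly constructing an architecture in which the $t$-th iterate is literally a prefix of the $(t{+}1)$-st. A secondary subtlety is the Dirichlet boundary condition: one must ensure each iterate vanishes on $\partial\Omega$, which I will handle by multiplying the network output by a fixed boundary-vanishing factor (or by working with test-function formulations) so that the iteration stays inside $\h$.
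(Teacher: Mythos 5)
Your high-level outline --- cast the PDE as a strongly convex variational problem, simulate gradient descent, reuse the $t$-th iterate as a subnetwork of the $(t{+}1)$-th, and handle the $A,c,f$ approximations via an elliptic perturbation bound --- matches the paper's strategy. But three of the steps you sketch either fail outright or are quantitatively wrong, and they are where the paper's actual work happens.

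\textbf{The operator $L$ is unbounded, so your $\kappa$ is infinite.} You propose to ``verify that $\kappa=\lambda_{\max}(L)/\lambda_{\min}(L)$ is bounded by a polynomial in $d$'' via Poincar\'e and bounds on $\|A\|_{\op},\|c\|_{\linf}$. This cannot work: on $\h$, $L$ has a countable unbounded spectrum $\lambda_1\le\lambda_2\le\cdots\to\infty$, so $\lambda_{\max}(L)=\infty$ and the naive step size $\eta\asymp 1/\lambda_{\max}$ is zero. The paper's resolution is to restrict the iteration to the span $\Phi_k$ of the first $k$ eigenfunctions of $L$ (Lemma~\ref{lemma:bounded_operator_norm}): on $\Phi_k$ the operator is bounded with effective top eigenvalue $\lambda_k$, and one can take $\eta = 2/(\lambda_1+\lambda_k)$ to get the contraction factor $(\lambda_k-\lambda_1)/(\lambda_k+\lambda_1)$. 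You need some analogue of this idea; without it the convergence rate never gets off the ground. (This also forces the ``$f$ close to $f_{\spn}$'' assumption, which you do not anticipate.)

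\textbf{Invariance of the iterate set is not solved by multiplying by a boundary cutoff.} The second obstruction the paper identifies is that if $u_t\in\h$ then $Lu_t-f$ need not lie in $\h$, so the iterate can leave the admissible set. Your proposed fix --- multiplying the network output by a fixed boundary-vanishing factor --- patches the boundary \emph{value} but does not make the gradient-descent map a self-map of the Hilbert space in a way that is compatible with your convergence analysis (the contraction argument is inside the Hilbert space, and a posteriori masking breaks it). The paper again uses the eigenspace restriction: $\Phi_k$ is $L$-invariant and contained in $\h$, and replacing $f$ by its projection $\fkeigenapprox$ keeps all iterates inside $\Phi_k$ (or its approximate counterpart $\tilde{\Phi}_k$).

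\textbf{The per-step size growth is multiplicative in $d^2$, not additive.} You assert ``size grows additively, not multiplicatively, across iterations'' yielding $O(T\cdot\poly(d)\cdot N)$. The recursion in Lemma~\ref{lemma:recursion_lemma} is $N_{t+1}\le d^2 N_t + d^2 N_A + N_t + N_{\tilde f} + N_c$: computing $Lu_t$ requires all $\partial_{ij}u_t$ for $i,j\in[d]$, each a copy-size-$O(N_t)$ subnetwork, so the $N_t$ part picks up a factor of $d^2$ each step, giving $N_T = O(d^{2T}(N_0+N_A) + T(N_f+N_c))$. The informal claim $O(\poly(d)N)$ is then recovered only because $T$ is held small (the formal theorem bounds $T$ by a quantity independent of $d$); it is not because the per-step growth is additive. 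Your plan as written would lead you to claim a strictly stronger bound than is actually proved.

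Finally, a smaller point: your perturbation step describes a ``standard'' Lax--Milgram stability bound, but because $L$ and $\tl$ are unbounded, the paper needs a \emph{relative} perturbation bound $\langle(\tl-L)u,u\rangle\le\delta\langle Lu,u\rangle$ (Lemma~\ref{lemma:L_tl_inner_product_close}), plus a Davis--Kahan argument applied to the bounded inverses $L^{-1},\tl^{-1}$ to control the misalignment of the eigenspaces $\Phi_k$ and $\tilde\Phi_k$. An absolute operator-norm bound of the kind you gesture at is not available here.
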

This result, 
formally expressed in Section~\ref{section:main_result},
may help to explain
the practical efficacy of neural networks 
in approximating solutions 
to high-dimensional PDEs 
with boundary conditions 
\citep{sirignano2018dgm, li2020fourier}.
To establish this result,
we develop a constructive proof technique
that simulates gradient descent 
(in an appropriate Hilbert space)
through the very architecture of a neural network.
Each iterate, given by a neural network,
is subsumed into the (slightly larger)
network representing the subsequent iterate.
The key to our analysis is to bound 
both (i) the growth in network size 
across consecutive iterates; 
and (ii) the total number of iterates required.

\paragraph{Organization of the paper}
We introduce the required notation 
along with some mathematical preliminaries on PDEs in Section~\ref{section:prelim_notation_definitions}.
The problem setting and formal statement 
of the main result are provided 
in Section~\ref{section:main_result}.
Finally, we provide the proof of the main result in  Section~\ref{section:proof_of_main_result}.

\section{Prior Work}
Among the first papers 
to leverage neural networks
to approximate solutions to PDEs 
with boundary conditions 
are \citet{lagaris1998artificial}, 
\citet{lagaris2000neural},
and \citet{malek2006numerical}.
However, these methods discretize the input space 
and thus are not suitable 
for high-dimensional input spaces.
More recently, mesh-free neural network approaches 
have been proposed for high-dimensional PDEs 
\citep{han2018solving, raissi2017physics,raissi2019physics},
achieving impressive empirical results in various applications.
\citet{sirignano2018dgm} design a loss function 
that penalizes failure to satisfy the PDE, 
training their network on minibatches 
sampled uniformly from the input domain.
They also provide a universal approximation result,
showing
that for sufficiently regularized PDEs, 
there exists a multilayer network 
that approximates its solution. 
However, they do not comment on 
the complexity of the neural network 
or how it scales with the input dimension.
\citet{khoo2017solving} also prove 
universal approximation power, 
albeit with networks of size 
exponential in the input dimension. 
Recently, \citet{grohs2018proof, jentzen2018proof} 
provided a better-than-exponential dependence 
on the input dimension 
for some special parabolic PDEs, 
for which the simulating a PDE solver 
by a neural network is straightforward.

Several recent works~\citep{bhattacharya2020model, kutyniok2019theoretical, li2020neural, li2020fourier}
show (experimentally)
that a single neural network 
can solve for an entire family of PDEs. 
They approximate the map 
from a PDE's parameters 
to its solution,
potentially 
avoiding
the trouble of retraining for every set of coefficients. 
Among these, only \citet{kutyniok2019theoretical}
provides theoretical grounding. 
However, they assume the existence 
of a finite low-dimensional space
with basis functions
that can approximate this parametric map---and 
it is unclear when this would obtain. 
Our work proves the existence of such maps,
under the assumption that the family of PDEs 
has coefficients described by neural networks 
with a fixed architecture 
(Section~\ref{section:discussion}).

In the work most closely related to ours, 
\cite{grohs2020deep} provides approximation rates 
polynomial in the input dimension $d$ 
for the Poisson equation 
(a special kind of linear elliptic PDE) 
with Dirichlet boundary conditions.
They introduce a walk-on-the-sphere algorithm, 
which simulates a stochastic differential equation 
that can be used to solve a Poisson equation
with Dirichlet boundary conditions 
(see, e.g.,  \citet{oksendal2013stochastic}'s Theorem 9.13).
The rates provided in \citet{grohs2020deep} 
depend on the volume of the domain, 
and thus depend, implicitly, exponentially on the input dimension $d$.
Our result considers
the boundary condition for the PDE
and is
independent of the volume of the domain.
Further, we note that our results are defined 
for a more general linear elliptic PDE, 
of which the Poisson equation is a special case.

\section{Notation and Definitions}
\label{section:prelim_notation_definitions}
We now introduce several key concepts 
from PDEs and some notation.
For any open set $\Omega \subset \R^d$, 
we denote its boundary by $\partial \Omega$ 
and denote its closure by 
$\bar{\Omega} := \Omega \cup \partial \Omega$.
By $C^0(\Omega)$, we denote the space 
of real-valued continuous functions 
defined over the domain $\Omega$. 
Furthermore, for $k \in \N$, 
a function $g$ belongs to $C^k(\Omega)$ if all partial derivatives $\partial^\alpha g$ exist and are continuous for any multi-index $\alpha$, 
such that $|\alpha| \leq k$.
Finally, a function $g \in C^\infty(\Omega)$ 
if $g \in C^k(\Omega)$ for all $k \in \N$.
Next, we define several relevant function spaces:

\begin{definition}
For 
any $k \in \N \cup \{\infty\}$,
$C^k_0(\Omega):= \{g: g \in C^k(\Omega) 
, \overline{\mbox{supp}(g)} \subset \Omega\}$. 
\end{definition}

\begin{definition}
For a domain $\Omega$, 
the function space $L^2(\Omega)$ consists of all functions $g: \Omega \to \R$, s.t. 
$\|g\|_{\l2} < \infty$
where $\|g\|_{\l2} = \left(\int_\Omega |g(x)|^2 dx\right)^{\frac{1}{2}}$.
This function space is equipped 
with the inner product
$$\langle g, h\rangle_{\l2} = \int_\Omega g(x) h(x) dx.$$
\end{definition}

\begin{definition}
For a domain $\Omega$ and a function 
$g: \Omega \rightarrow \R$, 
the function space $L^{\infty}(\Omega)$ 
is defined analogously, where 
$\|g\|_{L^{\infty}(\Omega)} = \inf \{c \geq 0: |g(x)| \leq c \text{  for almost all } x \in \Omega \}$.
\end{definition}

\begin{definition}
    For a domain $\Omega$ and $m \in \N$, 
    we define the Hilbert space $H^m(\Omega)$ as
    $$ H^m(\Omega) := \{g: \Omega \rightarrow \R: 
    \partial^\alpha g \in L^2(\Omega), \; \forall \alpha \; \text{s.t.} \;\; |\alpha| \leq m\}$$
    Furthermore, $H^m(\Omega)$
    is equipped with the inner product,
    $\langle g, h\rangle_{H^m(\Omega)} = 
    \sum_{|\alpha|\leq m} 
    \int_\Omega (\partial^\alpha g)(\partial^\alpha h) dx
    $
    and the corresponding norm
    $$
    \|g\|_{H^m(\Omega)} = 
    \left(\sum_{|\alpha|\leq m} 
    \|\partial^\alpha g \|^2_{L^2(\Omega)}\right)^{\frac{1}{2}}.
    $$
\end{definition}

\begin{definition}
    \label{def:closure}
    The closure of $C^\infty_0(\Omega)$ in $H^m(\Omega)$ 
    is denoted by $H^m_0(\Omega)$.
\end{definition}
Informally, $H_0^m(\Omega)$ 
is the set of functions 
belonging to $H^m(\Omega)$
that can be approximated by a sequence
of functions $\phi_n \in C_0^\infty(\Omega)$.
This also implies that if a function
$g \in H_0^m(\Omega)$,
then $g(x) = 0$ for all $x \in \partial \Omega$.
This space (particularly with $m=1$)
is often useful when analyzing elliptic PDEs 
with Dirichlet boundary conditions.

\begin{definition}[Weak Solution]
    \label{def:weak_solution}
    Given the PDE in Definition~\ref{eq:main_pde},
    if $f \in \l2$,
    then a function $u : \Omega \to \R$ 
    solves the PDE in a weak sense
    if $u \in \h$ and for all $v \in \h$, we have
    \begin{equation}
        \label{eq:variational_formulation}
        \int_\Omega \left(A \nabla u \cdot \nabla v + cuv\right) dx = \int_\Omega fv dx
    \end{equation}
\end{definition}
The left hand side of 
\eqref{eq:variational_formulation}
is also equal to $\langle Lu, v\rangle_{\l2}$ 
for all $u, v \in \h$ 
(see Lemma~\ref{proposition:weak_solution}), 
whereas, following the definition of the $\l2$ norm,
the right side 
is simply $\langle f, v\rangle_{\l2}$.
Having introduced these preliminaries, 
we now introduce some important 
facts about linear PDEs
that feature prominently in our analysis.

\begin{proposition}
    \label{p:maingd}
    For the PDE in Definition~\ref{eq:main_pde}, 
    if $f \in \l2$
    the following hold:
    \begin{enumerate}
        \item The solution
        to Equation~\eqref{eq:variational_formulation}
        exists and is unique.
        \item The weak solution is also the unique solution 
        of the following minimization problem:
            \begin{equation}
                \label{eq:minimization_problem}
                u^\star = \argmin_{v \in \h} J(v) := 
                \argmin_{v \in \h} \left\{\frac{1}{2}\langle Lv, v \rangle_{\l2} - \langle f, v\rangle_{\l2}\right\}.
            \end{equation}
    \end{enumerate}
\end{proposition}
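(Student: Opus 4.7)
The plan is to prove both parts simultaneously via the Lax--Milgram theorem applied to the symmetric bilinear form
\[
    B(u,v) \;=\; \int_\Omega \bigl(A\nabla u\cdot\nabla v + c\,uv\bigr)\,dx
\]
associated with the operator $L$, together with a direct quadratic-expansion argument tying the weak formulation to the energy functional $J$.

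For part (1), I would check the two hypotheses of Lax--Milgram on the Hilbert space $\h$. Continuity of $B$ follows from Cauchy--Schwarz and the (implicit) $\linf$ bounds on $A$ and $c$: writing $\|A\|_\infty$ for the $\linf$ bound on the operator norm of $A(x)$, one gets $|B(u,v)|\le (\|A\|_\infty + \|c\|_\infty)\,\|u\|_{\hh}\|v\|_{\hh}$. Coercivity follows from the uniform positive definiteness $A(x)\succeq m I$ (for some $m>0$) and positivity of $c$, giving $B(u,u)\ge m\,\|\nabla u\|_{\l2}^2$, which by Poincar\'e's inequality on $\h$ dominates $\|u\|_{\hh}^2$ up to a constant. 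The linear functional $v\mapsto \langle f,v\rangle_{\l2}$ is bounded on $\h$ by Cauchy--Schwarz since $f\in\l2$. Lax--Milgram then yields a unique $u^\star\in\h$ with $B(u^\star,v)=\langle f,v\rangle_{\l2}$ for all $v\in\h$, which by Lemma~\ref{proposition:weak_solution} is exactly the weak formulation \eqref{eq:variational_formulation}.

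For part (2), observe that $A(x)$ being symmetric (it is positive definite, hence we may take it symmetric without loss of generality) makes $B$ symmetric, so $\langle Lv,v\rangle_{\l2}=B(v,v)$ and
\[
    J(v) \;=\; \tfrac{1}{2}B(v,v) - \langle f,v\rangle_{\l2}.
\]
Given the weak solution $u^\star$ from part (1), expand for arbitrary $w\in\h$:
\[
    J(u^\star+w) - J(u^\star) \;=\; B(u^\star,w) - \langle f,w\rangle_{\l2} + \tfrac{1}{2}B(w,w) \;=\; \tfrac{1}{2}B(w,w),
\]
where the first two terms cancel by the weak formulation. By coercivity, $B(w,w)\ge 0$ with equality iff $w=0$, so $u^\star$ is the unique minimizer of $J$ over $\h$. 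Conversely, any minimizer must satisfy the first-order stationarity condition $\tfrac{d}{dt}J(u+tv)|_{t=0}=0$ for all $v\in\h$, which is exactly $B(u,v)=\langle f,v\rangle_{\l2}$, recovering the weak formulation.

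The main technical point is verifying that the coefficients satisfy the uniform ellipticity and $\linf$ bounds required for Lax--Milgram; the paper's assumption $A(x)\succ 0$, $c(x)>0$ on a bounded $\Omega$ with sufficient regularity of the coefficients suffices (and will in fact be strengthened in the formal assumptions used in Section~\ref{section:main_result}). Once that is in place, the proof is essentially a textbook application of Lax--Milgram plus the standard quadratic-form identity; no delicate PDE machinery beyond Poincar\'e is required.
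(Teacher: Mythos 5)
Your proposal is correct and follows essentially the same route as the paper: Lax--Milgram (continuity from the $\linf$ bounds, coercivity from uniform ellipticity plus Poincar\'e) for existence and uniqueness, and the quadratic expansion $J(u^\star+w)-J(u^\star)=\tfrac12 B(w,w)$ together with first-order stationarity for the equivalence with the minimization problem. The only cosmetic difference is ordering (the paper proves the weak-solution/minimizer equivalence first and invokes Lax--Milgram in a separate proposition afterward), and note that the symmetry of $A$ is a stated assumption in the paper rather than something that ``may be taken without loss of generality.''
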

This proposition is standard
(we include a proof in the Appendix, Section~\ref{section:Proof_of_proposition_1} for completeness) and  
states that there exists a unique solution 
to the PDE (referred to as $u^\star$),
which is also the solution we get 
from the variational formulation 
in ~\eqref{eq:minimization_problem}.
In this work, we introduce 
a sequence of functions 
that minimizes the loss 
in the variational formulation.

\begin{definition}[Eigenvalues and Eigenfunctions,~\citet{evans1998partial}]
    \label{definition:eigenvalue_eigenfunctions}
    Given an operator $L$, the tuples $(\lambda, \varphi)_{i=1}^\infty$,
    where $\lambda_i \in \R$ 
    and $\varphi_i \in \h$
    are
    (eigenvalue, eigenfunction) pairs 
    that 
    satisfy 
    $L\varphi = \lambda \varphi$, for all $x \in \Omega$.
    Since $\varphi \in \h$,
    we know that $\varphi_{|\partial \Omega} = 0$.
    The eigenvalue can be written as
    \begin{equation}
        \label{eq:eigenvalue_definition}
        \lambda_i = \inf_{u \in X_i} \frac{\langle L u,u \rangle_{\l2}}{\|u\|^2_{\l2}},
    \end{equation}
    where 
    $X_i := \mbox{span}\{\varphi_1, \ldots, \varphi_{i}\}^{\perp} 
    = \{u \in \h: \langle u, \varphi_j\rangle_{\l2} = 0\; \forall j \in \{1, \cdots, i\}\}$
    and $0 < \lambda_1 \leq \lambda_2 \leq \cdots $. Furthermore, we define by $\Phi_k$ the span of the first $k$
eigenfunctions of $L$, i.e., $\Phi_k := \spn\{\varphi_1, \cdots, \varphi_k\}$.

\end{definition}
We note that since the operator $L$ 
is self-adjoint and elliptic (in particular, $L^{-1}$ is compact),
the eigenvalues are real and countable. 
Moreover, the eigenfunctions 
form an orthonormal basis of $\h$ 
(see \citet{evans1998partial}, Section 6.5).

\section{Main Result}
\label{section:main_result}

Before stating our results, 
we provide the formal assumptions 
on the PDEs of interest:

\paragraph{Assumptions:}
\begin{enumerate}[label=(\roman*)]
    \item \textbf{Smoothness}: 
        We assume that $\partial \Omega \in C^\infty$.
        We also assume that the coefficient 
        $A\in \Omega \rightarrow \R^{d \times d}$ 
        is a symmetric matrix-valued function, i.e., 
        $A = (a_{ij}(x))$ and 
        $a_{ij}(x) \in L^\infty(\Omega)$ 
        for all $i, j \in [d]$
        and 
        the function $c \in L^\infty(\Omega)$
        and $c(x) \geq \zeta > 0$ 
        for all $x \in \Omega$.
        Furthermore, we assume that $a_{ij}, c \in C^\infty$.
        We define a constant $$C := \Cvalue.$$
        Further, the function $f \in \l2$
        is also in $C^\infty$ 
        and 
        the projection of $f$ onto $\Keigenfunctions$ which we denote
        $\fkeigen$
        satisfies 
        for any multi-index $\alpha$: 
        $\|\partial^\alpha f - \partial^\alpha \fkeigen\|_{\l2} \leq \epsilon_{\spn}$.
        \footnote{Since $\partial \Omega \in C^\infty$ and the functions $a_{ij}, c$ and $f$ are all in $C^\infty$,
        it follows from \citet{nirenberg1955remarks} (Theorem, Section 5)
        the eigenfuntions of $L$ are also $C^\infty$. 
        Hence, the function $\fkeigen$ is in $C^\infty$ as well.
        }
    \item \textbf{Ellipticity}: 
        There exist constants 
        $M \geq m > 0$ such that, 
        for all $x \in \Omega$ and $\xi \in \R^d$,
        $$ m\|\xi\|^2 \leq \sum_{i,j=1}^d a_{ij}(x)\xi_i\xi_j \leq M \|\xi\|^2.$$
    \item \textbf{Neural network approximability}:
          There exist neural networks $\ta$ and $\tc$ 
          with $N_A, N_c \in \N$ parameters, respectively,
          that approximate the functions $A$ and $c$, i.e.,
          $ \|A - \ta\|_{L^\infty(\Omega)} \leq \epsilon_A$
          and $ \|c - \tc \|_{L^\infty(\Omega)} \leq \epsilon_c$,
          for small $\epsilon_A, \epsilon_c \geq 0$.
          We assume that for all $u \in \h$
          the operator $\tl$ defined as,
          \begin{equation}
            \label{eq:tilde_L}
            \tl u = -\divergence(\ta \nabla u) + \tc u .
          \end{equation}
          is elliptic with 
          $(\tilde{\lambda}_i, \tilde{\varphi}_i)_{i=1}^\infty$
          (eigenvalue, eigenfunction) pairs.
          We also assume that there exists 
          a neural network $\fnn \in C^{\infty}$ 
          with $\fnumparams \in N$ parameters
          such that for any multi-index $\alpha$,
         $\|\partial^\alpha f - \partial^\alpha \fnn \|_{\l2} \leq \epsilon_{\nn}$.
          By $\Sigma$, we denote the set 
          of all (infinitely differentiable) 
          activation functions
          used by networks $\ta$, $\tc$, and $\fnn$.
          By $\Sigma'$, we denote the set 
          that contains all the $n$-th order derivatives 
          of the activation functions 
          in $\Sigma$, $\forall n \in \mathbb{N}_0$
\end{enumerate}

Intuitively, ellipticity of $L$ 
in a linear PDE $Lu = f$ 
is analogous to positive definiteness 
of a matrix $Q \in \mathbb{R}^d$ 
in a linear equation $Qx = k$, 
where $x,k \in \R^d$. 

In (iii), we assume that
the coefficients $A$ and $c$, 
and the function $f$
can be approximated 
by neural networks. 
While this is true for any smooth functions
given sufficiently large $N_A, N_c, N_f$,
our results are most interesting 
when these quantities are small 
(e.g. subexponential in the input dimension $d$). 
For many PDEs used in practice,
approximating the coefficients 
using small neural networks 
is straightforward. 
For example, in heat diffusion 
(whose equilibrium is defined 
by a linear elliptic PDE)
$A(x)$ defines the conductivity 
of the material at point $x$.
If the conductivity is constant,
then the coefficients can be written
as neural networks with $O(1)$ parameters. 

The part of assumption (i) 
that stipulates that $f$ is close to $\fkeigen$ 
can be thought of as 
a smoothness condition on $f$.
For instance, if $L = -\Delta$
(the Laplacian operator), 
the Dirichlet form satisfies 
$\frac{\langle L u,u \rangle_{\l2}}{\|u\|^2_{\l2}} = \frac{\|\nabla u\|_{\l2}}{\|u\|_{\l2}}$, 
so eigenfunctions 
corresponding to higher eigenvalues 
tend to exhibit 
a higher degree of spikiness.
The reader can also think 
of the eigenfunctions 
corresponding to larger $k$ 
as Fourier basis functions 
corresponding to higher frequencies.

%

Finally, in (i) and (iii), 
while the requirement 
that the function pairs
($f$, $f_{\nn}$)
and ($f$, $f_{\spn}$)
are close not only in their values, 
but their derivatives as well
is a matter of analytical convenience,
our key results do not necessarily
depend on this precise assumption.
Alternatively, we could replace this assumption
with similar (but incomparable) conditions: 
e.g., we can also assume closeness of the values 
and a rapid decay of the $L^2$ norms of the derivatives. 
We require control over the derivatives 
because our method's 
gradient descent iterations 
involve repeatedly applying 
the operator $L$ to $f$---which results 
in progressively higher derivatives.

We can now formally state our main result:
\begin{theorem}[Main Theorem]
    \label{thm:main_result}
    Consider a linear elliptic PDE satisfying Assumptions \numassumptions,
    and let $u^\star \in \h$ denote its unique solution.
    If there exists a neural network $u_0 \in H_0^1(\Omega)$ with $N_0$ parameters,
    such that 
    $\|u^\star - u_0\|_{\l2} \leq R$, for some $R < \infty$, 
    then for every $T \in \mathbb{N}$ such that $T  \leq \frac{1}{20 \min(\lambda_k, 1) \delta}$, 
    there exists a neural network $u_T$ 
    with size 
    $$O\left(d^{2T}\left(N_0 + N_A\right) 
    + T(\fnumparams + N_c)\right)$$
    such that 
     $\|u^\star - u_T\|_{\l2} \leq \epsilon + \tilde{\epsilon}$
    where 
    \begin{align*}
        \epsilon &:= \left(\invkappa\right)^T R,\\ 
        \tilde{\epsilon} &:=
    \frac{\epsilon_{\spn}}{\lambda_1} 
    + \frac{\delta}{\lambda_1}\errordaviskahan
    + \delta \|u^\star\|_{\l2}
    + (\max\{1, T^2 C \eta\})^{T} \left(\epsilon_{\spn} + \epsilon_{nn} 
    + \constantnumber\left(1 + \constantdeltagamma\right)\lambda_k^T\|f\|_{\l2}
    \right),
    \end{align*}
    and 
    $\eta := \learningrate$,
    $\delta := \deltavalue$.
    Furthermore, the activation functions used in $u_T$ belong to the set
    $\Sigma \cup \Sigma' \cup \{\rho\}$ 
    where 
    $\rho(y) = y^2$ for all $y \in \R$ is the square activation function.
\end{theorem}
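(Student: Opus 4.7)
The plan is to construct the network $u_T$ inductively as the iterates of an \emph{approximate} gradient descent procedure on the variational objective
\[ J(v) = \tfrac{1}{2}\langle Lv, v\rangle_{\l2} - \langle f, v\rangle_{\l2} \]
from Proposition~\ref{p:maingd}. Since the Fr\'echet derivative of $J$ at $v$ is $Lv-f$, the idealized iteration $u_{t+1} = u_t - \eta(Lu_t - f)$ is replaced by the implementable one $u_{t+1} = u_t - \eta(\tl u_t - \fnn)$, where $\tl$ and $\fnn$ are the neural network approximations guaranteed by assumption~(iii) and $\eta = 2/(\tlambda_1 + \tlambda_k)$. Because $\tl$ has an unbounded spectrum, a constant learning rate cannot control all modes, so the analysis will restrict convergence to the subspace $\Keigenfunctionsapprox$ spanned by the first $k$ eigenfunctions of $\tl$ and use assumption~(i) (closeness of $f$ to $\fkeigen$) to absorb the residual on the complement.

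\textbf{Step 1: Convergence on $\Keigenfunctionsapprox$.} Writing the iteration in the eigenbasis of $\tl$, each coefficient along $\tvarphi_i$ evolves as $\alpha_i^{(t+1)} = (1 - \eta\tlambda_i)\alpha_i^{(t)} + \eta\tilde f_i$. The chosen $\eta$ equalizes the magnitudes $|1-\eta\tlambda_1| = |1-\eta\tlambda_k| = \invkappa$, yielding the geometric contraction that produces the $\epsilon = (\invkappa)^T R$ term. Modes outside $\Keigenfunctionsapprox$ contribute the $\lambda_k^T\|f\|_{\l2}$ term after combining the assumption $\|f-\fkeigen\|_{\l2}\leq\epsilon_{\spn}$ with the contraction. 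A Davis--Kahan-style perturbation argument, with $\|L-\tl\|_{\op} \leq \delta = \deltavalue$ inherited from the $L^\infty$ bounds on $\|A-\ta\|$ and $\|c-\tc\|$, converts the $\tlambda_i$-eigenspaces back to the $\lambda_i$-eigenspaces and produces the $\frac{\delta}{\lambda_1}\errordaviskahan$ and $\delta\|\uapproxsolution\|_{\l2}$ contributions.

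\textbf{Step 2: Representing each iterate as a neural network.} This is the constructive core. Given $u_t$ as a neural network with smooth activations, I will show that $u_{t+1} = u_t - \eta\bigl(-\divergence(\ta \nabla u_t) + \tc\, u_t - \fnn\bigr)$ can be realized as a network that embeds $u_t$, $\ta$, $\tc$, $\fnn$ as subnetworks. Specifically: (a) $\nabla u_t$ is representable by a network of comparable size to $u_t$, with activations drawn from $\Sigma'$, via symbolic differentiation through the network graph; (b) the matrix--vector product $\ta \nabla u_t$ requires $d^2$ scalar products, each implementable with one instance of $\ta$ and the polarization identity $xy = \tfrac{1}{2}((x+y)^2 - x^2 - y^2)$ using the square activation $\rho(y)=y^2$; (c) the outer divergence adds another factor $d$ of derivatives through the same differentiation procedure. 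Since each application of $\tl$ creates $O(d^2)$ copies of (the derivative network of) $u_t$ while $\ta,\tc,\fnn$ appear additively, unrolling $T$ iterations yields exactly $O(d^{2T}(N_0+N_A) + T(\fnumparams + N_c))$ parameters, and the activation set is $\Sigma \cup \Sigma' \cup \{\rho\}$.

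\textbf{Step 3: Error accumulation and the main obstacle.} Because we replace $L,f$ by $\tl,\fnn$ at every step, the per-iteration error must be propagated through $T$ contractions. Each application of $\tl$ differentiates $u_t$ twice, so controlling the discrepancy $(\tl-L)u_t$ requires uniform bounds on derivatives of $u_t$ of order growing with $t$---this is precisely why assumption~(i) demands closeness of $f$ and its derivatives to those of $\fkeigen$ (and similarly for $\fnn$). Telescoping these errors, using the constant $C = \Cvaluewithd$ to bound norms of $\tl-L$ acting on higher-derivative spaces and the hypothesis $T \leq \deltarange$ to prevent blow-up, yields the amplification factor $(\max\{1, T^2 C\eta\})^T$ and the additive terms in $\tilde\epsilon$. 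I expect this step to be the main technical obstacle: one must carefully bookkeep which derivatives of which quantities appear at each iterate, track how perturbations of $L$ amplify with the order of derivatives, and verify that the chosen bound on $T$ keeps the compounded product finite. Combining Steps 1--3 and applying the triangle inequality between the exact gradient descent iterate, the perturbed iterate, and $\ustar$ will yield $\|\ustar - u_T\|_{\l2} \leq \epsilon + \tilde\epsilon$, completing the proof.
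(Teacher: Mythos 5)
Your overall plan matches the paper's: gradient descent on the variational objective in $L^2$, restricted in effect to the span of the first $k$ eigenfunctions, with each iterate realized as a neural network via backpropagation and the polarization identity, and a separate bookkeeping step for the drift incurred by replacing exact coefficients with their network surrogates. Two points need repair, however.

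First, you invoke $\|L-\tl\|_{\op}\leq\delta$. Both $L$ and $\tl$ are unbounded on $\l2$, and so is their difference: $(L-\tl)u=-\divergence((A-\ta)\nabla u)+(c-\tc)u$ still contains second-order derivatives, so no finite $L^2\to L^2$ operator-norm bound exists. What does hold---and what Lemma~\ref{lemma:L_tl_inner_product_close} establishes---is the \emph{relative} bound $\langle(\tl-L)u,u\rangle_{\l2}\leq\delta\langle Lu,u\rangle_{\l2}$, which yields $\|L^{-1}-\tl^{-1}\|\leq\delta$ for the bounded inverses. Davis--Kahan must then be applied to $L^{-1},\tl^{-1}$, which is why the eigengap appearing in $\gamma$ is $1/\lambda_k-1/\lambda_{k+1}$ rather than $\lambda_{k+1}-\lambda_k$; without this adjustment your Step~1 perturbation argument is ill-posed. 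Second, in Step~3 you propose propagating $(\tl-L)u_t$ through $T$ iterations. The paper sidesteps this: the ``ideal'' sequence already uses $\tl$ and the projection $\fkeigenapprox$ in its update, so the $L$-versus-$\tl$ gap is paid once (Lemma~\ref{lemma:error_between_ustar_uapproxsolution}) and the only quantity telescoped is $r=\fkeigenapprox-\fnn$. The factor $(\max\{1,T^2C\eta\})^T$ and the $\lambda_k^T\|f\|_{\l2}$ term come from bounding $\|\tl^{n}r\|_{\l2}$ (Lemma~\ref{lamma:fnn_feigenapprox_close}), which is where the higher-derivative closeness assumptions on $f$, $\fnn$, $\fkeigen$ are spent. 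Tracking $(\tl-L)u_t$ directly is harder (the $u_t$ are moving targets) and runs into the same unboundedness issue; switching to the intermediate-sequence decomposition is the workable route.
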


%
This theorem shows that given an 
initial neural network $u_0 \in \h$ 
containing $N_0$ parameters,
we can recover a neural network 
that is $\epsilon$ close 
to the unique solution $u^\star$.
The number of parameters in $u_\epsilon$
depend on how close the initial estimate $u_0$ is 
to the solution $u^\star$, and $N_0$. 
This results in a trade-off, 
where better approximations 
may
require more parameters,
compared to a poorer approximation with fewer parameters.

Note that $\epsilon \to 0$ as $T \to \infty$,
while $\tilde{\epsilon}$ is a ``bias'' error term that does not go to $0$ as $T \to \infty$. The 
first three terms in the 
expression for $\tilde{\epsilon}$ 
result from bounding the difference between the solutions to the equations 
$L u = f$ and $\tilde{L} u = \fkeigen$, whereas the third term is due to difference between $f$ and $f_{\nn}$ and the fact that our proof involves simulating the gradient descent updates with neural networks.
Further, if the constant $\zeta$ is equal to $0$ then the error term $\epsilon_c$ will also be $0$,
in which case the term $\delta$ will equal $\epsilon_A/m$.




The fact that $\epsilon := \left(\invkappa\right)^T R$ 
comes from the fact that we are simulating $T$ steps of a gradient descent-like procedure on a strongly convex loss. 
The parameters $\tilde{\lambda}_k$ and $\tilde{\lambda}_1$ can be thought of as the effective Lipschitz and strong-convexity constants of the loss. 
%
Finally, 
to give a sense 
of what $R$ looks like,
we show in Lemma~\ref{corollary:initialization_with_u0}
that if $u_0$ is initialized to be identically zero then 
$R \leq \frac{\|f\|_{\l2}}{\lambda_1}$.

\begin{lemma}
    \label{corollary:initialization_with_u0}
    If $u_0=0$, then $R \leq \frac{\|f\|_{\l2}}{\lambda_1}$.
\end{lemma}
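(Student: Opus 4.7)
The plan is to apply the variational (weak) formulation of the PDE tested against $u^\star$ itself, and then use the Rayleigh-quotient characterization of the smallest eigenvalue $\lambda_1$ to convert the resulting inner product bound into an $L^2$ norm bound. Since $u_0 = 0$, we have $\|u^\star - u_0\|_{L^2} = \|u^\star\|_{L^2}$, so it suffices to show $\|u^\star\|_{L^2} \leq \|f\|_{L^2}/\lambda_1$.

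First, I would invoke Definition~\ref{def:weak_solution} with the test function $v = u^\star \in H^1_0(\Omega)$. By Lemma~\ref{proposition:weak_solution} (referenced just after Definition~\ref{def:weak_solution}), the bilinear form on the left of~\eqref{eq:variational_formulation} coincides with $\langle L u^\star, u^\star\rangle_{L^2}$, so the variational identity reads
\[
\langle L u^\star, u^\star \rangle_{L^2} \;=\; \langle f, u^\star \rangle_{L^2}.
\]
Next, I would lower-bound the left-hand side using the Rayleigh-quotient formula for $\lambda_1$ from Definition~\ref{definition:eigenvalue_eigenfunctions}: since $u^\star \in H^1_0(\Omega)$, we have $\langle L u^\star, u^\star\rangle_{L^2} \geq \lambda_1 \|u^\star\|_{L^2}^2$. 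Meanwhile, Cauchy--Schwarz on the right-hand side gives $\langle f, u^\star\rangle_{L^2} \leq \|f\|_{L^2} \|u^\star\|_{L^2}$. Chaining these two bounds yields
\[
\lambda_1 \|u^\star\|_{L^2}^2 \;\leq\; \|f\|_{L^2}\, \|u^\star\|_{L^2},
\]
and dividing by $\|u^\star\|_{L^2}$ (the case $u^\star \equiv 0$ being trivial) gives the desired inequality $\|u^\star\|_{L^2} \leq \|f\|_{L^2}/\lambda_1$.

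There is no real obstacle here; the only point requiring slight care is justifying the Rayleigh quotient bound $\langle L u, u\rangle_{L^2} \geq \lambda_1 \|u\|_{L^2}^2$ for arbitrary $u \in H^1_0(\Omega)$ (rather than only for $u$ in the span of finitely many eigenfunctions). This follows from the spectral decomposition mentioned after Definition~\ref{definition:eigenvalue_eigenfunctions}: since $\{\varphi_i\}$ forms an orthonormal basis of $H^1_0(\Omega)$ with associated eigenvalues $0 < \lambda_1 \leq \lambda_2 \leq \cdots$, expanding $u = \sum_i c_i \varphi_i$ gives $\langle Lu, u\rangle_{L^2} = \sum_i \lambda_i c_i^2 \geq \lambda_1 \sum_i c_i^2 = \lambda_1 \|u\|_{L^2}^2$. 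This completes the proof.
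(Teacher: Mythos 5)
Your proof is correct and follows essentially the same route as the paper: express $R=\|u^\star\|_{L^2}$, combine the weak formulation tested against $u^\star$ with the lower bound $\langle Lu^\star,u^\star\rangle_{L^2}\geq\lambda_1\|u^\star\|_{L^2}^2$ and Cauchy--Schwarz, then divide. If anything you are slightly more careful than the paper, which cites the bound \eqref{eq:operator_inequality} from Lemma~\ref{lemma:bounded_operator_norm} (stated there only for $v\in\Phi_k$); you explicitly justify that the lower bound holds for arbitrary $u\in H^1_0(\Omega)$ via the eigenfunction expansion, which closes that small gap.
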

\begin{proof}
    Given that $u_0$ is identically $0$,
    the value of $R$ in 
    Theorem~\ref{thm:main_result} equals $\|u^\star - u_0\|_{\l2} = \|u^\star\|_{\l2}$
    Using the inequality in (\ref{eq:operator_inequality}), we have,
    \begin{align*}
        \|u^\star\|_{\l2}^2 &\leq \frac{\langle Lu^\star, u^\star \rangle}{\lambda_1}\\
        &\leq \frac{1}{\lambda_1}\langle f, u^\star\rangle_{\l2} \\
        &\leq \frac{1}{\lambda_1}\|f\|_{\l2}\|u^\star\|_{\l2}\\
    \implies  \|u^\star\|_{\l2} &\leq \frac{1}{\lambda_1}\|f\|_{\l2}
    \qedhere
    \end{align*}
\end{proof}

We make few remarks about the theorem statement: 

\begin{remark}
    \label{remark:unique_solution_remark}
    While we state our convergence results in $\l2$ norm,
    our proof works for the $\h$ norm as well.
    This is because in the space defined by the top-k eigenfunctions of the operator $L$, 
    $\l2$ and $\h$ norm are equivalent (shown in Proposition~\ref{proposition:equivalence_L2_H01}).
    Further, note that even though we have assumed that $u^\star \in \h$ is the unique solution
    of (\ref{eq:variational_formulation}) from the boundary regularity condition, 
    we have that $u^\star \in H^2(\Omega)$
    (see \citet{evans1998partial}, Chapter 6, Section 6.3).
    This ensures that the solution $u^\star$ is twice differentiable as well.
\end{remark}

\begin{remark}
    \label{remark:laplacian_remark}
    To get a sense of the scale of $\lambda_1$ and $\lambda_k$, when $L = -\Delta$ (the Laplacian operator), the eigenvalue $\lambda_1 = \inf_{u \in \h} \frac{\|\nabla u\|_{\l2}}{\|u\|_{\l2}} = \frac{1}{C_p}$, 
    where $C_p$ is the Poincar\'e constant (see Theorem~\ref{thm:poincare_inequality} in Appendix). For geometrically well-behaved sets $\Omega$ (e.g. convex sets with a strongly convex boundary, like a sphere), $C_p$ is even dimension-independent. 
    Further from the Weyl's law  operator (\cite{evans1998partial}, Section 6.5) we have 
    $$\lim_{k \to \infty} \frac{\lambda_k^{d/2}}{k} = \frac{(2\pi)^d}{\mathrm{vol}(\Omega)\alpha(d)}$$
    where $\alpha(d)$ is the volume of a unit ball in $d$ dimensions. So, if $\mathrm{vol}(\Omega) \geq 1/\alpha(d)$, $\lambda_k$ grows as $O(k^{2/d})$, which is a constant so long as $\log k \ll d$.  
\end{remark}

\section{Proof of Main Result}
\label{section:proof_of_main_result}

First, we provide some intuition behind the proof, 
via an analogy between 
a uniformly elliptic operator 
and a positive definite matrix in linear algebra. 
We can think of finding the solution 
to the equation $Lu = f$ 
for an elliptic $L$ 
as analogous to finding the solution 
to the linear system of equations $Qx = k$, 
where $Q$ is a $d \times d$ 
positive definite matrix,
and $x$ and $k$ are $d$-dimensional vectors.
One way to solve such a linear system 
is by minimizing the strongly convex function 
$\|Qx - b\|^2$ using gradient descent. 
Since the objective is strongly convex, 
after $O(\log(1/\epsilon))$ gradient steps, 
we reach an $\epsilon$-optimal point in an $l_2$ sense. 

Our proof uses a similar strategy. 
First, we show that for the operator $L$, 
we can define a sequence of functions that converge 
to an $\epsilon$-optimal function approximation 
(in this case in the $\l2$ norm) 
after $O(\log(1/\epsilon)$ steps---similar 
to the rate of convergence 
for strongly convex functions.  
Next, we inductively show
that each iterate in the sequence 
can be approximated by a small neural network. 
More precisely, 
we show that given a bound 
on the size of the $t$-th iterate $u_t$,
we can, in turn, upper bound 
the size of the $(t+1)$-th iterate $u_{t+1}$
because the update transforming $u_t$ to $u_{t+1}$ 
can be simulated by a small neural network (Lemma~\ref{lemma:recursion_lemma}).
These iterations look roughly
like $u_{t+1} \leftarrow u_t - \eta (L u_t - f)$, 
and we use a ``backpropagation'' lemma 
(Lemma \ref{thm:backpropagation}) 
which bounds the size of 
the derivative of a neural network.

\subsection{Defining a Convergent Sequence}
The rough idea is to perform 
gradient descent in $\l2$
\citep{neuberger2009sobolev, farago2001gradient, farago2002numerical}
to define a convergent sequence
whose iterates converge 
to $u^\star$ in $\l2$ norm
(and following Remark~\ref{remark:unique_solution_remark}, in $\h$ as well).
However, there are two obstacles 
to defining the iterates 
as simply $u_{t+1} \leftarrow u_t - \eta (L u_t - f)$:
(1) $L$ is unbounded---so the standard way 
of choosing a step size for gradient descent 
(roughly the ratio of the minimum 
and maximum eigenvalues of $L$)
would imply choosing a step size $\eta = 0$, 
and 
(2) $L$ does not necessarily 
preserve the boundary conditions,
so if we start with $u_t \in \h$, 
it may be that $L u_t - f$
does not even lie in $\h$.

We resolve both issues 
by restricting the updates 
to the span of the first 
$k$ eigenfunctions of $L$.
More concretely, as shown 
in Lemma~\ref{lemma:bounded_operator_norm}, 
if a function $u$ in $\Keigenfunctions$, 
then the function $Lu$ 
will also lie in $\Keigenfunctions$.
We also show that within 
the span of the first $k$ eigenfunctions, 
$L$ is bounded (with maximum eigenvalue $\lambda_k$),
and can therefore be viewed as an operator 
from $\Keigenfunctions$ to $\Keigenfunctions$.
Further, we use $\fkeigen$
instead of $f$ in our updates, 
which now have the form 
$u_{t+1} \leftarrow u_t - \eta(L u_t - \fkeigen)$.
Since $\fkeigen$ belongs to $\Keigenfunctions$, 
for a $u_t$ in $\Keigenfunctions$
the next iterate $u_{t+1}$ 
will now remain in $\Keigenfunctions$.
Continuing the matrix analogy, 
we can choose the usual step size 
of $\eta = \learningratenotilde$. 
Precisely, we show: 
\begin{lemma}
    \label{lemma:bounded_operator_norm}
    Let $L$ be an elliptic operator. Then, 
    for all $v \in \Keigenfunctionsmath$  
    it holds:
    \begin{enumerate}
        \item $Lv \in \Keigenfunctions$.
        \item 
             $   \label{eq:operator_inequality}
                    \lambda_1 \|v\|_{\l2} \leq \langle Lv, v \rangle_{\l2}  \leq \lambda_k \|v\|_{\l2}
            $
        \item  
                $   \label{eq:proof_for_upper_bounding_operator_norm_k_eigenfunctions}
                   \left\|\left(I - \frac{2}{\lambda_k + \lambda_k}L\right)u\right\|_{\l2} \leq  \frac{\lambda_k - \lambda_1}{\lambda_k +\lambda_1}\|u\|_{\l2}
               $
    \end{enumerate}
\end{lemma}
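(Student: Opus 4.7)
The plan is to exploit the orthonormal eigenbasis structure of the operator $L$ on $\h$ and reduce each of the three claims to a finite-dimensional computation in the coordinates $(c_1,\dots,c_k)$ arising from the decomposition $v = \sum_{i=1}^k c_i \varphi_i$. By Definition~\ref{definition:eigenvalue_eigenfunctions}, $\{\varphi_i\}_{i=1}^\infty$ is an orthonormal basis of $\h$ (in the $\l2$ inner product), with $L\varphi_i = \lambda_i \varphi_i$ and $0 < \lambda_1 \leq \dots \leq \lambda_k$, so any $v \in \Keigenfunctions$ admits this unique finite expansion, and the three statements become statements about the diagonal action of $L$ on $\Keigenfunctions$.

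For part (1), I would simply apply $L$ termwise: since $L$ is linear and $L\varphi_i = \lambda_i \varphi_i$, we have $Lv = \sum_{i=1}^k c_i \lambda_i \varphi_i$, which remains in $\Keigenfunctions = \spn\{\varphi_1,\ldots,\varphi_k\}$. For part (2), expand $\langle Lv, v \rangle_{\l2}$ using orthonormality of the $\varphi_i$ to get $\langle Lv, v\rangle_{\l2} = \sum_{i=1}^k c_i^2 \lambda_i$, and $\|v\|_{\l2}^2 = \sum_{i=1}^k c_i^2$. The bounds $\lambda_1 \leq \lambda_i \leq \lambda_k$ for $i \leq k$ then yield the sandwich $\lambda_1 \|v\|_{\l2}^2 \leq \langle Lv,v\rangle_{\l2} \leq \lambda_k \|v\|_{\l2}^2$ (interpreting the stated inequality with squared norms, matching the standard variational characterization in \eqref{eq:eigenvalue_definition}).

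For part (3), again write $u = \sum_{i=1}^k c_i \varphi_i$ and note that $(I - \eta L)u = \sum_{i=1}^k c_i (1 - \eta \lambda_i)\varphi_i$ with $\eta = \learningratenotilde$. By orthonormality,
\begin{equation*}
    \left\|(I - \eta L)u\right\|_{\l2}^2 = \sum_{i=1}^k c_i^2 (1 - \eta \lambda_i)^2 \leq \left(\max_{1 \leq i \leq k} |1 - \eta \lambda_i|\right)^2 \|u\|_{\l2}^2.
\end{equation*}
The function $\lambda \mapsto 1 - \eta \lambda$ is affine and decreasing, so on $[\lambda_1, \lambda_k]$ its extrema occur at the endpoints, giving $1 - \eta\lambda_1 = \frac{\lambda_k - \lambda_1}{\lambda_k + \lambda_1}$ and $1 - \eta\lambda_k = -\frac{\lambda_k - \lambda_1}{\lambda_k + \lambda_1}$, both of magnitude $\frac{\lambda_k - \lambda_1}{\lambda_k + \lambda_1}$. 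Taking square roots delivers the claim.

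None of the three steps should present a genuine obstacle: the only subtle point is justifying that the eigenfunction expansion behaves correctly, i.e., that $L$ acts diagonally and that $\{\varphi_i\}$ is orthonormal in $\l2$, but both facts are recalled immediately after Definition~\ref{definition:eigenvalue_eigenfunctions} via compactness of $L^{-1}$ and the spectral theorem for self-adjoint compact operators (\citet{evans1998partial}, Section 6.5). The rest is a one-line algebraic optimization of $|1 - \eta\lambda|$ over $[\lambda_1,\lambda_k]$, exactly analogous to the classical optimal step size choice for gradient descent on a positive-definite quadratic, which is the reason this choice of $\eta$ appears in the iterates.
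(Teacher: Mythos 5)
Your proof is correct and takes essentially the same route as the paper: all three parts proceed by expanding $v$ in the orthonormal eigenbasis $\{\varphi_i\}_{i=1}^k$ and exploiting the diagonal action of $L$ (your part (2) just makes the coordinate computation $\langle Lv,v\rangle_{\l2}=\sum_i c_i^2\lambda_i$ explicit where the paper invokes the variational characterization directly). You also correctly note that the norms in part (2) should be squared and that the step size in part (3) should read $\frac{2}{\lambda_k+\lambda_1}$, both of which are typos in the lemma statement.
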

\begin{proof}
    Writing $u \in \Keigenfunctions$ as $u = \sum_i d_i \varphi_i$
    where $d_i = \langle u, \varphi_i\rangle_{\l2}$, we have
    $ L u = \sum_{i=1}^k \lambda_i d_i \varphi_i$. Therefore $L u \in \Keigenfunctionsapprox$ and $L u$ lies in  $\h$, proving (1.).

    Since $v \in \Keigenfunctionsmath$, we use the definition of eigenvalues in (\ref{eq:eigenvalue_definition}) to get,
    \begin{align*}
        \frac{\langle Lv, v\rangle_{\l2}}{\|v\|_{\l2}} \leq \sup_{v} \frac{\langle Lv, v\rangle_{\l2}}{\|v\|_{\l2}} =  \lambda_k \\
        \implies \langle Lv, v \rangle_{\l2} \leq \lambda_k \|v\|^2_{\l2}
    \end{align*}
    and similarly 
    \begin{align*}
        \frac{\langle Lv, v\rangle_{\l2}}{\|v\|_{\l2}} \geq \inf_{v} \frac{\langle Lv, v\rangle_{\l2}}{\|v\|_{\l2}} = \lambda_1 \\
        \implies \langle Lv, v \rangle_{\l2} \geq \lambda_1 \|v\|^2_{\l2}
    \end{align*}

    In order to prove (2.) let us first
    denote $\bar{L} := \left(I - \frac{2}{\lambda_k + \lambda_1}L\right)$. 
    Note if $\varphi$ is an eigenfunction of $L$ with corresponding eigenvalue $\lambda$, 
    it is also an eigenfunction of $\bar{L}$ with corresponding eigenvalue $\frac{\lambda_k+ \lambda_1 - 2\lambda}{\lambda_k+\lambda_1}$. 

    Hence, writing $u \in \Keigenfunctionsmath$ as 
    $u = \sum_{i=1}^k d_i \varphi_i$,
    where $d_i = \langle u, \varphi_i\rangle$, we have  
    \begin{equation}
        \|\bar{L}u\|_{\l2}^2 = \left\| \sum_{i=1}^k \frac{\lambda_k+ \lambda_1 - 2\lambda_i}{\lambda_k+\lambda_1} d_i \varphi_i\right\|_{\l2}^2 
        \leq \max_{i \in k} \left(\frac{\lambda_k+ \lambda_1 - 2\lambda_i}{\lambda_k+\lambda_1}\right)^2 \left\|\sum_{i=1}^k d_i \varphi_i \right\|_{\l2}^2 
    \label{eq:ineqL}
    \end{equation}
    By the orthogonality of $\{\varphi_i\}_{i=1}^k$, we have 
    $$\left\|\sum_{i=1}^k d_i \varphi_i \right\|_{\l2}^2 = \sum_{i=1}^k d_i^2 = \|u\|_{\l2}^2$$
    Since $\lambda_1 \leq \lambda_2 \dots \leq \lambda_k$, we have $ \lambda_k + \lambda_1 - 2 \lambda_i \geq \lambda_1 - \lambda_k $
    and $ \lambda_k + \lambda_1 - 2 \lambda_i \leq \lambda_k - \lambda_1 $, so $|\lambda_k + \lambda_1 - 2 \lambda_i| \leq \lambda_k - \lambda_1$. This implies $\max_{i \in k}\left(\frac{\lambda_k+ \lambda_1 - 2\lambda_i}{\lambda_k+\lambda_1}\right)^2 \leq \left(\frac{\lambda_1-\lambda_k}{\lambda_1+\lambda_k}\right)^2$. Plugging this back in \eqref{eq:ineqL}, we get the claim we wanted.
\end{proof}

In fact, we will use 
a slight variant of the updates
and instead set 
$u_{t+1} \leftarrow u_t - \eta(\tl u - \fkeigenapprox)$
as the iterates of the convergent sequence,  where $\fkeigenapprox$
    is the projections of $f$ onto $\Keigenfunctionsapprox$.
This sequence satisfies two important properties: (1) The convergence point 
of the sequence and $u^\star$, 
the solution to the original PDE, 
are not too far from each other; (2) The sequence of functions converges exponentially fast. In Section~\ref{section:network_approximation},
we will see that updates defined thusly 
will be more convenient 
to simulate via a neural network. 

The first property is formalized as follows: 

\begin{lemma}
    \label{lemma:error_between_ustar_uapproxsolution}
    Assume that $\uapproxsolution$ 
    is the solution to the PDE
    $\tl u = \fkeigenapprox$,
    where $\fkeigenapprox: \h \to \R$
    is the projections of $f$ onto $\Keigenfunctionsapprox$. 
    Given Assumptions \numassumptions, 
    we have
    $\|u^\star - \uapproxsolution\|_{\l2} \leq \epsilon$, 
    such that $\epsilon = \errorspan$, 
    where $\gamma = \gammavalue$ 
    and $\delta = \deltavalue$.
\end{lemma}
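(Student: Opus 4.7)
The plan is to insert two intermediate solutions between $u^\star$ and $\uapproxsolution$ and to control each piece by a distinct mechanism: (a) the approximation of $f$ by its projection onto the first $k$ eigenfunctions, (b) the mismatch between the eigenspaces of $L$ and $\tl$ (a Davis--Kahan type perturbation of spectral projectors), and (c) the resolvent perturbation $L^{-1}\to\tl^{-1}$ on a fixed right-hand side. Concretely, let $\hu := L^{-1} \fkeigen$, i.e., the solution to $Lu=\fkeigen$ with $\fkeigen$ the projection of $f$ onto $\Keigenfunctions$, and let $\tilde{v} := L^{-1} \fkeigenapprox$. Then apply the triangle inequality
\[
\|u^\star - \uapproxsolution\|_{\l2} \leq \|u^\star - \hu\|_{\l2} + \|\hu - \tilde{v}\|_{\l2} + \|\tilde{v} - \uapproxsolution\|_{\l2},
\]
and aim to identify these three summands with the three terms in the definition of $\epsilon$.

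\textbf{Bounding the first term.} Since $L(u^\star - \hu) = f - \fkeigen$, taking the inner product with $u^\star - \hu$ and invoking Lemma~\ref{lemma:bounded_operator_norm} (which gives $\langle Lw, w\rangle_{\l2} \geq \lambda_1 \|w\|_{\l2}^2$ for $w$ in an appropriate space), together with Cauchy--Schwarz on $\langle f-\fkeigen, w\rangle_{\l2}$, yields $\|u^\star - \hu\|_{\l2} \leq \|f-\fkeigen\|_{\l2}/\lambda_1 \leq \epsilon_{\spn}/\lambda_1$. This accounts for the first summand in $\epsilon$.

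\textbf{Bounding the second term.} Here $\hu - \tilde{v} = L^{-1}(P_k - \tilde{P}_k) f$, where $P_k,\tilde{P}_k$ are the orthogonal projectors onto $\Keigenfunctions$ and $\Keigenfunctionsapprox$ respectively. Using $\|L^{-1}\|_{\op}\leq 1/\lambda_1$ on the $L^2$ range, the task reduces to controlling $\|P_k - \tilde{P}_k\|_{\op}\,\|f\|_{\l2}$. This is exactly where a Davis--Kahan argument enters: one views $L^{-1}$ and $\tl^{-1}$ as compact self-adjoint operators whose top $k$ eigenvalues $1/\lambda_i$ (resp.\ $1/\tlambda_i$) are separated from the rest by the gap $\gamma = \gammavalue$. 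Showing that the resolvent perturbation satisfies $\|L^{-1} - \tl^{-1}\|_{\op}\leq \delta$ (with the $\delta$ in the statement arising through the energy-norm Cauchy--Schwarz estimate on the bilinear-form difference $\int (A-\ta)\nabla u\cdot\nabla v + \int (c-\tc)uv$, weighted by the ellipticity constants $m$ and $\zeta$), the standard $\sin\Theta$ theorem then gives $\|P_k - \tilde{P}_k\|_{\op}\leq \delta/(\gamma - \delta)$, matching the second term of $\epsilon$.

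\textbf{Bounding the third term.} Finally, $\tilde{v} - \uapproxsolution = (L^{-1} - \tl^{-1})\fkeigenapprox = L^{-1}(\tl - L)\uapproxsolution$ by the resolvent identity. Letting $w = \tilde{v}-\uapproxsolution$, we have $Lw = (\tl-L)\uapproxsolution$; pairing with $w$ and using the weak formulation, the right-hand side becomes $\int (\ta - A)\nabla\uapproxsolution\cdot\nabla w + \int(\tc - c)\uapproxsolution w$. Bounding each integrand in $L^\infty$ by $\epsilon_A$, $\epsilon_c$ and absorbing the gradient factor via the ellipticity bound $m\|\nabla w\|^2\leq \langle Lw,w\rangle$ (and $\zeta\|w\|^2\leq\langle Lw,w\rangle$) converts the estimate into the clean form $\|w\|_{\l2}\leq \delta\|\uapproxsolution\|_{\l2}$, recovering the third summand.

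\textbf{Main obstacle.} The bookkeeping in the second step is where the proof is most delicate: we must justify the Davis--Kahan step for the compact resolvents (ensuring $\delta<\gamma$ so the gap does not collapse), and we must carefully use the energy-inner-product Cauchy--Schwarz trick to convert an $L^\infty$ perturbation of the \emph{coefficients} $A,c$ into an $L^2$ operator-norm bound on $L^{-1}-\tl^{-1}$ with the precise constants $\epsilon_A/m$ and $\epsilon_c/\zeta$ that define $\delta$. Once this perturbation bound is in hand, the rest of the argument is a straightforward triangle-inequality assembly.
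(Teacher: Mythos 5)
Your decomposition is the same as the paper's. You insert $\hu=L^{-1}\fkeigen$ (the paper's $\uspansolution$, the solution of $Lu=\fkeigen$) and $\tilde v=L^{-1}\fkeigenapprox$ (the paper's $\usemieigen$, the solution of $Lu=\fkeigenapprox$) and write the same three-term triangle inequality; the paper merely presents it as a two-step split. Your handling of the first term (coercivity plus Cauchy--Schwarz, giving $\epsilon_{\spn}/\lambda_1$) and of the second term (reduce to $\|P_k-\tilde P_k\|_{\op}\,\|f\|_{\l2}$ via $\|L^{-1}\|_{\op}\le 1/\lambda_1$ and then invoke the Davis--Kahan $\sin\Theta$ theorem for the resolvents $L^{-1},\tl^{-1}$ with gap $\gamma$ and perturbation $\delta$) are both essentially identical to what the paper does, with the second term delegated in the paper to Lemma~\ref{lemma:fkeigen_fkeigenapprox_davis_kahan} and Lemma~\ref{lemma:perturbation_bound}.

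The gap is in your third term. You correctly rewrite $\tilde v-\uapproxsolution = L^{-1}(\tl-L)\uapproxsolution$ and set $w:=\tilde v-\uapproxsolution$ so that $Lw=(\tl-L)\uapproxsolution$, but the claim that ``absorbing the gradient factor via the ellipticity bound... converts the estimate into the clean form $\|w\|_{\l2}\le\delta\|\uapproxsolution\|_{\l2}$'' does not follow from the pairing you describe. After pairing you get
\[
m\|\nabla w\|_{\l2}^2 + \zeta\|w\|_{\l2}^2 \;\le\; \langle Lw,w\rangle_{\l2} \;\le\; \epsilon_A\|\nabla\uapproxsolution\|_{\l2}\|\nabla w\|_{\l2} + \epsilon_c\|\uapproxsolution\|_{\l2}\|w\|_{\l2},
\]
and the right-hand side couples $w$ and $\uapproxsolution$ through \emph{different} functions, so the coercivity constants cannot simply be ``absorbed.'' Young's inequality or Cauchy--Schwarz in the energy inner product leaves you with a bound that still involves $\|\nabla\uapproxsolution\|_{\l2}$ (equivalently, the energy norm of $\uapproxsolution$), not just $\|\uapproxsolution\|_{\l2}$; you do not get $\|w\|_{\l2}\le\delta\|\uapproxsolution\|_{\l2}$ this way. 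The paper avoids this by proving a genuine operator-level relative perturbation bound (Lemma~\ref{lemma:L_tl_inner_product_close}, part (2)): first $\langle(\tl-L)u,u\rangle\le\delta\langle Lu,u\rangle$ for the \emph{same} $u$ (so the $\min\{m/\epsilon_A,\zeta/\epsilon_c\}$ trick works cleanly), and then, by passing to $v=Lu$ and using self-adjointness, $\langle(L^{-1}\tl-I)u,u\rangle\le\delta\|u\|_{\l2}^2$. This is the tool you need to invoke for the third term, rather than a direct bilinear pairing of $w$ against $\uapproxsolution$.

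Your ``main obstacle'' paragraph also slightly misplaces where the delicacy lies: the Davis--Kahan step (your step two) is genuinely the standard $\sin\Theta$ application once $\|L^{-1}-\tl^{-1}\|_{\op}\le\delta$ is known, and the paper indeed packages it that way. The real work, both to get that resolvent-norm bound and to close your third term, is Lemma~\ref{lemma:L_tl_inner_product_close}'s relative perturbation estimate, which requires comparing the perturbed and unperturbed bilinear forms on the \emph{same} test function. If you state and use that lemma (or reproduce its proof inline), your argument becomes the paper's.
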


The proof for Lemma~\ref{lemma:error_between_ustar_uapproxsolution}
is provided in the Appendix
(Section \ref{section:lemma_error_between_ustar_uapproxsolution}).
Each of the three terms in the final error 
captures different sources of perturbation:
the first term comes from 
approximating $f$ by $f_{\spn}$;
the second term comes from applying Davis-Kahan~\citep{davis1970rotation} 
to bound the ``misalignment'' 
between the eigenspaces $\Keigenfunctions$ 
and $\Keigenfunctionsapprox$ 
(hence, the appearance of the eigengap 
between the $k$ and $(k+1)$-st eigenvalue of $L^{-1}$); 
the third term is a type of ``relative'' error
bounding the difference 
between the solutions to the PDEs 
$L u = \tilde{f}_{\spn}$ and 
$\tilde{L} u = \tilde{f}_{\spn}$.  

The ``misalignment'' term can be characterized through the following lemma: 
\begin{lemma}[Bounding distance between $\fkeigen$ and $\fkeigenapprox$]
    \label{lemma:fkeigen_fkeigenapprox_davis_kahan}
    Given Assumptions \numassumptions  
    and denoting the
    projection of $f$ onto $\Keigenfunctionsapprox$ by $\fkeigenapprox$ we have:
    \begin{equation}
        \label{eq:error_fkeigen_fkeigenapprox}
        \|\fkeigen - \fkeigenapprox\|_{\l2} \leq \errorspanstar
    \end{equation}
    where $\delta = \deltavalue$.
\end{lemma}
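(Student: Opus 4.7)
The strategy is to recognize that $\fkeigen$ and $\fkeigenapprox$ are the orthogonal projections of the same function $f$ onto the top-$k$ eigenspaces of $L^{-1}$ and $\tl^{-1}$, respectively, and then apply the Davis--Kahan $\sin\Theta$ theorem to bound the gap between these projections. Concretely, since the eigenfunctions of $L$ (resp.\ $\tl$) coincide with those of $L^{-1}$ (resp.\ $\tl^{-1}$) and $\lambda \mapsto 1/\lambda$ reverses order, $\Keigenfunctions$ is exactly the top-$k$ eigenspace of $L^{-1}$ with spectral gap $\gamma = \frac{1}{\lambda_k} - \frac{1}{\lambda_{k+1}}$, and similarly for $\tl^{-1}$. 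Thus if $P_k, \tilde P_k$ denote the corresponding orthogonal projections in $\l2$, then $\fkeigen - \fkeigenapprox = (P_k - \tilde P_k) f$, and the lemma reduces to controlling $\|P_k - \tilde P_k\|_{\op}$.

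The first technical step is to show that $L^{-1}$ and $\tl^{-1}$ are close as bounded, self-adjoint operators on $\l2$, specifically $\|L^{-1} - \tl^{-1}\|_{\op} \leq \delta$, where $\delta = \deltavalue$. The natural way to do this is via the weak formulation: for arbitrary $g \in \l2$, set $u = L^{-1} g$ and $\tilde u = \tl^{-1} g$ so that $L(u - \tilde u) = (\tl - L)\tilde u$, and then test against $u - \tilde u$. Expanding $\tl - L = -\divergence((\ta - A)\nabla\,\cdot\,) + (\tc - c)\,\cdot$ and applying Cauchy--Schwarz together with the $L^\infty$ bounds on $A - \ta$ and $c - \tc$, followed by ellipticity ($m\|\nabla v\|^2 \leq \langle A\nabla v, \nabla v\rangle$) and the lower bound $c \geq \zeta$, yields an inequality of the form $\|u-\tilde u\|_{\l2} \leq \delta \|\tilde u\|_{\l2}$, i.e.\ the operator bound we need (the $\max\{\epsilon_A/m, \epsilon_c/\zeta\}$ structure of $\delta$ is exactly what the two terms produce after balancing).

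The second step is to apply Davis--Kahan to the self-adjoint compact operators $L^{-1}$ and $\tl^{-1}$, which (assuming $\delta < \gamma$) yields
\[
\|P_k - \tilde P_k\|_{\op} \;\leq\; \frac{\|L^{-1} - \tl^{-1}\|_{\op}}{\gamma - \|L^{-1} - \tl^{-1}\|_{\op}} \;\leq\; \frac{\delta}{\gamma - \delta}.
\]
Combining with $\fkeigen - \fkeigenapprox = (P_k - \tilde P_k) f$ and $\|P_k - \tilde P_k\|_{\op} \cdot \|f\|_{\l2}$ gives the stated bound $\errorspanstar$.

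The main obstacle is the operator-norm bound $\|L^{-1} - \tl^{-1}\|_{\op} \leq \delta$: one has to carefully keep track of the $\h$ versus $\l2$ norms when testing the weak formulation, and use ellipticity on the $A$-term and the pointwise lower bound $c \geq \zeta$ on the $c$-term so that the right-hand side reduces to $\|\tilde u\|_{\l2}$ rather than $\|\tilde u\|_{\h}$. Everything else — identifying $\fkeigen, \fkeigenapprox$ as projections, transferring to $L^{-1}$ so that the spectral gap becomes $\gamma$, and invoking Davis--Kahan — is standard once that perturbation bound is in hand.
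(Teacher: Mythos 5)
Your overall plan matches the paper's: identify $\fkeigen, \fkeigenapprox$ as the projections $P_k f, \tilde P_k f$ onto the top-$k$ eigenspaces of the compact resolvents $L^{-1}, \tl^{-1}$, establish an operator-norm perturbation $\|L^{-1} - \tl^{-1}\|_{\op} \le \delta$, and close via Davis--Kahan with gap $\gamma = 1/\lambda_k - 1/\lambda_{k+1}$. The paper does exactly this (see Lemma~\ref{lemma:perturbation_bound}). The place where you genuinely diverge is the route to the operator-norm bound, and there your argument has a gap.

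You set $u = L^{-1}g$, $\tilde u = \tl^{-1}g$, so $L(u-\tilde u) = (\tl - L)\tilde u$, and test against $u - \tilde u$. Expanding, the left side is bounded below by $m\|\nabla(u-\tilde u)\|_{\l2}^2 + \zeta\|u-\tilde u\|_{\l2}^2$, while the right side is bounded by $\epsilon_A\|\nabla\tilde u\|_{\l2}\|\nabla(u-\tilde u)\|_{\l2} + \epsilon_c\|\tilde u\|_{\l2}\|u-\tilde u\|_{\l2}$. The two sides do not share the same "coordinates": the right-hand side carries $\|\nabla\tilde u\|_{\l2}$ (and the \emph{product} with $\|\nabla(u-\tilde u)\|_{\l2}$), while the left carries only quantities in $u - \tilde u$. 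After absorbing the cross terms (e.g.\ by Young), you are left with something like $\|u-\tilde u\|_{\l2}^2 \lesssim \frac{\epsilon_A^2}{m\zeta}\|\nabla\tilde u\|_{\l2}^2 + \frac{\epsilon_c^2}{\zeta^2}\|\tilde u\|_{\l2}^2$, and the $\|\nabla\tilde u\|_{\l2}$ term cannot be reduced to $\|\tilde u\|_{\l2}$ by ellipticity or by $c \ge \zeta$ --- those inequalities go the wrong way. So "the right-hand side reduces to $\|\tilde u\|_{\l2}$" is the step you flag as the obstacle, and it does not actually go through as stated. (There is also a smaller slip: you call $\|u-\tilde u\|_{\l2} \le \delta\|\tilde u\|_{\l2}$ "the operator bound we need," but the operator norm bound should read $\|u-\tilde u\|_{\l2} \le \delta\|g\|_{\l2}$; converting $\|\tilde u\|_{\l2}$ to $\|g\|_{\l2}$ costs a factor $\|\tl^{-1}\|$.)

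The paper avoids this by \emph{not} comparing the two solutions directly. It instead compares the two operators on a common test function $u$: Lemma~\ref{lemma:L_tl_inner_product_close} shows
$\langle(\tl - L)u, u\rangle_{\l2} \le \epsilon_A\|\nabla u\|_{\l2}^2 + \epsilon_c\|u\|_{\l2}^2$ while $\langle Lu,u\rangle_{\l2} \ge m\|\nabla u\|_{\l2}^2 + \zeta\|u\|_{\l2}^2$, and now both sides are positive linear combinations of the \emph{same two} quantities $\|\nabla u\|_{\l2}^2$ and $\|u\|_{\l2}^2$, so the mediant inequality $\frac{a+b}{c+d} \ge \min\{\frac{a}{c},\frac{b}{d}\}$ gives the relative bound $\langle(\tl - L)u,u\rangle_{\l2} \le \delta\langle Lu, u\rangle_{\l2}$ with $\delta = \max\{\epsilon_A/m, \epsilon_c/\zeta\}$, cleanly and without any Young/Poincar\'e losses. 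From that, Lemma~\ref{lemma:L_tl_inner_product_close}~(2) derives $\langle(L^{-1}\tl - I)u, u\rangle_{\l2} \le \delta\|u\|_{\l2}^2$, and Lemma~\ref{lemma:perturbation_bound} passes to the self-adjoint operator norm of $L^{-1} - \tl^{-1}$ and then to Davis--Kahan. If you want your energy-estimate argument to close, you would need a way to relate $\|\nabla\tilde u\|_{\l2}$ back to $\|u - \tilde u\|_{\l2}$ or $\|g\|_{\l2}$; the paper's quadratic-form comparison sidesteps exactly this.
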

\begin{proof}
    Let us write $\fkeigen = \sum_{i=1}^k  f_i \varphi_i$ where $f_i = \langle \errorf, \varphi_i\rangle_{\l2}$.
    Further, we can define a function 
    $\fkeigenapprox \in \Keigenfunctionsapprox$
    such that $\fkeigenapprox = \sum_{i=1}^k \tilde{f}_i \tvarphi_i$ such that
    $\tilde{f}_i = \langle \errorf, \tvarphi_i\rangle_{\l2}$. 
    
    If $P_k g := \sum_{i=1}^k \langle g, \varphi_i\rangle_{\l2} \varphi_i$
    and 
    $\tilde{P}_k g := \sum_{i=1}^k \langle g, \tvarphi_i\rangle_{\l2} \tvarphi_i $
    denote the projection of a function 
    $g$ onto $\Keigenfunctions$ and $\Keigenfunctionsapprox$,
    from Lemma~\ref{lemma:perturbation_bound}, we have:
    \begin{align*}
        \|\fkeigen - \fkeigenapprox\|_{\l2}
            &= \left\|\sum_{i=1}^k \langle \errorf, \varphi_i\rangle_{\l2} \varphi_i 
                - \langle \errorf, \tvarphi_i\rangle_{\l2} \tvarphi_i\right\|_{\l2}\\
            &= \left\| P_k \errorf - \tilde{P}_k \errorf\right\|_{\l2}\\
            &\leq \|P_k - \tilde{P}_k\| \|\errorf\|_{\l2} \\
            &\leq \constantdeltagamma\|\errorf\|_{\l2}
    \end{align*}
    where
    $\gamma = \frac{1}{\lambda_k} - \frac{1}{\lambda_{k+1}}$,
    and $\delta = \deltavalue$.
\end{proof}

The main technical tool for bounding the difference between the operators $L$ and $\tilde{L}$ can be formalized through the lemma below. Note, the ``relative'' nature of the perturbation is because $L$ and $\tilde{L}$ are not bounded operators.
\begin{lemma}[Relative operator perturbation bound]
    \label{lemma:L_tl_inner_product_close}
    Consider the operator $\tl$ defined in \eqref{eq:tilde_L}, 
    then for all $u \in \h$
    we have the following:
    \begin{enumerate}
        \item $\langle (\tl - L)u, u\rangle \leq \delta \langle Lu, u\rangle$
        \item $\langle (L^{-1} \tl - I)u, u \rangle_{\l2} \leq  \delta \|u\|_{\l2}^2$
    \end{enumerate}
    where $\delta = \deltavalue$.
\end{lemma}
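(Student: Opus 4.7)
The plan is to reduce both bounds to the weak (bilinear) formulations of $L$ and $\tl$ and then to control the perturbation $M := \tl - L$ in a relative sense against $L$, using the pointwise $L^\infty$ closeness of the coefficients together with ellipticity ($A \succeq mI$) and positivity ($c \geq \zeta$). The common mechanism is that after unfolding the operators, the two integrals defining $\langle M u, u\rangle$ are each dominated by $\delta$ times the corresponding integrals defining $\langle L u, u\rangle$.

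For the first bound, I would integrate by parts (boundary terms vanish because $u \in \h$) to write
\begin{equation*}
\langle (\tl - L) u, u\rangle_{\l2} \;=\; \int_\Omega (\ta - A)\nabla u \cdot \nabla u \, dx \;+\; \int_\Omega (\tc - c)\, u^2\, dx .
\end{equation*}
Bounding $\ta - A$ pointwise by $\epsilon_A$ and invoking $A \succeq mI$ gives $\int_\Omega (\ta - A)\nabla u\cdot\nabla u \, dx \leq (\epsilon_A/m)\int_\Omega A\nabla u\cdot\nabla u \, dx$. An analogous calculation using $\|\tc - c\|_{\linf} \leq \epsilon_c$ and $c \geq \zeta$ bounds the second integral by $(\epsilon_c/\zeta)\int_\Omega c\, u^2\, dx$. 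Pulling out $\delta = \max(\epsilon_A/m, \epsilon_c/\zeta)$ and summing finishes the first claim.

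For the second bound, I rewrite $L^{-1}\tl - I = L^{-1} M$ and use the self-adjointness of $L^{-1}$ on $\l2$ to transfer it across the inner product:
\begin{equation*}
\langle (L^{-1}\tl - I) u, u\rangle_{\l2} \;=\; \langle M u, L^{-1} u\rangle_{\l2} .
\end{equation*}
I would then extend the argument of the first claim to a bilinear Cauchy--Schwarz estimate of the form $|\langle M u, v\rangle| \leq \delta \sqrt{\langle L u, u\rangle \,\langle L v, v\rangle}$ valid for all $u, v \in \h$, and specialize it to $v = L^{-1}u$ (so that $\langle L v, v\rangle = \langle u, L^{-1}u\rangle$). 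The last step would be to convert this into a bound purely in $\|u\|_{\l2}^2$ by working in the eigenbasis $\{\varphi_i\}$ of $L$ (in which $L$ and $L^{-1}$ act diagonally) and using the ellipticity-driven relation $\lambda_1 \geq \zeta$.

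The main obstacle I anticipate is precisely this final conversion. The naive bilinear bound, specialized at $v = L^{-1}u$, yields $\sqrt{\langle L u, u\rangle\,\langle L^{-1}u, u\rangle}$, which by Cauchy--Schwarz in the eigenbasis is at least $\|u\|_{\l2}^2$, i.e., in the wrong direction, so the bilinear version of the first claim cannot be applied as a black box. Instead the argument should exploit the fact that $L^{-1}\tl$ is similar (via $L^{1/2}$) to the self-adjoint operator $L^{-1/2}\tl L^{-1/2} = I + L^{-1/2} M L^{-1/2}$, whose spectrum lies in $[1-\delta,\,1+\delta]$ because $\pm M \preceq \delta L$ is equivalent to $\|L^{-1/2} M L^{-1/2}\|_{\mathrm{op}} \leq \delta$. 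Translating this spectral bound on the symmetrized operator back through the similarity transform to the asymmetric quadratic form $\langle L^{-1} M u, u\rangle_{\l2}$ is the technical heart of the proof.
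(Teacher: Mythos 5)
Your argument for part (1) is correct and matches the paper's: both expand $\langle (\tl - L)u, u\rangle_{\l2}$ via the bilinear form, bound the two integrals term-by-term using $\|\ta - A\|_{L^\infty(\Omega)} \le \epsilon_A$, $\|\tc - c\|_{L^\infty(\Omega)}\le\epsilon_c$, ellipticity $A \succeq mI$, and positivity $c \ge \zeta$. (The paper packages the last step through the inequality $\frac{a+b}{c+d}\geq\min\{\frac{a}{c},\frac{b}{d}\}$; you pull $\delta$ out of each summand directly; same content.)

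For part (2) you have identified a genuine gap, and you should know that the paper's own proof does not close it. The paper derives $\langle (\tl L^{-1})v, u\rangle_{\l2} \leq (1+\delta)\langle v, u\rangle_{\l2}$ only under the coupling $v=Lu$, and then sets both slots to the same fresh free $u$ --- an unjustified substitution, exactly the move you are (rightly) suspicious of. Your proposed repair via the similarity $L^{-1}\tl = L^{-1/2}\bigl(L^{-1/2}\tl L^{-1/2}\bigr)L^{1/2}$ correctly places the \emph{spectrum} of $L^{-1}\tl$ in $[1-\delta,1+\delta]$ (assuming the two-sided bound $\pm(\tl - L)\preceq\delta L$, which follows by symmetry from part (1)). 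But because $L^{-1}\tl$ is not self-adjoint, its spectrum does not control the quadratic form $\langle (L^{-1}\tl - I)u, u\rangle_{\l2}$: the numerical range of a non-normal operator can exceed its spectrum by a factor tied to the condition number of the similarity, here $\sqrt{\lambda_k/\lambda_1}$, which is unbounded. Concretely, take $L = \mathrm{diag}(1,\epsilon^2)$ and the rank-one PSD $M = \delta\cdot\frac{1}{2}(1,\epsilon)^\top(1,\epsilon)$; then $M\preceq\delta L$ yet $\sup_u \langle L^{-1}Mu,u\rangle/\|u\|^2 \sim \delta/\epsilon \gg \delta$. So the ``final conversion'' you flag is not merely a calculation awaiting completion: the inequality as stated does not follow from $\tl - L \preceq \delta L$ alone, and neither your sketch nor the paper's argument establishes it. A sound version of the lemma should either bound the genuinely self-adjoint object $\langle L^{-1/2}(\tl - L)L^{-1/2}u, u\rangle_{\l2}$, or retain a constant reflecting the eigenvalue spread of $L$ on the relevant subspace.
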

\begin{proof}
  \begin{align*}
        \langle (\tl - L)u, u\rangle 
        &= \int_\Omega \left((\ta - A)\nabla u\cdot \nabla u + (\tc - c)u^2\right) dx\\
        &\leq \left(\max_{ij}\|\ta_{ij} - A_{ij}\|_{\linf}\right)\|\nabla u\|^2_{\l2} + \|\tc - c\|_{\linf}\|u\|^2_{\l2}\\
        &\leq \epsilon_A \|\nabla u\|_{\l2}^2 
        + \epsilon_c \|u\|^2_{\l2}
        \numberthis \label{eq:final_LminusLtilde_innerp}
  \end{align*} 
  Further, note that
  \begin{align*}
      \langle Lu, u\rangle &= \int_{\Omega} A \nabla u \cdot \nabla u + cu^2 dx \\
      &\geq 
      m\|\nabla u\|^2_{\l2} + \zeta \|u\|_{\l2}^2
      \numberthis \label{eq:final_L_lbd}
  \end{align*}
  Using the inequality 
  $\frac{a + b}{c + d} \geq \min\{\frac{a}{c}, \frac{b}{d}\}$
  from \eqref{eq:final_LminusLtilde_innerp} 
  and \eqref{eq:final_L_lbd}, we have
  \begin{align}
      \frac{m\|\nabla u\|^2_{\l2} + \zeta \|u\|_{\l2}^2}
      {\epsilon_A\|\nabla u\|_{\l2}^2 + \epsilon_c \|u\|_{\l2}^2}
      \geq \min\left\{\frac{m}{\epsilon_A}, \frac{\zeta}{\epsilon_c}\right\}
  \end{align}
  
  Hence this implies that 
  $$\langle (\tilde{L} - L)u, u\rangle \leq \delta \langle Lu, u\rangle$$
    where $\delta = \deltavalue$ 
    proving part (1.).

    Further, for part (2.)
    we have for all $u \in \h$,
    \begin{align*}
        &\langle (\tl - L)u, u \rangle_{\l2} \leq \delta \langle Lu, u\rangle_{\l2}\\
        \implies &\langle (\tl L^{-1} - I)Lu, u \rangle_{\l2} \leq \delta \langle Lu, u\rangle_{\l2}\\
        \implies &\langle (\tl L^{-1} - I)v, u \rangle_{\l2} \leq \delta \langle v, u\rangle_{\l2}\\
        \implies &\langle (\tl L^{-1})v, u \rangle_{\l2} \leq (1 + \delta) \langle v, u\rangle_{\l2}
        \numberthis \label{eq:LtildeL_inv_final}
    \end{align*}
    where $v = Lu$. Therefore using \eqref{eq:LtildeL_inv_final} the following holds for all $u \in \h$, 
    \begin{align*}
        &\langle (\tl L^{-1})u, u \rangle_{\l2} \leq (1 + \delta) \|u\|_{\l2}^2 
        \\
        \stackrel{(1)}{\implies}&\langle u, (L^{-1}\tl) u \rangle_{\l2} \leq (1 + \delta) \|u\|_{\l2}^2\\
        \stackrel{(2)}{\implies}& \langle (L^{-1} \tl -I)u, u\rangle_{\l2} \leq \delta\|u\|_{\l2}^2 
        \numberthis \label{eq:tildeLinv_L_minus_I}
    \end{align*}
    where we use the fact that the operators $\tl$ and $L^{-1}$ are self-adjoint to get (1) 
    and then bring the appropriate terms to the LHS in (2).
\end{proof}

The second property of the 
sequence of functions
is that they converge exponentially fast. Namely, we show: 
\begin{lemma} [Convergence of gradient descent in $L^2$]
    \label{lemma:convergence_proof}
    Let $\uapproxsolution$
    denote the unique solution to the PDE
    $\tl u = \fkeigenapprox$,
    where $\fkeigenapprox \in \Keigenfunctionsapprox$,
    and the operator $\tl$ satisfies the conditions in Lemma~\ref{lemma:bounded_operator_norm}.
    For any $u_0 \in H_0^1(\Omega)$ such that $u_0 \in \Keigenfunctionsapprox$, 
    we define the sequence
    \begin{equation}
        \label{eq:update_equation_convergence_proof}
        u_{t + 1} \leftarrow u_t - \frac{2}{\tlambda_1 + \tlambda_k}(\tl u_t - \fkeigenapprox) \quad (t \in \N)
    \end{equation}
    where for all $t \in \N$, $u_t \in H_0^1(\Omega)$. 
    Then for any $t \in \mathbb{N}$, we have 
    $$\|u_{t} - \uapproxsolution\|_{\l2} 
            \leq \left(\frac{\tlambda_k - \tlambda_1}{\tlambda_k + \tlambda_1}\right)^{t-1} \|u_0 - \uapproxsolution\|_{\l2}$$
\end{lemma}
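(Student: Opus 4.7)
The plan is to subtract the fixed-point identity from the update so that the recursion becomes a linear map on the error $e_t := u_t - \uapproxsolution$, and then invoke the operator-norm estimate of Lemma~\ref{lemma:bounded_operator_norm} (applied to $\tl$ on the span of its first $k$ eigenfunctions) to read off a geometric contraction per step.

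First, I would use that $\uapproxsolution$ solves $\tl \uapproxsolution = \fkeigenapprox$: subtracting this identity from \eqref{eq:update_equation_convergence_proof} yields
$e_{t+1} = \bigl(I - \tfrac{2}{\tlambda_1 + \tlambda_k}\tl\bigr) e_t$.
To iterate this, I must check that $e_t$ actually lies in $\Keigenfunctionsapprox$, since on the full space $\h$ the operator $I - \tfrac{2}{\tlambda_1 + \tlambda_k}\tl$ is \emph{not} a contraction (the operator $\tl$ being unbounded is precisely the obstruction discussed just before the lemma). By hypothesis $u_0 \in \Keigenfunctionsapprox$; moreover, expanding in the $\tl$-eigenbasis and using $\fkeigenapprox \in \Keigenfunctionsapprox$ together with $\tlambda_i > 0$ for all $i$, the unique $\h$-solution of $\tl u = \fkeigenapprox$ can be written as $\uapproxsolution = \sum_{i=1}^k (\tilde f_i/\tlambda_i)\tvarphi_i$, so $\uapproxsolution \in \Keigenfunctionsapprox$ too. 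Hence $e_0 \in \Keigenfunctionsapprox$, and since part~(1) of Lemma~\ref{lemma:bounded_operator_norm} (applied to $\tl$ and its first $k$ eigenfunctions) guarantees that $\tl$ preserves $\Keigenfunctionsapprox$, a trivial induction shows that every $u_t$, and therefore every $e_t$, remains in $\Keigenfunctionsapprox$.

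With invariance in hand, I would apply part~(3) of Lemma~\ref{lemma:bounded_operator_norm} to the operator $\tl$ on $\Keigenfunctionsapprox$. The proof of that part is purely spectral and transcribes verbatim with $L, \lambda_i, \varphi_i$ replaced by $\tl, \tlambda_i, \tvarphi_i$, giving
$\bigl\|\bigl(I - \tfrac{2}{\tlambda_1 + \tlambda_k}\tl\bigr)e_t\bigr\|_{\l2} \leq \tfrac{\tlambda_k - \tlambda_1}{\tlambda_k + \tlambda_1}\|e_t\|_{\l2}$.
Combining with the recursion for $e_{t+1}$ and iterating finishes the proof.

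The only step that is not pure bookkeeping is the invariance of $\Keigenfunctionsapprox$ under the iteration; without it the per-step contraction fails, and this is exactly the issue that motivated the authors to replace $f$ by $\fkeigenapprox$ and restrict to the top-$k$ eigenspace in the first place. Everything else is just the standard ``gradient descent with stepsize $2/(\lambda_{\min}+\lambda_{\max})$ contracts by $(\kappa-1)/(\kappa+1)$'' calculation for a strongly convex quadratic, instantiated on the finite-dimensional invariant subspace $\Keigenfunctionsapprox$ with effective condition number $\kappa = \tlambda_k/\tlambda_1$.
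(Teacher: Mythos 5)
Your proof is correct and follows essentially the same route as the paper's: subtract the fixed-point identity $\tl\uapproxsolution = \fkeigenapprox$ to obtain the linear error recursion, use part~(1) of Lemma~\ref{lemma:bounded_operator_norm} (instantiated for $\tl$) to keep the iterates in $\Keigenfunctionsapprox$, and then apply the contraction bound of part~(3). The only substantive difference is that you make explicit the step the paper leaves implicit---verifying that $\uapproxsolution$ itself lies in $\Keigenfunctionsapprox$ by expanding it in the $\tl$-eigenbasis---which is a welcome bit of added care rather than a divergence in method.
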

The proof is essentially the same as the the analysis of the convergence time of gradient descent for strongly convex losses. Namely, we have:
\begin{proof}
    Given that $u_0 \in \h$ and $u_0 \in \Keigenfunctionsapprox$ 
    the function 
    $\tl u_0 \in \h$ and $\tl u_0 \in \Keigenfunctionsapprox$ as well (from Lemma~\ref{lemma:bounded_operator_norm}).
    
    As $\fkeigenapprox \in \Keigenfunctionsapprox$,
    all the iterates in the sequence will also belong to $\h$ and will lie in the $\Keigenfunctionsapprox$.
    
    Now at a step $t$ the iteration looks like,
    \begin{align*}
        &u_{t + 1} = u_n - \frac{2}{\tlambda_k + \tlambda_1}\left(\tl u_t - \fkeigenapprox\right) \\
        &u_{t + 1} - \uapproxsolution = \left(I - \frac{2}{\tlambda_k + \tlambda_1}\tl\right)(u_t - \uapproxsolution)
    \end{align*}
    Using the result from Lemma~\ref{lemma:bounded_operator_norm}, part \ref{eq:proof_for_upper_bounding_operator_norm_k_eigenfunctions}.
    we have,
    \begin{align*}
        & \|u_{t+1} - \uapproxsolution\|_{\l2} 
        \leq  \left(\frac{\tlambda_k - \tlambda_1}{\tlambda_k + \tlambda_1}\right)\|u_t - \uapproxsolution\|_{\l2}
        \\
        &\implies \|u_{t+1} - \uapproxsolution\|_{\l2} 
            \leq \left(\frac{\tlambda_k - \tlambda_1}{\tlambda_k + \tlambda_1}\right)^t \|u_0 - \uapproxsolution\|_{\l2}
    \end{align*} 
    This finishes the proof. 
\end{proof}


Combining the results 
from 
Lemma~\ref{lemma:error_between_ustar_uapproxsolution}
and Lemma~\ref{lemma:convergence_proof} via triangle inequality, we have:
\begin{align*}
    \|u^\star - u_T\|_{\l2} &\leq \|u^\star - \uapproxsolution\|_{\l2} + \|\uapproxsolution - u_T\|_{\l2}
\end{align*}
and the first term on the RHS subsumes the first three summands of $\tilde{\epsilon}$ defined in Theorem~\ref{thm:main_result}.

\subsection{Approximating iterates by neural networks}
\label{section:network_approximation}

In Lemma \ref{lemma:convergence_proof}, 
we show that there exists a sequence of functions (\ref{eq:update_equation_convergence_proof})
which
converge fast to a function close to $u^\star$. The next step in the proof is to approximate the iterates by neural networks. 

The main idea is as follows. Suppose first the iterates $u_{t+1} = u_t - \eta( \tl u_{t} - \fkeigenapprox)$ are such that $\fkeigenapprox$ is exactly representable as a neural network. Then, the iterate $u_{t+1}$ can be written in terms of three operations performed on $u_t$, $a$ and $f$: taking derivatives, multiplication and addition. Moreover,
if $g$ is representable 
as a neural network with $N$ parameters,
the coordinates of the vector
$\nabla g$ can be represented 
by a neural network with $O(N)$ parameters. 
This is a classic result 
(Lemma \ref{thm:backpropagation}), 
essentially following 
from the backpropagation algorithm. 
Finally,  addition or multiplication of  
two functions representable 
as neural networks with sizes $N_1, N_2$ 
can be represented as neural networks with size $O(N_1 + N_2)$
(see Lemma~\ref{lemma:lemma_operations}).

Using these facts, 
we can write down a recurrence upper bounding the size of 
neural network approximation $u_{t+1}$,
denoted by $\hu_{t+1}$,
in terms of the number of parameters in $\hu_t$ 
(which is the neural network approximation to $u_t$).
Formally, we have: 

\begin{lemma}[Recursion Lemma]
    \label{lemma:recursion_lemma}
    Given the Assumptions \numassumptions,
    consider the update equation 
    \begin{equation}
        \label{eq:update_equation_fnn}
        \hu_{t+1} \leftarrow \hu_t - \frac{2}{\tlambda_1 + \tlambda_k}
        \left(\tl \hu_t - \fnn\right)
    \end{equation}
    If at step $t$, 
    $\hu_t: \R^d \to \R$ 
    is a neural network with $N_t$ parameters,
    then the function $\hu_{t+1}$ is a neural network
    with $O(d^2(N_A + N_t) + N_t + N_{\tilde{f}} + N_c)$ parameters.
\end{lemma}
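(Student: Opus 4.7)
My plan is to expand the update equation \eqref{eq:update_equation_fnn} via $\tl \hu_t = -\divergence(\ta \nabla \hu_t) + \tc \hu_t$ and then track the parameter count of a neural network implementing this expression piece by piece. The two essential building blocks are (i) the backpropagation lemma (Lemma~\ref{thm:backpropagation}), which states that the gradient of a size-$N$ neural network can itself be computed by a neural network of size $O(N)$, and (ii) the arithmetic lemma (Lemma~\ref{lemma:lemma_operations}), which lets us add or multiply two networks of sizes $N_1, N_2$ at cost $O(N_1 + N_2)$. Multiplication of two neural networks is what necessitates the inclusion of the square activation $\rho(y) = y^2$ in the allowed activation set, since it permits realizing $ab = \half\left((a+b)^2 - a^2 - b^2\right)$.

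The construction will proceed in five steps. First, invoke Lemma~\ref{thm:backpropagation} on $\hu_t$ to obtain a single neural network of size $O(N_t)$ that outputs the entire vector $\nabla \hu_t \in \R^d$. Second, assemble the vector $\ta \nabla \hu_t$: each of its $d$ coordinates is a sum of $d$ entrywise products between a scalar output of $\ta$ and a coordinate of $\nabla \hu_t$. Reusing the shared subnetworks for $\ta$ (size $N_A$) and for $\nabla \hu_t$ (size $O(N_t)$) and applying Lemma~\ref{lemma:lemma_operations}, each coordinate is implementable with $O(d(N_A + N_t))$ parameters, and the full vector-valued function with $O(d^2(N_A + N_t))$ parameters. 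Third, apply Lemma~\ref{thm:backpropagation} to the $i$-th coordinate to compute $\partial_i (\ta \nabla \hu_t)_i$, and sum the $d$ results to obtain $\divergence(\ta \nabla \hu_t)$; this keeps the total size at $O(d^2(N_A + N_t))$.

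Fourth, construct $\tc \hu_t$ as a scalar-scalar product of networks, which Lemma~\ref{lemma:lemma_operations} delivers at size $O(N_c + N_t)$. Fifth, combine the pieces: subtract $\fnn$ (size $\fnumparams$), multiply by the scalar $-\frac{2}{\tlambda_1 + \tlambda_k}$ (absorbed in $O(1)$ additional parameters), and add $\hu_t$ (size $N_t$). Summing all contributions yields the claimed bound $O(d^2(N_A + N_t) + N_t + \fnumparams + N_c)$.

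The main obstacle is careful bookkeeping around Lemma~\ref{thm:backpropagation}. For the $O(N_t)$ cost of the full gradient $\nabla \hu_t$ to be correct, the lemma must produce a single network sharing internal computation across all $d$ output coordinates; otherwise one pays an extra factor of $d$ just reading off partial derivatives. Likewise, when differentiating the vector-valued intermediate $\ta \nabla \hu_t$ in the divergence step, we must verify the backpropagation lemma still applies despite the fact that this intermediate already uses multiplication gadgets, i.e., copies of the square activation $\rho$ alongside the derivatives of the original activations in $\Sigma$. Confirming that the resulting network's activation set remains inside $\Sigma \cup \Sigma' \cup \{\rho\}$ as required by Theorem~\ref{thm:main_result} is the one non-routine check, and it follows because taking another derivative only introduces further elements of $\Sigma'$, while each new multiplication gate only introduces further copies of $\rho$.
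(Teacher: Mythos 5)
Your proof is correct and uses the same two building blocks the paper relies on (the backpropagation lemma, Lemma~\ref{thm:backpropagation}, and the addition/multiplication lemma, Lemma~\ref{lemma:lemma_operations}), but your order of operations differs from the paper's in a way worth noting. The paper first unfolds $-\divergence(\ta\nabla\hu_t)$ symbolically via the product rule into the scalar terms $\sum_{i,j}\tilde a_{ij}\partial_{ij}\hu_t+\sum_j\bigl(\sum_i\partial_i\tilde a_{ij}\bigr)\partial_j\hu_t$, so that backpropagation is only ever applied to the ``pure'' networks $\hu_t$ and $\tilde a_{ij}$, and all multiplications happen last. You instead build the vector-valued intermediate $\ta\nabla\hu_t$ as a network of size $O(d^2(N_A+N_t))$ and then compute the divergence by applying backpropagation coordinatewise to a network that already contains multiplication gadgets. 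Both decompositions land on the same bound $O(d^2(N_A+N_t)+N_t+\fnumparams+N_c)$, but your route incurs one extra obligation that the paper's avoids: you must check that Lemma~\ref{thm:backpropagation} composes through the square-activation gates introduced by Lemma~\ref{lemma:lemma_operations}. You correctly observe this is harmless since differentiating only introduces elements of $\Sigma'$ and new square gates, and in fact $\rho'(y)=2y$ is affine and needs no new activation at all. The paper's product-rule-first ordering is slightly cleaner bookkeeping precisely because it sidesteps this check; your ordering is closer to the operator's intrinsic divergence form and makes the appearance of $d^2$ in the count geometrically transparent.
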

\begin{proof}
    Expand the update
    $\hu_{t+1} \leftarrow \hu_t - \eta\left(\tl \hu_{t} - \fnn \right) $
    as follows:
    \begin{align*}
        &\hu_{t+1} \leftarrow 
        \hu_t - \eta\left(\sum_{i,j=1}^d \tilde{a}_{ij}\partial_{ij}\hu_t
                  + \sum_{j=1}^d \left(\sum_{i=1}^d \partial_i \tilde{a}_{ij}\right)\partial_j \hu_t + \tilde{c}\hu_t 
                  - \fnn\right).
    \end{align*}
    Using Lemma \ref{thm:backpropagation},
    $\partial_{ij}\hu_t$, $\partial_j \hu_t$ and $\partial_i \tilde{a}_{ij}$
    can be represented by a neural network with $O(N_t)$, $O(N_t)$ and $O(N_A)$ parameters, respectively. 
    Further, $\partial_{i}\tilde{a}_{ij}\partial_j u$ 
    and $\tilde{a}_{ij}\partial_{ij}\hu$ can be represented by a neural network with
    $O(N_A + N_t)$ parameters,
    and $\tilde{c}\hu_t$ can be represented by a network with $O(N_t + N_c)$ parameters,
    from Lemma \ref{lemma:lemma_operations}.
    Hence $\hu_{t+1}$ can be represented in $O(d^2(N_A + N_t) + \fnumparams + N_c + N_t)$ parameters.
    Note that, throughout the entire proofs $O$ hides independent constants.
\end{proof}

Combining the results of Lemma~\ref{lemma:convergence_proof} and Lemma~\ref{lemma:recursion_lemma},
we can get a recurrence for the number of parameters required to represent the
neural network $\hat{u}_t$: 
\begin{align*}
    & N_{t+1} \leq d^2 N_{t} + d^2 N_A + N_{t} + N_{\tilde{f}} + N_c
\end{align*}
Unfolding this recurrence, we get $N_{T} \leq d^{2T}N_0 + \frac{d^2(d^{T} - 1)}{d^2 - 1}N_A + T(\fnumparams) + N_c)$. 

The formal lemmas for the different operations on neural networks we can simulate using a new neural network are as follows:

\begin{lemma}[Backpropagation,~\citet{rumelhart1986learning}]
    \label{thm:backpropagation}
    Consider neural network $g:\R^m \rightarrow \R$ with depth $l$, $N$ parameters and differentiable activation functions in the set $\{\sigma_i\}_{i=1}^A$.
    There exists a neural network of size $O(l + N)$ and activation functions in the set $\{\sigma_i, \sigma'_i\}_{i=1}^A$
    that calculates the gradient 
    $\frac{dg}{di}$ for all $i \in [m]$.
\end{lemma}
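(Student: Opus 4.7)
The plan is to construct the gradient network explicitly by simulating the backpropagation algorithm on the computational graph of $g$. Write $g$ in its standard layered form: set $z^{(0)} = x \in \R^m$, and for $k=1,\ldots,l$, let $a^{(k)} = W^{(k)} z^{(k-1)} + b^{(k)}$ and $z^{(k)} = \sigma_{i_k}(a^{(k)})$ (applied coordinatewise), with scalar output $z^{(l)} = g(x)$. Writing $n_k$ for the width of layer $k$, the total parameter count is $N = \sum_{k=1}^l (n_{k-1} n_k + n_k)$. Applying the chain rule gives the backward recurrence $\delta^{(l)} = 1$ and $\delta^{(k-1)} = (W^{(k)})^T \bigl(\sigma_{i_k}'(a^{(k)}) \odot \delta^{(k)}\bigr)$ for $k = l, l-1, \ldots, 1$, with $\nabla g(x) = \delta^{(0)}$.

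I would realize this recurrence as a single feedforward network in two stages. The forward stage computes and threads through all pre-activations $a^{(1)}, \ldots, a^{(l)}$ using an $O(N)$-parameter copy of $g$ augmented with identity skip connections, so that each $a^{(k)}$ remains accessible at the appropriate depth downstream. The backward stage implements the recurrence for $\delta^{(k)}$: at each depth $k$ it (i) applies $\sigma_{i_k}'$ coordinatewise to $a^{(k)}$, an $O(n_k)$-parameter sub-network using activations from $\{\sigma_{i}'\}$; (ii) performs the coordinatewise product $\sigma_{i_k}'(a^{(k)}) \odot \delta^{(k)}$; and (iii) applies the linear map $(W^{(k)})^T$, a transposed copy of the forward weights costing $O(n_{k-1} n_k)$ parameters. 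Summing sub-network sizes across all $l$ layers gives $O(N)$ for both stages plus $O(l)$ routing overhead, yielding the claimed $O(l + N)$ bound.

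The only nontrivial step is the coordinatewise product in (ii), since both factors are outputs of previous neurons rather than fixed parameters. I would implement each scalar multiplication by a constant-size gadget using the polarization identity $xy = \frac{1}{4}\bigl((x+y)^2 - (x-y)^2\bigr)$ together with the square activation $\rho(y) = y^2$; this is the reason $\rho$ appears in the activation set $\Sigma \cup \Sigma' \cup \{\rho\}$ of Theorem \ref{thm:main_result}. The main (mild) obstacle is bookkeeping: the architecture must be laid out so that each weight $W^{(k)}$ appears in both stages without double-counting parameters, and so that each $a^{(k)}$ is routed to exactly the depth in the backward stage where it is consumed. Once those skip/threading connections are in place, the analysis reduces to summing layer widths, and no idea beyond the standard observation that backpropagation has the same asymptotic cost as the forward pass is required.
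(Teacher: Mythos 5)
Your construction is correct, and it is the standard one; note that the paper does not actually supply its own proof of this lemma but rather cites it as a classical fact from Rumelhart et al.\ (1986), so the argument you wrote out is essentially the proof that the authors are implicitly appealing to. You are right to flag the one subtle point: the Hadamard product $\sigma_{i_k}'(a^{(k)})\odot\delta^{(k)}$ is a product of two \emph{input-dependent} quantities (unlike the fixed-weight multiplications in the linear maps $(W^{(k)})^T$), so it genuinely requires a multiplication gadget, which you realize via the polarization identity and the square activation $\rho(y)=y^2$. This means the activation set claimed in the lemma, $\{\sigma_i,\sigma_i'\}_{i=1}^A$, is slightly too small as written; the corrected set is $\{\sigma_i,\sigma_i'\}_{i=1}^A\cup\{\rho\}$, which is exactly what appears as $\Sigma\cup\Sigma'\cup\{\rho\}$ in Theorem~\ref{thm:main_result}. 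You correctly identified this discrepancy and its resolution. One small caveat worth making explicit: the $O(l+N)$ size bound implicitly assumes the usual DAG model of a neural network, in which each pre-activation $a^{(k)}$ is computed once and can be read by any later node without paying extra parameters to ``thread'' it forward layer by layer; if you instead insisted on a strictly layered architecture with literal identity skip connections, the routing cost could exceed $O(l+N)$ for very narrow, very deep networks. Under the DAG convention (which is standard in this literature and is what the paper uses), the count is as you say: $O(N)$ for the forward copy, $O(N)$ for the transposed weights in the backward sweep, $O(\sum_k n_k)=O(N)$ for the derivative activations and product gadgets, plus $O(l)$ routing overhead, giving $O(l+N)$ in total.
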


\begin{lemma}
[Addition and Multiplication]
    \label{lemma:lemma_operations}
    Given neural networks $g:\Omega \rightarrow \R$, $h: \Omega \rightarrow \R$,
    with $N_g$ and $N_h$ parameters respectively, 
    the operations 
    $g(x) + h(x)$ and $g(x) \cdot h(x)$ can
    be represented by neural networks of size $O(N_g + N_h)$,
    and square activation functions.
\end{lemma}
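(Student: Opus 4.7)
The plan is to give a direct constructive proof by exhibiting the architectures that implement $g+h$ and $g\cdot h$. For the sum, the construction is essentially trivial: I would place the two networks $g$ and $h$ ``side by side'' as parallel sub-networks sharing the input layer (both read from the same $d$-dimensional input $x$), and then add a single output neuron (with no activation, weights both equal to $1$) that aggregates the two scalar outputs. This combined network has $N_g + N_h + O(1)$ parameters, so it is of size $O(N_g + N_h)$ as required, and it uses only the activation functions already present in $g$ and $h$.

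For the product, the key trick is the polarization identity
\[
g(x)\cdot h(x) = \tfrac{1}{2}\bigl((g(x)+h(x))^2 - g(x)^2 - h(x)^2\bigr),
\]
which reduces multiplication to three squarings and a linear combination. First, I would use the addition construction just described to build a sub-network computing $g+h$ of size $O(N_g+N_h)$; then I would pipe each of the three scalar signals $g+h$, $g$, and $h$ through a single square-activation unit $\rho(y)=y^2$, and finally combine the three squared outputs with a linear output neuron having weights $(\tfrac{1}{2},-\tfrac{1}{2},-\tfrac{1}{2})$. The new layers on top contribute only $O(1)$ parameters, so the final size is again $O(N_g+N_h)$, and the only additional activation introduced beyond those in $g,h$ is the square activation, matching the lemma statement.

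The only subtle point, which I would handle explicitly to make the construction rigorous, is that to merge $g$ and $h$ into a single feedforward network I need a uniform notion of ``layer'' so that the outputs of $g$ and $h$ can be fed into the same subsequent node. I would pad whichever of the two networks is shallower with identity layers (e.g., a single affine layer with weight $1$ and bias $0$, possibly followed by $\sigma(y)-\sigma(0)$-style passthroughs if the activation set requires it) so that $g$ and $h$ have matching depths. Padding with identity layers only adds $O(1)$ parameters per layer and $O(\text{depth})$ overall, which is absorbed into $O(N_g+N_h)$ since depth is at most the number of parameters.

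The main (mild) obstacle is really just bookkeeping: ensuring that the ``parallel composition'' of two feedforward networks is itself a feedforward network of the kind used in the paper, and that the square activation is allowed in the activation set (which the lemma explicitly permits, and which is consistent with the inclusion of $\rho$ in $\Sigma\cup\Sigma'\cup\{\rho\}$ in Theorem~\ref{thm:main_result}). No deeper idea is required beyond the polarization identity and the parallel-composition construction.
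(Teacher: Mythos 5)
Your proof takes essentially the same approach as the paper's: parallel sub-networks with a linear output neuron for addition, and the polarization identity $g\cdot h = \tfrac12\bigl((g+h)^2 - g^2 - h^2\bigr)$ with square activations for multiplication. Your extra care about padding to reconcile depths is a detail the paper elides, but the underlying construction and parameter count are identical.
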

\begin{proof}
    For \emph{Addition}, there exists a network $h$ 
    containing both networks $f$ and $g$ as subnetworks 
    and an extra layer to compute 
    the addition between their outputs. 
    Hence, the total number of parameters 
    in such a network will be $O(N_f + N_g)$.

    For \emph{Multiplication}, consider 
    the operation $f(x) \cdot g(x) = \frac{1}{2}\left(\left(f(x) + g(x)\right)^2 - f(x)^2 - g(x)^2\right)$.
    Then following the same argument as for addition of two networks, 
    we can construct a network $h$ containing
    both networks and square activation function. 
\end{proof}

While the representation result in Lemma \ref{lemma:lemma_operations}
is shown using square activation, 
we refer to \cite{yarotsky2017error} 
for approximation results with ReLU activation. 
The scaling with respect to 
the number of parameters in the network 
remains the same.

\bigskip
Finally, we have to deal with the fact that $\tilde{f}_{\spn}$ is not exactly a neural network, but only approximately so. The error due to this discrepancy can be characterized through the following lemma:

\begin{lemma}[Error using $\fnn$]
    \label{lemma:error_fneural_network}
    Consider the update equation in \eqref{eq:update_equation_fnn},
    where $\fnn$ is a neural network with $N_f$.
    Then the neural network $\hat{u}_t$ approximates the function $u_t$
    such that $\|u_t - \hat{u}_t\|_{\l2} \leq \epsilon^{(t)}_{\nn}$
    where $\epsilon^{(t)}_{\nn}$ is 
    $$O \left((\max\{1, t^2\eta e C\})^t \left(\epsilon_{\spn} + \epsilon_{\nn} 
    + \constantnumber \left(1 + \constantdeltagamma\right)\lambda_k^t \|f\|_{\l2}\right)\right)$$
    where 
    $\delta  = \deltavalue$,
    $\gamma = \gammavalue$,
    and $\alpha$ is a multi-index.
\end{lemma}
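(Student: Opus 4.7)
My plan is to track the $\l2$ error $e_t := u_t - \hat{u}_t$ between the idealized iterate from Lemma~\ref{lemma:convergence_proof} (using $\fkeigenapprox$) and the neural-network iterate from \eqref{eq:update_equation_fnn} (using $\fnn$). Subtracting the two update rules, and noting $e_0 = 0$ when $\hat{u}_0 = u_0$, yields the linear recursion $e_{t+1} = (I - \eta\tl)\,e_t + \eta(\fkeigenapprox - \fnn)$, which unrolls to $e_t = \eta \sum_{s=0}^{t-1}(I - \eta \tl)^{s}(\fkeigenapprox - \fnn)$. It therefore suffices to bound $\|(I - \eta\tl)^{s}(\fkeigenapprox - \fnn)\|_{\l2}$ for every $s \leq t-1$ and sum. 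To disentangle the three sources of error I would decompose the forcing as
\begin{equation*}
    \fkeigenapprox - \fnn \;=\; \underbrace{(\fkeigenapprox - \fkeigen)}_{\text{eigenspace misalignment}} \;+\; \underbrace{(\fkeigen - f)}_{\text{projection residual}} \;+\; \underbrace{(f - \fnn)}_{\text{network approximation}}.
\end{equation*}

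For the misalignment piece I would exploit $\fkeigenapprox \in \Keigenfunctionsapprox$, on which $\tl$ acts as a bounded operator with norm at most $\tlambda_k$, so $(I - \eta\tl)^s$ is nonexpansive there; combining with Lemma~\ref{lemma:fkeigen_fkeigenapprox_davis_kahan} and $\|\fkeigen\|_{\l2}\leq\|f\|_{\l2}$ produces the $(1+\frac{\delta}{\gamma-\delta})\lambda_k^t\|f\|_{\l2}$ term in the final bound (the $\lambda_k^t$ arises when the worst-case action of $\tl^s$ up to $s=t$ is collected). For the two smooth pieces $(\fkeigen - f)$ and $(f - \fnn)$, whose every derivative is uniformly bounded by $\epsilon_{\spn}$ and $\epsilon_{\nn}$ respectively by assumptions (i) and (iii), I would expand $(I - \eta\tl)^s = \sum_{i=0}^s \binom{s}{i}(-\eta)^i \tl^i$ via the binomial theorem and prove by induction on $i$ the Sobolev-style estimate
\begin{equation*}
    \|\tl^i g\|_{\l2} \;\leq\; (p(i,d)\,C)^i \, \|g\|_{H^{2i}(\Omega)},
\end{equation*}
where $p$ is polynomial. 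Each application of $\tl = -\divergence(\ta\nabla\cdot) + \tc\,(\cdot)$ adds two derivatives on $g$, at most one extra derivative on each of the $d^2$ coefficient entries of $\ta$ (bounded in $\linf$ by $C$), and a Leibniz combinatorial factor that I would need to check grows only polynomially in $i$. Substituting the uniform derivative bound on $g$ into the binomial sum and then summing over $s\leq t-1$ collects the polynomial slack of $p(i,d)$ and $\binom{s}{i}$ into the prefactor $(\max\{1,t^2\eta e C\})^t$, with the factor $e$ absorbing a standard Stirling-type bound on $\binom{s}{i}$.

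The main obstacle is precisely the Sobolev estimate $\|\tl^i g\|_{\l2} \leq (p(i,d)\,C)^i \|g\|_{H^{2i}}$: because $\tl$ is an unbounded operator on $\l2$, its naive operator norm is useless, and controlling its powers forces climbing a hierarchy of Sobolev norms via one-step bounds of the form $\|\tl u\|_{H^m} \leq c(m,d) C \|u\|_{H^{m+2}}$. The delicate point is tracking the Leibniz combinatorics produced by iterated application so that they contribute only \emph{polynomial} (not factorial) growth in $i$; otherwise the inner factor would explode much faster than $(t^2\eta e C)^t$. Once this estimate is in hand, the remaining binomial and geometric sums are routine, and combining the three pieces via the triangle inequality delivers the claimed bound, with the appearance of the constant $4$ in the spectral term coming from the cumulative constants in bounding $\|\fkeigen\|_{\l2} + \|\fkeigen - \fkeigenapprox\|_{\l2}$ together with the contributions of $(I-\eta\tl)^s$ acting on $\fkeigenapprox$.
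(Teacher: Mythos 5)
Your overall scheme matches the paper's: track the residual $r_t = u_t - \hat{u}_t$, unfold the linear recursion to $r_t = \eta\sum_{s}(I-\eta\tl)^s r$ with $r = \fkeigenapprox - \fnn$, expand $(I-\eta\tl)^s$ by the binomial theorem, and control the unbounded operator $\tl^i$ via a Sobolev-type estimate that raises the differentiation order by $2$ each time. Your target estimate $\|\tl^i g\|_{\l2}\le (p(i,d)C)^i\|g\|_{H^{2i}}$ is, up to repackaging, the paper's Lemma~\ref{lemma:upper_bound_order_n}, which gives $(i!)^2C^i\max_{|\alpha|\le i+2}\|\partial^\alpha g\|_{\l2}$; since $(i!)^2\le i^{2i}$, this is exactly the same order as your ``polynomial base to the $i$-th power.'' (Your worry that factorial growth would be fatal is a red herring: the paper has $(i!)^2$ and it is absorbed into $(t^2\eta e C)^t$ precisely via $\binom{t}{i}\le (te/i)^i$ and $i!\le i^i$.)

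The real gap is in your treatment of the misalignment piece $\fkeigenapprox - \fkeigen$. You invoke nonexpansivity of $(I-\eta\tl)^s$ on $\Keigenfunctionsapprox$, but that space contains $\fkeigenapprox$ only; $\fkeigen$ lies in $\Keigenfunctions$, which $\tl$ does \emph{not} preserve, so the difference $\fkeigenapprox - \fkeigen$ is not in $\Keigenfunctionsapprox$ and the nonexpansivity argument cannot be applied to it directly. Nor can you simply bound $\|\tl^n\fkeigen\|$ by $\lambda_k^n\|\fkeigen\|$, since the eigenvalues of $\tl$ on $\Keigenfunctions$ are not $\lambda_i$. The paper (Lemma~\ref{lamma:fnn_feigenapprox_close}, Part 2) handles this by proving a multiplicative perturbation identity $\tl^n = (I+\tilde\Sigma)L^n$ with $\|\tilde\Sigma\|\le 2n\delta$ (via the peeling argument in Lemma~\ref{lemma:peeling_lemma} and the relative bound $\|L^{-1}\tl - I\|\le\delta$), and by controlling $\sup\{\|L^n v\|/\|v\|: v = v_1 - v_2,\ v_1\in\Phi_k,\ v_2\in\tilde\Phi_k\}$. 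This intertwining of the two perturbed operators, together with Lemma~\ref{lemma:power_n_eigenvalue_relation} relating $\tlambda_k^n$ to $\lambda_k^n$, is where the prefactor $4(1+\frac{\delta}{\gamma-\delta})\lambda_k^n$ actually comes from, and it is the technical core that your sketch skips. Without it, you can bound $\|\fkeigenapprox - \fkeigen\|_{\l2}$ by Davis--Kahan but have no way to control $\tl^i$ applied to that difference, since neither eigenspace is invariant for both operators.
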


The proof for the lemma is deferred to Section~\ref{section:proof_of_lemma_error_fneural_network} of the Appendix.
The main strategy to prove this lemma involves tracking the ``residual'' non-neural-network part of the iterates.
Precisely, for every $t \in \mathbb{N}$, we will write $u_t = \hat{u}_t + r_t$, s.t. $\hat{u}_t$ is a neural network and bound $\|r_t\|_{\l2}$. $\{\hat{u}_t\}_{t=0}^{\infty}$ is defined such that 
    $$
    \begin{cases}
      \hat{u}_0 = u_0, \\
      \hat{u}_{t+1} = \hat{u}_t - \eta\left(\tl\hat{u}_t - \fnn\right)
    \end{cases}
    $$
    Correspondingly, as $r_t = u_t - \hat{u}_t$, we have: 
    $$
    \begin{cases}
      r_0 = 0, \\
      r_{t+1} = (I - \eta \tl) r_t - r
    \end{cases}
    $$

Unfolding the recurrence, we have 
$r_{t} = \sum_{i=0}^{t-1} (I - \eta \tl)^{(i)}r$, which reduces the proof to bounding $\|(I - \eta \tl)^{(i)}\|_{\l2}$. 
\footnote{The reason we require that  
 $\fnn$ is close to $f$ not only in the $L_2$ sense but also in terms of their higher order derivatives is since  $\tl^{(t)} r$
involves $2t$-order derivatives of $r$ to be bounded at each step.}

\section{Applications to Learning Operators}
\label{section:discussion}

A number of recent works attempt 
to simultaneously approximate the solutions 
for an entire family of PDEs 
by learning a \emph{parametric map}
that takes as inputs 
(some representation of)
the coefficients of a PDE 
and returns its solution
\citep{bhattacharya2020model, li2020neural, li2020fourier}. 
For example, given a set of observations 
that $\{a_j, u_j\}_{j=1}^N$, 
where each $a_j$ denotes a coefficient of a PDE 
with corresponding solution $u_j$,
they learn a neural network $G$ 
such that for all $j$, $u_j = G(a_j)$.
Our parametric results provide useful insights 
for why simultaneously solving an entire family
of PDEs with a \emph{single neural network} $G$ 
is possible in the case of linear elliptic PDEs.

Consider the case where the coefficients 
$a_j$ in the family of PDEs are given 
by neural networks with a fixed architecture, 
but where each instance of a PDE is characterized
by a different setting of the weights 
in the models representing the coefficients.
Lemma~\ref{lemma:recursion_lemma}
shows that each iteration of our sequence 
(\ref{eq:update_equation_convergence_proof})
constructs a new network 
containing both the current solution 
and the coefficient networks as subnetworks. 
We can view our approximation as not merely
approximating the solution to a single PDE 
but to every PDE in the family,
by treating the coefficient networks 
as placeholder architectures 
whose weights are provided as inputs.
Thus, our construction 
provides a parametric map 
between the coefficients 
of an elliptic PDE in this family and its solution.


\section{Conclusion and Future Work}
We derive parametric complexity bounds 
for neural network approximations 
for solving linear elliptic PDEs 
with Dirichlet boundary conditions,
whenever the coefficients 
can be approximated by
are neural networks 
with finite parameter counts. 
By simulating gradient descent in function spaces
using neural networks,
we construct a neural network
that approximates the solution of a PDE.
We show that the number of parameters
in the neural network 
depends on the parameters required
to represent the coeffcients
and has a $\mbox{poly}(d)$ dependence 
on the dimension of the input space, 
therefore avoiding the curse of dimensionality.

An immediate open question is related 
to the tightening our results:
our current error bound
is sensitive to the neural network approximation 
lying close to $\Keigenfunctions$ which could be alleviated by
relaxing 
(by adding some kind of ``regularity'' assumptions)
the dependence of our analysis on the first $k$ eigenfunctions.
Further, the dependencies in the exponent 
of $d$ on $R$ and $\kappa$ 
in parametric bound may also be improvable. 
Finally, the idea of simulating an iterative algorithm 
by a neural network to derive a representation-theoretic result 
is broadly
applicable,
and may be a fertile ground for further work, 
both theoretically and empirically,
as it suggest a particular kind of weight tying.

\newpage
\bibliographystyle{plainnat}

\appendix
\section{Brief Overview of Partial Differential Equations}
In this section, we introduce few key definitions and results 
from PDE literature.
We note that the results in this section are standard and 
have been included in the Appendix for completeness. 
We refer the reader to classical texts on PDEs \citep{evans1998partial,gilbarg2001elliptic} for more details.

We will use the following Poincar\'e inequality throughout our proofs.
\begin{theorem}[Poincar\'e inequality]
    \label{thm:poincare_inequality}
    Given $\Omega \subset \R^d$, a bounded open subset, 
    there exists a constant $\pc > 0$ such that for all $u \in H_0^1(\Omega)$
    $$\|u\|_{L^2(\Omega)} \leq \pc \|\nabla u\|_{L^2(\Omega)}.$$
\end{theorem}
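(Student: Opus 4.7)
The plan is to prove the inequality first for test functions $u \in C_0^\infty(\Omega)$ by a one-dimensional fundamental-theorem-of-calculus argument, then extend to all of $H_0^1(\Omega)$ via density. Since $\Omega$ is bounded, I will fix $R > 0$ such that $\Omega \subset [-R, R]^d$; this will yield an explicit candidate constant $\pc = 2R$ depending only on the diameter of $\Omega$.

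For $u \in C_0^\infty(\Omega)$, extend $u$ by zero so that $u \in C_0^\infty(\R^d)$ and in particular $u(-R, x_2, \ldots, x_d) = 0$. The fundamental theorem of calculus in the first coordinate then gives
$$u(x_1, \ldots, x_d) = \int_{-R}^{x_1} \partial_1 u(t, x_2, \ldots, x_d) \, dt.$$
Squaring and applying Cauchy-Schwarz on an interval of length at most $2R$ yields the pointwise bound
$$|u(x)|^2 \leq 2R \int_{-R}^{R} |\partial_1 u(t, x_2, \ldots, x_d)|^2 \, dt.$$
Integrating over $\Omega \subset [-R, R]^d$ and absorbing another factor of $2R$ from the $x_1$-integration produces
$$\|u\|_{L^2(\Omega)}^2 \leq 4R^2 \|\partial_1 u\|_{L^2(\Omega)}^2 \leq 4R^2 \|\nabla u\|_{L^2(\Omega)}^2,$$
so $\pc = 2R$ works on $C_0^\infty(\Omega)$.

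To extend to arbitrary $u \in H_0^1(\Omega)$, I will invoke the paper's definition of $H_0^1(\Omega)$ as the $H^1$-closure of $C_0^\infty(\Omega)$ (Definition~\ref{def:closure}). Pick $\phi_n \in C_0^\infty(\Omega)$ with $\phi_n \to u$ in $H^1(\Omega)$; the inequality $\|\phi_n\|_{L^2(\Omega)} \leq 2R \|\nabla \phi_n\|_{L^2(\Omega)}$ holds for each $n$, and both sides are continuous with respect to the $H^1$ norm, so passing to the limit gives $\|u\|_{L^2(\Omega)} \leq 2R \|\nabla u\|_{L^2(\Omega)}$. There is no substantial obstacle here: the argument is entirely classical. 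The two points requiring care are (i) using boundedness of $\Omega$ to obtain a finite $R$ and hence a finite length of integration, and (ii) using the vanishing-trace property built into the $H_0^1$ definition to ensure that the zero extension lies in $C_0^\infty(\R^d)$, which is precisely what makes the boundary term in the fundamental theorem of calculus disappear.
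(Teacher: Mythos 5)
The paper does not give a proof of Theorem~\ref{thm:poincare_inequality}: it is stated in the Appendix as a standard fact, with the surrounding text noting that such results are classical and deferring to the references \citet{evans1998partial} and \citet{gilbarg2001elliptic}. There is therefore no paper proof to compare against, but your argument is a correct, self-contained rendition of the usual one. For $u \in C_0^\infty(\Omega)$ extended by zero, the fundamental theorem of calculus in the first coordinate plus Cauchy--Schwarz gives $|u(x)|^2 \le 2R\int_{-R}^{R}|\partial_1 u(t,x_2,\dots,x_d)|^2\,dt$; integrating over $\Omega\subset[-R,R]^d$ yields $\pc = 2R$. Passing to general $u \in H_0^1(\Omega)$ by density via Definition~\ref{def:closure} works because both $\|\cdot\|_{L^2(\Omega)}$ and $\|\nabla\cdot\|_{L^2(\Omega)}$ are continuous in the $H^1(\Omega)$ norm. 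You correctly flag the two load-bearing facts: boundedness of $\Omega$ makes $R$ finite, and the $H_0^1$ definition is what legitimizes the zero extension and eliminates the boundary term in the FTC step. A minor remark: keeping the factor $(x_1+R)$ rather than replacing it by $2R$ before integrating in $x_1$ gives the slightly better constant $\pc = \sqrt{2}R$, though this makes no difference anywhere in the paper.
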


\begin{corollary} 
    \label{corollary:equivalence}
    For the bounded open subset $\Omega \subset \R^d$, 
    for all $u \in \h$, 
    we define the norm in the Hilbert space $\h$ as
\begin{equation}
    \label{eq:H_0_1_norm}
    \|u\|_{\h} = \|\nabla u\|_{\l2}.
\end{equation}
Further, the norm in $\h$ is equivalent to the norm $\hh$.
\end{corollary}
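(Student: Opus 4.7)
The plan is straightforward: first verify that $\|\cdot\|_{\h}$ as defined in \eqref{eq:H_0_1_norm} really is a norm on $H_0^1(\Omega)$, and then use Theorem~\ref{thm:poincare_inequality} to sandwich it against the standard $H^1(\Omega)$ norm $\|u\|_{\hh} := \bigl(\|u\|_{L^2(\Omega)}^2 + \|\nabla u\|_{L^2(\Omega)}^2\bigr)^{1/2}$.

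For the norm axioms, homogeneity and the triangle inequality are inherited from $\|\cdot\|_{L^2(\Omega)}$ applied componentwise, so the only step that requires a moment of thought is positive-definiteness. If $\|\nabla u\|_{L^2(\Omega)} = 0$, then $\nabla u = 0$ almost everywhere, so $u$ is (equal almost everywhere to) a constant; membership in $H_0^1(\Omega)$ (Definition~\ref{def:closure}) then pins that constant to $0$ via the zero-trace condition inherited from approximation by $C_0^\infty(\Omega)$ functions. This is exactly where the subscript ``$0$'' is essential---the same expression is only a seminorm on all of $\hh$, since nonzero constants have vanishing gradient.

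For the equivalence, one direction is immediate by dropping a nonnegative term:
$$\|u\|_{\h} = \|\nabla u\|_{L^2(\Omega)} \leq \bigl(\|u\|_{L^2(\Omega)}^2 + \|\nabla u\|_{L^2(\Omega)}^2\bigr)^{1/2} = \|u\|_{\hh}.$$
The reverse direction is precisely where Poincar\'e enters. Applying Theorem~\ref{thm:poincare_inequality} gives $\|u\|_{L^2(\Omega)} \leq \pc \|\nabla u\|_{L^2(\Omega)}$; squaring and adding $\|\nabla u\|_{L^2(\Omega)}^2$ to both sides yields $\|u\|_{\hh}^2 \leq (1 + \pc^2)\|\nabla u\|_{L^2(\Omega)}^2 = (1 + \pc^2)\|u\|_{\h}^2$. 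Combining the two bounds gives the claimed two-sided equivalence, with constants $1$ and $\sqrt{1 + \pc^2}$.

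I do not anticipate any real obstacle: the argument is a two-line consequence of the just-stated Poincar\'e inequality together with a quick check that the zero-trace condition encoded in $H_0^1(\Omega)$ upgrades the candidate seminorm $\|\nabla\cdot\|_{L^2(\Omega)}$ to an honest norm. The only place that could feel subtle is the use of the trace theory implicit in Definition~\ref{def:closure}, but for the present purpose it is enough to note that $C_0^\infty(\Omega)$ approximations force any constant limit to be zero.
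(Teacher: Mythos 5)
Your proof is correct and takes essentially the same route as the paper: the easy direction is by dropping the $\|u\|_{L^2(\Omega)}^2$ term, and the reverse direction is exactly the Poincar\'e inequality of Theorem~\ref{thm:poincare_inequality}. You additionally verify positive-definiteness of the candidate norm (which the paper omits) and obtain the tighter constant $\sqrt{1+\pc^2}$ rather than the paper's stated $\pc^2+1$; both of these are minor improvements in rigor rather than a different approach.
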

\begin{proof}
    Note that for $u \in \h$ we have, 
    \begin{align*}
        \|u\|_{\hh} &= \|\nabla u\|_{\l2} + \|u\|_{\l2} \\
        &\geq \|\nabla u\|_{\l2} \\
        \implies \|u\|_{\hh} &\geq \|u\|_{\h}.  
    \end{align*}
    Where we have used the definition of the norm in $\h$ space. 
    
    Further, using the result in Theorem \ref{thm:poincare_inequality} 
    we have
    \begin{align*}
        \|u\|^2_{\hh} = 
            \left(\|u\|^2_{\l2} + \|\nabla u\|^2_{L^2(\Omega)}\right) 
            \leq \left(C_p^2 + 1\right)\|\nabla u\|^2_{\hh}
        \numberthis \label{eq:upd_H1_H_0_1}
    \end{align*}
    Therefore, 
    combining the two inequalities we have
    \begin{equation}
        \label{eq:relation_H_01_H_1}
        \|u\|_{\h} \leq \|u\|_{\hh} \leq C_h\|u\|_{\h}
    \end{equation}
    where $C_h = (C_p^2  + 1)$.
    Hence we have that the norm in $\h$ and $\hh$ spaces are equivalent.
\end{proof}

\begin{proposition}[Equivalence between $\l2$ and $\h$ norms]
    \label{proposition:equivalence_L2_H01}
    If $v \in \Keigenfunctions$ then we have that $\|v\|_{\l2}$ is equivalent to $\|v\|_{\h}$.
\end{proposition}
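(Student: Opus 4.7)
The plan is to establish the equivalence via a two-sided bound, where one direction is immediate from Poincar\'e and the other exploits the fact that $L$ is bounded on the finite-dimensional subspace $\Keigenfunctions$.

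First, I would handle the easy direction. For any $v \in \h$ (in particular $v \in \Keigenfunctions \subset \h$), the Poincar\'e inequality from Theorem~\ref{thm:poincare_inequality} gives $\|v\|_{\l2} \leq C_p \|\nabla v\|_{\l2} = C_p \|v\|_{\h}$, which provides the upper bound on the $L^2$ norm by the $\h$ norm with a dimension-free constant (depending only on $\Omega$).

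For the reverse direction, the idea is to route through the quadratic form $\langle Lv, v\rangle_{\l2}$, which has a clean upper bound on $\Keigenfunctions$ from Lemma~\ref{lemma:bounded_operator_norm}: namely $\langle Lv, v\rangle_{\l2} \leq \lambda_k \|v\|_{\l2}^2$. On the other hand, by unfolding the weak form (Definition~\ref{def:weak_solution}/Lemma~\ref{proposition:weak_solution}),
\[
\langle Lv, v\rangle_{\l2} = \int_\Omega \bigl(A\nabla v \cdot \nabla v + c v^2\bigr)\,dx \geq m\|\nabla v\|_{\l2}^2
\]
by the ellipticity assumption (ii) together with $c \geq \zeta > 0$ from assumption (i), where we drop the $cv^2$ term since it is nonnegative. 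Chaining the two inequalities yields $m\|\nabla v\|_{\l2}^2 \leq \lambda_k \|v\|_{\l2}^2$, i.e., $\|v\|_{\h} \leq \sqrt{\lambda_k/m}\,\|v\|_{\l2}$.

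Combining the two bounds, for every $v \in \Keigenfunctions$ we obtain
\[
\tfrac{1}{C_p}\,\|v\|_{\l2} \leq \|v\|_{\h} \leq \sqrt{\lambda_k/m}\,\|v\|_{\l2},
\]
which is the claimed norm equivalence. There is no real obstacle here: the key observation is simply that restricting to the span of the first $k$ eigenfunctions makes $L$ a bounded operator (in contrast to its unboundedness on all of $\h$), and this is precisely what converts the one-sided Poincar\'e inequality into a two-sided equivalence. The only mildly delicate point is making sure the constants only depend on $\lambda_k$, $m$, $\zeta$, and $C_p$, and not on the particular expansion coefficients of $v$, which is automatic once the two inequalities above are written in norm form.
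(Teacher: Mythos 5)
Your proposal is correct and follows essentially the same route as the paper: Poincar\'e for the bound $\|v\|_{\l2} \lesssim \|v\|_{\h}$, and the chain $m\|\nabla v\|_{\l2}^2 \leq \langle Lv,v\rangle_{\l2} \leq \lambda_k\|v\|_{\l2}^2$ (the first inequality from ellipticity, the second from Lemma~\ref{lemma:bounded_operator_norm} on $\Keigenfunctions$) for the reverse. The only cosmetic difference is that the paper cites its Lax--Milgram coercivity bound for $m\|v\|_{\h}^2 \leq \langle Lv,v\rangle_{\l2}$ whereas you re-derive it inline, and your write-up is cleaner about the exponents and the constant $\sqrt{\lambda_k/m}$.
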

\begin{proof}
    We have from the Poincare inequality in Theorem~\ref{thm:poincare_inequality} 
    that for all $v \in \h$, the norm in $\l2$ is upper bounded by the norm in $\h$, i.e.,
    \begin{align*}
        \|v\|^2_{\l2} \leq \|v\|^2_{\h}
    \end{align*}
    Further, using results from \eqref{eq:bilinear_form_lowerbound} and \eqref{eq:upperbound_bilinear_form_lax_milgram} 
    (where $b(u, v):= \langle Lu, v\rangle_{\l2}$), 
    we know that for all $v \in \h$ we have
    \begin{align*}
        m\|v\|_{\h}^2 \leq \langle Lv, v\rangle_{\l2} \leq \max\{M, C_p\|c\|_{\linf}\}\|v\|_{\h}^2
    \end{align*}
    This implies that $\langle Lu, v\rangle_{\l2}$ is equivalent to 
    the inner product $\langle u, v\rangle_{\h}$, i.e., 
    for all $u, v \in \h$,
    $$ m \langle u, v \rangle_{\h} \leq \langle Lu, v \rangle_{\l2} 
        \leq \max\left\{M, C_p\|c\|_{\linf}\right\}\langle u, v\rangle_{\h}$$
    Further, since $v \in \Keigenfunctions$, we have from Lemma~\ref{lemma:bounded_operator_norm} that
    \begin{align*}
     &\langle Lv, v\rangle_{\l2} \leq \lambda_k \|v\|_{\l2}^2\\
     \implies& \|v\|_{\h} \leq \frac{\lambda_k}{c_1}\|v\|^2_{\l2}
    \end{align*}
    Hence we have that for all $v \in \Keigenfunctions$ $\|v\|_{\l2}$ is equivalent to $\|v\|_{\h}$ 
    and by Corollary~\ref{corollary:equivalence} 
    is also equivalent to
    $\|v\|_{\hh}$.
\end{proof}

Now introduce a form for $\langle Lu, v\rangle_{\l2}$ that is more amenable
for the existence and uniqueness results.
\begin{lemma}
    \label{proposition:weak_solution}
    For all $u, v \in \h$, we have the following,
    \begin{enumerate}
        \item The inner product $\langle Lu, v \rangle_{\l2}$ equals,
            $$\langle Lu, v \rangle_{\l2} = \int_\Omega \left(A \nabla u \cdot \nabla v + cuv \right)\; dx $$
        \item The operator $L$ is self-adjoint.
    \end{enumerate}
\end{lemma}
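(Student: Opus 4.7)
For part (1), the plan is to do a standard integration-by-parts computation and then extend it by density. I would start by unfolding the definition of $L$ to write
\[
\langle Lu, v\rangle_{\l2} = \int_\Omega \left(-\divergence(A\nabla u) + cu\right) v\, dx,
\]
and then treat the two summands separately. The $\int_\Omega cuv\, dx$ piece is already in the form claimed. For the other term, I would apply the divergence theorem to the vector field $F := v\, A\nabla u$, which gives
\[
\int_\Omega \divergence(A\nabla u)\, v\, dx
 = -\int_\Omega (A\nabla u)\cdot \nabla v\, dx
 + \int_{\partial\Omega} v\, (A\nabla u)\cdot n\, dS,
\]
and use the fact that $v \in \h$ has vanishing trace on $\partial\Omega$ to kill the boundary term. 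Combining these gives the desired identity.

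The mild technical point is that $u, v \in \h$ are not a priori smooth, so integration by parts is not immediately classical. I would handle this by first establishing the identity for $u, v \in C^\infty_0(\Omega)$, where the classical divergence theorem applies directly and the boundary term vanishes by compact support. Then, since $C^\infty_0(\Omega)$ is dense in $\h$ by Definition~\ref{def:closure}, and since the bilinear form $b(u,v) := \int_\Omega (A\nabla u\cdot\nabla v + cuv)\,dx$ is continuous on $\h \times \h$ (using $a_{ij}, c \in \linf$ from Assumption (i), together with Cauchy--Schwarz and the Poincar\'e inequality in Theorem~\ref{thm:poincare_inequality}), and $\langle L\cdot,\cdot\rangle_{\l2}$ agrees with $b$ on the dense subspace, I can pass to the limit on both sides to get the identity on all of $\h \times \h$.

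For part (2), self-adjointness follows essentially by inspection once part (1) is in hand. Using the integral representation from (1), together with the symmetry of the matrix $A$ from Assumption (i), I would observe that $(A\nabla u)\cdot \nabla v = \nabla u \cdot (A\nabla v) = (A\nabla v)\cdot \nabla u$, while $cuv = cvu$ trivially. Hence
\[
\langle Lu, v\rangle_{\l2}
 = \int_\Omega \left(A\nabla v\cdot \nabla u + cvu\right) dx
 = \langle Lv, u\rangle_{\l2}
 = \langle u, Lv\rangle_{\l2},
\]
where the last equality uses symmetry of the real $\l2$ inner product. I do not anticipate a real obstacle here; the only careful step is the density argument in part (1), and it is routine given the definition of $\h$ as the closure of $C^\infty_0(\Omega)$ and the $L^\infty$ bounds on $A$ and $c$.
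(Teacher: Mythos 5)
Your proof is correct and follows essentially the same route as the paper's: integration by parts (divergence theorem) to move the derivative off $A\nabla u$ and onto $v$ for part (1), then symmetry of $A$ and commutativity of the real $L^2$ inner product for part (2). The one place you go beyond the paper is the density argument: the paper applies integration by parts directly to $u,v\in\h$, which is only formal since $H^1$ functions need not be regular enough for the classical divergence theorem, whereas you establish the identity first on $C_0^\infty(\Omega)$ and extend by continuity of the bilinear form together with the density of $C_0^\infty(\Omega)$ in $\h$ --- a genuinely tighter argument at no extra cost.
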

\begin{proof}
    \begin{enumerate}
        \item 
        We will be using the following integration by parts formula,
        $$ 
            \int_\Omega \frac{\partial u}{\partial x_i} dx = 
            -\int_\Omega u \frac{\partial v}{\partial x_i} dx + \int_{\partial \Omega} uv n_i \partial \Gamma 
        $$
        Where $n_i$ is a normal at the boundary and $\partial \Gamma$ is an infinitesimal element of the boundary.
        
        Hence we have for all $u, v \in \h$, 
        \begin{align*}
            \langle Lu, v \rangle_{\l2} &= \int_\Omega -\left(\sum_{i=1}^d \left(\partial_i\left(A\nabla u\right)_i\right)\right) v + cuv \; dx \\
            &= \int_\Omega A \nabla u \cdot \nabla v dx - \int_{\partial \Omega}\left(\sum_{i=1}^{d}(A \nabla u)_i n_i\right) v d\Gamma + \int_\Omega cuv dx \\
            &= \int_\Omega A \nabla u \cdot \nabla v dx + \int_\Omega cuv dx \qquad (\because v_{|\partial \Omega} = 0)
        \end{align*}
    \item 
    To show that the operator $L: \h \to \h$ is self-adjoint,
    we show that for all $u, v \in \h$ we have $\langle Lu, v \rangle = \langle u, Lv\rangle$.
    
    From Proposition~\ref{proposition:weak_solution}, for functions $u, v \in \h$ we have
    \begin{align*}
        \langle Lu, v \rangle_{\l2} &= \int_\Omega A \nabla u \cdot \nabla v dx + \int_{\Omega} cuv dx\\
         &= \int_\Omega A \nabla v \cdot \nabla u dx + \int_{\Omega} cvu dx \\
         &= \langle u, Lv \rangle
    \end{align*}
    \end{enumerate}
\end{proof}

\subsection{Proof of Proposition~\ref{p:maingd}}
\label{section:Proof_of_proposition_1}
We first show that if $u$ is the unique solution then it minimizes the variational norm.
    
    Let $u$ denote the weak solution, further for all $w \in \h$ let $v = u + w$. 
    Using the fact that $L$ is self-adjoint (as shown in Lemma~\ref{proposition:weak_solution})
    we have
    \begin{align*}
        J(v) = J(u + w) 
        &= \frac{1}{2} \langle L(u + w), (u + w) \rangle_{\l2} - \langle f, u + w\rangle_{\l2}\\
        &= \frac{1}{2}\langle Lu, u\rangle_{\l2} 
            + \frac{1}{2}\langle Lw, w \rangle_{\l2}
            + \langle Lu, w\rangle_{\l2} - \langle f, u\rangle_{\l2} 
            -  \langle f, w\rangle_{\l2}\\
        &= J(u) + \frac{1}{2}\langle Lw, w\rangle_{\l2} 
            + \langle Lu, w\rangle_{\l2} - \langle f, w\rangle_{\l2}\\
        &\geq J(u)
    \end{align*}
    where we use the fact that $\langle Lu, u\rangle_{\l2} > 0$
    and that $u$ is a weak solution hence
    \eqref{eq:variational_formulation} holds for all $w \in \h$.
    
    To show the other side, assume that $u$ minimizes $J$, i.e., 
    for all $\lambda > 0$ and $v \in \h$ 
    we have,
    $J(u + \lambda v) \geq J(u)$,
    \begin{align*}
        &J(u + \lambda v) \geq J(u)\\
        &\frac{1}{2}\langle L(u + \lambda v), (u + \lambda v)\rangle_{\l2}
           - \langle f, (u + \lambda v) \rangle_{\l2} 
           \geq \frac{1}{2}\langle Lu, u\rangle_{\l2} - \langle f, u\rangle_{\l2}\\
        \implies&
             \frac{\lambda}{2}\langle Lv, v \rangle_{\l2}
            + \langle Lu, v\rangle_{\l2}
            - \langle f, v\rangle_{\l2} \geq 0
    \end{align*}
    Taking $\lambda \to 0$, we get
    $$ \langle Lu, v\rangle_{\l2} - \langle f, v\rangle_{\l2} \geq 0$$
    and also taking $v$ as $-v$, we have
    $$ \langle Lu, v\rangle_{\l2} - \langle f, v\rangle_{\l2} \leq 0$$
    Together, this implies that 
    if $u$ is the solution to \eqref{eq:minimization_problem}, then 
    $u$ is also the weak solution, i.e, for all $v \in \h$ we
    have
    $$ \langle Lu, v \rangle_{\l2} = \langle f, v\rangle_{\l2}$$

\subsection*{Proof for Existence and Uniqueness of the Solution}
In order to prove for the uniqueness of the solution, 
we first state the Lax-Milgram theorem.
\begin{theorem}[Lax-Milgram,~\citet{lax1954parabolic}]
    \label{thm:lax_milgram}
    Let $\mathcal{H}$ be a Hilbert space with inner-product 
    $(\cdot, \cdot): \mathcal{H}\times \mathcal{H} \rightarrow \R$, 
    and let
    $b: \mathcal{H} \times \mathcal{H}\rightarrow \R$ and 
    $l: \mathcal{H} \rightarrow \R$ be 
    the bilinear form and linear form, respectively.
    Assume that there exists constants $C_1, C_2, C_3 > 0$ such that for all $u,v \in \mathcal{H}$ 
    we have,
    $$C_1 \|u\|^2_{\ch}\leq b(u, u), \quad |b(u, v)| \leq C_2\|u\|_{\ch}\|v\|_{\ch}, \quad \text{and\;}  |l(u)|\leq C_3\|u\|_{\ch}.$$
    Then there exists a unique $u\in \mathcal{H}$ such that, 
    $$b(u,v) = l(v) \quad \text{for all}\; v \in \mathcal{H}.$$
\end{theorem}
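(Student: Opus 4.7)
The plan is to reduce the variational equation to an operator equation on $\mathcal{H}$ and show invertibility using coercivity. I would proceed in the following steps.

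First, I would apply the Riesz representation theorem twice. Since $l$ is a bounded linear functional on the Hilbert space $\mathcal{H}$, there is a unique $z \in \mathcal{H}$ with $l(v) = (z,v)$ for all $v \in \mathcal{H}$. Second, for each fixed $u \in \mathcal{H}$, the map $v \mapsto b(u,v)$ is linear and, by the continuity bound $|b(u,v)| \leq C_2 \|u\|_\mathcal{H}\|v\|_\mathcal{H}$, bounded; hence Riesz yields a unique element $Au \in \mathcal{H}$ with $b(u,v) = (Au, v)$ for all $v$. Bilinearity of $b$ in the first slot makes $A: \mathcal{H} \to \mathcal{H}$ linear, and the continuity bound gives $\|Au\|_\mathcal{H} \leq C_2 \|u\|_\mathcal{H}$, so $A$ is bounded. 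The variational problem $b(u,v) = l(v)$ for all $v$ becomes the operator equation $Au = z$, and I need to show that $A$ is bijective.

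For injectivity and closed range, I would use coercivity. For any $u \in \mathcal{H}$,
\begin{equation*}
C_1 \|u\|_\mathcal{H}^2 \leq b(u,u) = (Au, u) \leq \|Au\|_\mathcal{H}\, \|u\|_\mathcal{H},
\end{equation*}
so $\|Au\|_\mathcal{H} \geq C_1 \|u\|_\mathcal{H}$. This inequality immediately gives injectivity, and a standard Cauchy-sequence argument shows that the range of $A$ is closed: if $Au_n \to y$, then $\{Au_n\}$ is Cauchy, hence by the lower bound $\{u_n\}$ is Cauchy, so $u_n \to u$ for some $u$, and continuity of $A$ gives $Au = y$.

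For surjectivity, since the range is closed it suffices to show that $\mathrm{range}(A)$ is dense. If $w \in \mathcal{H}$ is orthogonal to the range, then $(Au, w) = 0$ for every $u \in \mathcal{H}$; taking $u = w$ yields $0 = (Aw,w) = b(w,w) \geq C_1 \|w\|_\mathcal{H}^2$, forcing $w = 0$. Thus $A$ is surjective, so there exists $u$ with $Au = z$, i.e.\ $b(u,v) = l(v)$ for all $v$. Uniqueness is automatic: if $b(u_1 - u_2, v) = 0$ for all $v$, set $v = u_1 - u_2$ and apply coercivity to conclude $u_1 = u_2$.

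The proof is essentially routine; there is no single hard step. The only subtle point is the surjectivity argument, where one must be careful to first establish that the range is closed (via coercivity and the resulting lower bound on $A$) before invoking the density argument — otherwise orthogonality of $w$ to a dense-but-not-closed range would not force $w = 0$ to give $\mathrm{range}(A) = \mathcal{H}$. An alternative route, which I would mention but not pursue, is to set up a Banach fixed point argument on $T_\rho u := u - \rho(Au - z)$ and choose $\rho$ small enough (depending on $C_1, C_2$) to make $T_\rho$ a contraction; this avoids appealing to closed range explicitly but requires the symmetric-looking estimate $\|I - \rho A\|_{\mathrm{op}} < 1$, which follows from expanding $\|(I - \rho A)u\|^2$ and using both coercivity and boundedness of $b$.
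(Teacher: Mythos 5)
Your proof is correct. Note, however, that the paper does not actually prove Lax-Milgram: it states the theorem as a cited result from \citet{lax1954parabolic} and uses it as a black box in Proposition~\ref{proposition:unique_solution_proof}, so there is no in-paper argument to compare against. Your argument is the standard textbook proof: apply Riesz representation to $l$ and to each $b(u,\cdot)$ to obtain the operator equation $Au=z$, then use coercivity to get the lower bound $\|Au\|_{\ch}\ge C_1\|u\|_{\ch}$, which yields injectivity and closed range, and finish surjectivity by the density argument. You are right to flag that the closed-range step must come before the density step; that is the one point where carelessness would produce a gap. The alternative contraction-mapping route you sketch ($T_\rho u := u - \rho(Au - z)$, with $\rho$ chosen from $C_1, C_2$ so that $\|I-\rho A\|_{\op}<1$) is also standard and equally valid.
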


Having stated the Lax-Milgram Theorem, we make the following proposition,
\begin{proposition}
    \label{proposition:unique_solution_proof}
    Given the assumptions \numassumptions, 
    solution to the variational formulation in Equation~\ref{eq:variational_formulation}
    exists and is unique.
\end{proposition}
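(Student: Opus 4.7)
The plan is to apply the Lax--Milgram theorem (Theorem~\ref{thm:lax_milgram}) on the Hilbert space $\mathcal{H} = H^1_0(\Omega)$ equipped with the norm $\|u\|_{\h} = \|\nabla u\|_{\l2}$ (justified by Corollary~\ref{corollary:equivalence}), taking the bilinear form
\[
b(u,v) := \int_\Omega \left(A \nabla u \cdot \nabla v + cuv\right) dx,
\]
which, by Lemma~\ref{proposition:weak_solution}, equals $\langle Lu, v\rangle_{\l2}$, and taking the linear form $l(v) := \langle f, v\rangle_{\l2}$. Existence and uniqueness of a $u \in \h$ satisfying $b(u,v) = l(v)$ for all $v \in \h$ is exactly existence and uniqueness of the weak solution to \eqref{eq:variational_formulation}.

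I would verify the three hypotheses of Lax--Milgram in order. First, \textbf{coercivity}: using Assumption (ii) (the ellipticity bound $m\|\xi\|^2 \leq \xi^\top A(x) \xi$) and the positivity $c(x) \geq \zeta > 0$ from Assumption (i), I get
\[
b(u,u) \geq m \|\nabla u\|_{\l2}^2 + \zeta \|u\|_{\l2}^2 \geq m \|u\|_{\h}^2,
\]
giving coercivity with constant $C_1 = m$. Second, \textbf{continuity of $b$}: by Cauchy--Schwarz and the $L^\infty$ bounds from Assumption (i),
\[
|b(u,v)| \leq M \|\nabla u\|_{\l2} \|\nabla v\|_{\l2} + \|c\|_{\linf} \|u\|_{\l2}\|v\|_{\l2},
\]
and an application of the Poincaré inequality (Theorem~\ref{thm:poincare_inequality}) to the second term yields $|b(u,v)| \leq C_2 \|u\|_{\h}\|v\|_{\h}$ with $C_2 = \max\{M, C_p^2 \|c\|_{\linf}\}$. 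Third, \textbf{continuity of $l$}: since $f \in \l2$, Cauchy--Schwarz followed by Poincaré gives $|l(v)| \leq \|f\|_{\l2} \|v\|_{\l2} \leq C_p \|f\|_{\l2} \|v\|_{\h}$.

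With all three hypotheses verified, Theorem~\ref{thm:lax_milgram} yields the existence of a unique $u \in \h$ with $b(u,v) = l(v)$ for all $v \in \h$, which is the desired weak solution. There is no real obstacle here; the only place that requires a small amount of care is matching the norm used in the coercivity bound (which naturally involves both $\|\nabla u\|_{\l2}$ and $\|u\|_{\l2}$) with the chosen $\h$ norm $\|\nabla u\|_{\l2}$, and this is handled transparently by the Poincaré inequality and Corollary~\ref{corollary:equivalence}. Combined with the already-proven equivalence (shown immediately above) between the weak formulation and the minimization problem \eqref{eq:minimization_problem}, this completes the proof of Proposition~\ref{p:maingd}.
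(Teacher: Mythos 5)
Your proposal is correct and follows essentially the same route as the paper: both apply Lax--Milgram to the bilinear form $b(u,v)=\langle Lu,v\rangle_{\l2}$ on $\h$ and verify coercivity from ellipticity, boundedness from the $L^\infty$ bounds plus Poincar\'e, and boundedness of the linear functional. (Your continuity constant $\max\{M, C_p^2\|c\|_{\linf}\}$ is in fact the correct one; the paper's $C_p\|c\|_{\linf}$ drops a power of $C_p$ that arises from applying Poincar\'e to both $\|u\|_{\l2}$ and $\|v\|_{\l2}$.)
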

\begin{proof}
    Using the \emph{variational formulation} defined in (\ref{eq:variational_formulation}), we introduce the 
    bilinear form $b(\cdot, \cdot): \h \times \h \to \R$ where $b(u, v) := \langle Lu, v\rangle$.
    Hence, we prove the theorem by showing that the bilinear form $b(u, v)$
    satisfies the conditions in Theorem~\ref{thm:lax_milgram}.
    
    We first show that for all $u, v \in \h$ the following holds,
   \begin{align*}
       |b(u,v)| &= \left| \int_{\Omega} \left(A \nabla u \cdot \nabla v + cuv\right) dx\right| \\
                &\leq \int_{\Omega} \left| \left(A\nabla u \cdot \nabla v + cuv\right)\right| dx \\ 
                &\leq \int_{\Omega} \left| A\nabla u \cdot \nabla v\right| dx + \int_\Omega \left| cuv\right| dx \\ 
                &\leq \|A\|_{L^\infty(\Omega)}\|\nabla u\|_{L^2(\Omega)}\|\nabla v\|_{L^2(\Omega)} + \|c\|_{L^\infty(\Omega)}\|u\|_{L^2(\Omega)}\|v\|_{L^2(\Omega} \\
                &\leq M\|\nabla u\|_{L^2(\Omega)}\|\nabla v\|_{L^2(\Omega)} 
                    + \|c\|_{L^\infty(\Omega)}\|u\|_{L^2(\Omega)}\|v\|_{L^2(\Omega)} \\
                &\leq \max\left\{M, C_p\|c\|_{L^\infty(\Omega)}\right\}\|u\|_{\h}\|v\|_{\h}
                \numberthis \label{eq:upperbound_bilinear_form_lax_milgram}
   \end{align*}
   
   \noindent 
   Now we show that the bilinear form  $a(u, u)$ is lower bounded.
   \begin{align*}
       b(v,v) &= \int_{\Omega} \left(A \nabla v \cdot \nabla v + cv^2\right) dx \\
              &\geq m \int_\Omega \|\nabla v\|^2 dx  = m\|v\|_{\h}
              \numberthis \label{eq:bilinear_form_lowerbound}
   \end{align*}
   Finally, for $v \in \h$
   \begin{align*}
       |(f, v)| = \left|\int_\Omega fv dx \right| \leq \|f\|_{L^2(\Omega)}\|v\|_{\l2} \leq C_p\|f\|_{\l2} \|v\|_{\h}
   \end{align*}
   
   \noindent 
   Hence, we satisfy the assumptions in required in Theorem \ref{thm:lax_milgram} and therefore
   the variational problem defined in (\ref{eq:variational_formulation}) has a unique solution.
\end{proof}





\section{Perturbation Analysis}
\label{section:error_analysis}

\subsection{Proof of Lemma~\ref{lemma:error_between_ustar_uapproxsolution}}
\label{section:lemma_error_between_ustar_uapproxsolution}
\begin{proof}
    Using the triangle inequality the error between $u^\star$ and $\uapproxsolution$, we have,
    \begin{equation}
        \label{eq:error_analysis_ustar_uapproxsolution_main}
        \|\ustar - \uapproxsolution\|_{\l2} 
        \leq \underbrace{\|u^\star - \uspansolution\|_{\l2}}_{(I)} 
        + \underbrace{\|\uspansolution - \uapproxsolution\|_{\l2}}_{(II)}
    \end{equation}
    where $\uspansolution$ is the solution to the PDE $Lu = \fkeigen$.
    
    In order to bound Term (I), we use the inequality in \eqref{eq:operator_inequality} to get,
    \begin{align*}
        \|\ustar - \uspansolution\|_{\l2}^2 
        &\leq \frac{1}{\lambda_1} \langle L(\ustar - \uspansolution), \ustar - \uspansolution\rangle_{\l2} \\
        &= \frac{1}{\lambda_1} \langle f - \fkeigen, \ustar - \uspansolution\rangle_{\l2} \\
        &\leq \frac{1}{\lambda_1} \|f - \fkeigen\|_{\l2}\| \ustar - \uspansolution\|_{\l2} \\
        \implies \|\ustar - \uspansolution\|_{\l2} 
        &\leq \frac{1}{\lambda_1} \|f - \fkeigen\|_{\l2} \leq \frac{\epsilon_{\spn}}{\lambda_1} 
        \numberthis \label{eq:Term_I_error}
    \end{align*}
    We now bound Term (II).
    
    First we introduce an intermediate PDE $Lu = \fkeigenapprox$, and denote the solution $\usemieigen$.
    Therefore, by utilizing triangle inequality again Term (II) can be expanded as the following,
    \begin{equation}
        \label{eq:uspansolution_uapproxsolution_error_triangle_inequality}
        \|\uspansolution - \uapproxsolution\|_{\l2} 
            \leq \|\uspansolution - \usemieigen\|_{\l2}
            + \|\usemieigen - \uapproxsolution\|_{\l2}
    \end{equation}
    We will tackle the second term in \eqref{eq:uspansolution_uapproxsolution_error_triangle_inequality} first.
    
    Using $\usemieigen = L^{-1}\fkeigenapprox$ and $\uapproxsolution = \tl^{-1}\fkeigenapprox$,
    \begin{align*}
       \|\usemieigen - \uapproxsolution\|_{\l2} 
       &= \|(L^{-1} - \tl^{-1})\fkeigenapprox\|_{\l2} \\
       &= \|(L^{-1}\tl - I)\tl^{-1}\fkeigenapprox\|_{\l2} \\
       \implies 
       \|\usemieigen - \uapproxsolution\|_{\l2} 
       &= \|(L^{-1}\tl - I)\uapproxsolution\|_{\l2}  
       \numberthis \label{eq:usemieigen_uapproxsolution_l2_step1}
    \end{align*}
    Therefore, 
    using the inequality 
    in Lemma~\ref{lemma:L_tl_inner_product_close} part (2.)
    we can upper bounded 
    \eqref{eq:usemieigen_uapproxsolution_l2_step1}
    to get,
    \begin{equation}
        \label{eq:usemieigen_uapproxsolution_l2_step2}
        \|\usemieigen - \uapproxsolution\|_{\l2} \leq \delta \|\uapproxsolution\|_{\l2}
    \end{equation}
    where 
    $\delta =\deltavalue$.
    
    Proceeding to the first term in \eqref{eq:uspansolution_uapproxsolution_error_triangle_inequality}, 
    using Lemma~\ref{lemma:fkeigen_fkeigenapprox_davis_kahan},
    and the inequality in \eqref{eq:operator_inequality},
    the term $\|\uspansolution - \usemieigen\|_{\l2}$ can be upper bounded by,
    \begin{align*}
        \|\uspansolution - \usemieigen\|^2_{\l2} 
        &\leq \frac{1}{\lambda_1}\langle L(\uspansolution - \usemieigen), \uspansolution - \usemieigen \rangle_{\l2}\\
        &\leq \frac{1}{\lambda_1}\langle \fkeigen - \fkeigenapprox, \uspansolution - \usemieigen \rangle_{\l2}\\
        &\leq \frac{1}{\lambda_1}\| \fkeigen - \fkeigenapprox\|_{\l2} \|\uspansolution - \usemieigen \|_{\l2}\\
        \implies \|\uspansolution - \usemieigen\|_{\l2} 
            &\leq 
            \frac{1}{\lambda_1}\| \fkeigen - \fkeigenapprox\|_{\l2}
            \leq \frac{\delta}{\lambda_1} \cdot \errordaviskahan
        \numberthis \label{eq:uspansolution_usemigeigen_l2}
    \end{align*}
    
    Therefore Term (II), i.e., $\|\uspansolution - \uapproxsolution\|_{\l2}$
    can be upper bounded by 
    \begin{equation}
        \label{eq:Term_II_error}
        \|\uspansolution - \uapproxsolution\|_{\l2} 
        \leq \|\uspansolution - \usemieigen\|_{\l2} + \|\usemieigen - \uapproxsolution\|_{\l2}
        \leq \frac{\hat{\epsilon}_f}{\lambda_1} + \delta \|\uapproxsolution\|_{\l2}
    \end{equation}
    
    Putting everything together, we can upper bound \eqref{eq:error_analysis_ustar_uapproxsolution_main} as
    \begin{align*}
        \|\ustar - \uapproxsolution\|_{\l2} 
        &\leq \|u^\star - \uspansolution\|_{\l2}
        + \|\uspansolution - \uapproxsolution\|_{\l2} \\
        &\leq \frac{\epsilon_{\spn}}{\lambda_1}
          + \frac{\delta}{\lambda_1} \errordaviskahan
          + \delta \|\uapproxsolution\|_{\l2}
    \end{align*}
    where 
    $\gamma = \gammavalue$ 
    and 
    $\delta =\deltavalue$.
\end{proof}

\subsection{Proof of Lemma~\ref{lemma:error_fneural_network}}
\label{section:proof_of_lemma_error_fneural_network}
\begin{proof}
    We define $r = \fkeigenapprox - \fnn$, therefore from Lemma~\ref{lamma:fnn_feigenapprox_close}
    we have that for any multi-index $\alpha$, 
    $$\|\tl^{t}r\|_{\l2} \leq (t!)^2 \cdot C^t \left(\epsilon_{\nn} + \epsilon_{\spn}\right) 
    + \constantnumber
    \left(1 + \constantdeltagamma\right) \lambda_k^t \|\fkeigen\|_{\l2}.$$
    
    For every $t \in \mathbb{N}$, we will write $u_t = \hat{u}_t + r_t$, s.t. $\hat{u}_t$ is a neural network and we (iteratively) bound $\|r_t\|_{\l2}$. 
    Precisely, we define a sequence of neural networks $\{\hat{u}_t\}_{t=0}^{\infty}$, s.t. 
    $$
    \begin{cases}
      \hat{u}_0 = u_0, \\
      \hat{u}_{t+1} = \hat{u}_t - \eta\left(\tl\hat{u}_t - \fnn\right)
    \end{cases}
    $$
    Since $r_t = u_t - \hat{u}_t$, we can define a corresponding recurrence for $r_t$: 
    $$
    \begin{cases}
      r_0 = 0, \\
      r_{t+1} = (I - \eta \tl) r_t - r
    \end{cases}
    $$
    
    Unfolding the recurrence, we get
    \begin{align*}
        r_{t+1} &= \sum_{i=0}^{t} (I - \eta \tl)^{i}r 
        \numberthis \label{eq:u_tp1_final}
    \end{align*}
    Using the binomial expansion we can write:
    \begin{align*}
        (I - \eta \tl)^{t}r &= \sum_{i=0}^{t} \binom{t}{i}(-1)^i(\eta \tl)^{i}r \\
        \implies \|(I - \eta \tl)^{(t)}r\|_{\l2} 
        &= \left\|\sum_{i=0}^t \binom{t}{i} (-1)^i (\eta \tl)^{i}r\right\|_{\l2} \\
        &\leq \sum_{i=0}^t \binom{t}{i} \eta^i\|\tl^{i}r\|_{\l2}\\
        &\leq 
        \sum_{i=0}^t \left(\frac{t e}{i}\right)^i \eta^i\|\tl^{i}r\|_{\l2}
        \qquad \because \text{$\binom{t}{i} \leq \left(\frac{te}{i}\right)^i$}
        \\
        &\stackrel{(1)}{\leq} 
        \sum_{i=0}^t \left(\frac{te}{i} \eta\right)^i
            \left( (i!)^2 C^i \left(\epsilon_{\nn} + \epsilon_{\spn}\right) + 
            \constantnumber\left(1 + \constantdeltagamma\right)\lambda_k^i \|\fkeigen\|_{\l2}
            \right)
            \\
        &\leq \sum_{i=0}^t \left(\frac{te}{i} \eta\right)^i
            (i!)^2 C^i
            \left( \left(\epsilon_{\nn} + \epsilon_{\spn}\right) 
            + \constantnumber\left(1 + \constantdeltagamma\right)\frac{\lambda_k^i}{(i!)^2 C^i} \|\fkeigen\|_{\l2}\right)\\
        &\stackrel{(2)}{\leq} 
        \sum_{i=0}^t \left(\frac{te}{i} \eta i^2 C\right)^i  
            \left( \left(\epsilon_{\nn} + \epsilon_{\spn}\right) 
            + \constantnumber\left(1 + \constantdeltagamma\right)\frac{\lambda_k^i}{(i!)^2 C^i} \|\fkeigen\|_{\l2}\right)
            \\
        &\stackrel{(3)}{\leq} 
        \sum_{i=0}^t \left(\frac{te}{i} \eta i^2 C\right)^i  
            \left( \left(\epsilon_{\nn} + \epsilon_{\spn}\right) 
            + \constantnumber\left(1 + \constantdeltagamma\right)\lambda_k^i \|\fkeigen\|_{\l2}\right)
            \\
        &\leq \sum_{i=0}^t (ti e\eta C)^i  
            \left( \left(\epsilon_{\nn} + \epsilon_{\spn}\right) 
            + \constantnumber\left(1 + \constantdeltagamma\right)\lambda_k^i \|\fkeigen\|_{\l2}\right)
            \\
        &\leq t\max\{1,(t^2e\eta C)^t\}
            \left( \left(\epsilon_{\nn} + \epsilon_{\spn}\right) 
            + \constantnumber\left(1 + \constantdeltagamma\right)\lambda_k^t \|\fkeigen\|_{\l2}\right)
    \end{align*}
    Here the inequality $(1)$ follows by using the bound derived in Lemma~\ref{lamma:fnn_feigenapprox_close}.
    Further, we use that all $i \in \N$ we have $i! \leq i^i$ in $(2)$ and the inequality $(3)$ 
    follows from the fact that $\frac{1}{(i!)^2C^i} \leq 1$. 
    
    Hence we have the the final upper bound:
    \begin{align*}
        \|r_{t}\|_{\l2} 
        \leq t^2 \max\{1, (t^2e\eta C)^t\} \left(\epsilon_{\nn} + \epsilon_{\spn} 
        + \constantnumber \left(1 + \constantdeltagamma\right)\lambda_k^t \|\fkeigen\|_{\l2}\right)
    \end{align*}
\end{proof}

\section{Technical Lemmas: Perturbation Bounds}
In this section we introduce some useful lemmas about perturbation bounds used in the preceding parts of the appendix. 

First we show a lemma that's ostensibly an 
application of Davis-Kahan to the (bounded) operators $L^{-1}$ and $\tl^{-1}$. 

\begin{lemma}[Subspace alignment]
    \label{lemma:perturbation_bound}
    Consider linear elliptic operators $L$ and $\tl$ 
    with eigenvalues 
    $\lambda_1 \leq \lambda_2 \leq \cdots$ and 
    $\lambda_1 \leq \lambda_2 \leq \cdots$ respectively. 
    Assume that $\gamma:= \frac{1}{\lambda_k} - \frac{1}{\lambda_{k+1}} > 0$. 
    For any function $g \in \h$, 
    we define
    $P_k g := \sum_{i=1}^k \langle g, \varphi_i\rangle_{\l2} \varphi_i$
    and 
    $\tilde{P}_k g := \sum_{i=1}^k \langle g, \tvarphi_i\rangle_{\l2} \tvarphi_i $
    as the projection of 
    $g$ onto $\Keigenfunctions$ and $\Keigenfunctionsapprox$,
    respectively. Then we have:
    \begin{equation}
        \|P_k g - \tilde{P}_k g \|_{\l2} \leq \frac{\delta}{\gamma - \delta}\|g\|_{\l2}
    \end{equation}
    where $\delta = \deltavalue$.
\end{lemma}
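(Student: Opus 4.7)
The plan is to apply the Davis--Kahan $\sin\Theta$ theorem to the bounded, self-adjoint, compact operators $L^{-1}$ and $\tl^{-1}$ acting on $\l2$. Since $L$ and $\tl$ are uniformly elliptic with compact inverses, their spectra are discrete; moreover, the eigenfunctions $\{\varphi_i\}$ of $L$ coincide with those of $L^{-1}$ but with reciprocal eigenvalues $\{1/\lambda_i\}$, and similarly for $\tl$. Consequently, $P_k$ and $\tilde P_k$ are exactly the spectral projections of $L^{-1}$ and $\tl^{-1}$ onto their top-$k$ eigenvalues, and the relevant spectral gap for $L^{-1}$ is $\gamma = 1/\lambda_k - 1/\lambda_{k+1}$, which is assumed positive.

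The standard Davis--Kahan $\sin\Theta$ bound for two bounded self-adjoint operators gives
\begin{equation*}
\|P_k - \tilde P_k\|_{\op} \;\le\; \frac{\|L^{-1} - \tl^{-1}\|_{\op}}{\gamma - \|L^{-1} - \tl^{-1}\|_{\op}},
\end{equation*}
whenever the denominator is positive. Since $\|P_k g - \tilde P_k g\|_{\l2} \le \|P_k - \tilde P_k\|_{\op}\|g\|_{\l2}$ for any $g \in \l2$, the lemma reduces to showing that $\|L^{-1} - \tl^{-1}\|_{\op} \le \delta$ as operators on $\l2$, after which the conclusion $\frac{\delta}{\gamma - \delta}\|g\|_{\l2}$ is immediate.

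To obtain the operator-norm bound I would exploit Lemma~\ref{lemma:L_tl_inner_product_close}(2), which states $\langle (L^{-1}\tl - I)u, u\rangle_{\l2} \le \delta\|u\|_{\l2}^2$ for $u\in\h$. The idea is to convert this into a quadratic form bound on $L^{-1} - \tl^{-1}$ directly: substituting $u = \tl^{-1/2} g$ (using that $\tl$ is positive self-adjoint and so admits a square root) and applying the resolvent identity $L^{-1} - \tl^{-1} = \tl^{-1}(\tl - L)L^{-1}$ together with self-adjointness of $L$ and $\tl$ should yield $|\langle (L^{-1} - \tl^{-1})g, g\rangle_{\l2}| \le \delta\|g\|_{\l2}^2$. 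Because $L^{-1} - \tl^{-1}$ is self-adjoint, its $\l2$ operator norm equals the supremum of this quadratic form, giving exactly $\|L^{-1} - \tl^{-1}\|_{\op} \le \delta$.

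The main obstacle I anticipate is this last conversion step. Lemma~\ref{lemma:L_tl_inner_product_close}(2) is phrased in terms of the composition $L^{-1}\tl - I$ rather than the difference of inverses, and a naive substitution such as $u = \tl^{-1} g$ produces $\langle (L^{-1} - \tl^{-1})g, \tl^{-1}g\rangle \le \delta\|\tl^{-1} g\|^2$, which has the ``wrong'' weighting and, if handled carelessly, leaks factors of $1/\lambda_1$ from repeated applications of $\tl^{-1}$. The careful maneuver is to symmetrize (deriving the companion inequality with the roles of $L$ and $\tl$ swapped, which follows from the same proof as Lemma~\ref{lemma:L_tl_inner_product_close}) and pass through $\tl^{-1/2}$, so that the quadratic form lands cleanly on $\|g\|_{\l2}^2$ with constant exactly $\delta$. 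Once this is done, invoking Davis--Kahan is a black box and the lemma follows.
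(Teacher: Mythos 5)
Your plan is essentially the paper's own proof: apply Davis--Kahan to the compact self-adjoint operators $L^{-1}$ and $\tl^{-1}$, reduce to showing $\|L^{-1}-\tl^{-1}\|_{\op}\le\delta$, and extract that operator-norm bound from Lemma~\ref{lemma:L_tl_inner_product_close}(2); the paper additionally makes the Weyl-inequality control on the perturbed eigengap explicit, which you folded into the Davis--Kahan statement, but the structure is identical.

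One caution is worth recording, because you flagged it yourself but then asserted it "should" resolve cleanly: the $\tl^{-1/2}$ symmetrization does not make the $1/\lambda_1$ factor vanish. What Lemma~\ref{lemma:L_tl_inner_product_close} gives you as a quadratic-form statement is $(1-\delta)L \preceq \tl \preceq (1+\delta)L$, hence $\frac{1}{1+\delta}L^{-1} \preceq \tl^{-1} \preceq \frac{1}{1-\delta}L^{-1}$, and the cleanest norm bound you can read off is $\|L^{-1}-\tl^{-1}\|_{\op}\le \frac{\delta}{1-\delta}\|L^{-1}\|_{\op}$, which carries a $1/\lambda_1$. The paper's own passage from $\langle (L^{-1}\tl - I)u,u\rangle \le \delta\|u\|^2$ to $\sup_v\langle (L^{-1}-\tl^{-1})v,v\rangle/\|v\|^2 \le \delta$ (after setting $v=\tl u$) elides exactly this mismatch, since after substitution the right-hand side is $\|\tl^{-1}v\|^2$, not $\|v\|^2$, and $L^{-1}\tl$ is not even $\l2$-self-adjoint so its quadratic form does not control its norm directly. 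So you have reproduced the paper's intended argument faithfully, including the same soft spot; your instinct that the conversion is the crux was correct, and a fully rigorous version would either absorb a $1/\lambda_1$ into $\delta$ or normalize the problem so that $\lambda_1 \gtrsim 1$.
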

\begin{proof}
    We begin the proof by first showing that the inverse of the operators
    $L$ and $\tl$ are close.
    Using the result from Lemma~\ref{lemma:L_tl_inner_product_close}
    with 
    $\delta = \deltavalue$, 
    we have:
    \begin{align*}
        \langle (L^{-1}\tl - I)u, u\rangle_{\l2} \leq \delta\|u\|_{\l2}^2 \\
        \implies \langle (L^{-1} - \tl^{-1})\tl u, u\rangle_{\l2} \leq \delta\|u\|_{\l2}^2 \\
        \implies \langle (L^{-1} - \tl^{-1})v, u\rangle_{\l2} \leq \delta\|u\|_{\l2}^2 
    \end{align*}
    Now, the operator norm $\|L^{-1} - \tl^{-1}\|$ can be written as,
    \begin{equation}
        \label{eq:operator_norm_L_tilde_L_inverse}
        \|L^{-1} - \tl^{-1}\| 
        = \sup_{v \in \h} \frac{\langle (L^{-1} - \tl^{-1})v, v\rangle_{\l2}}{\|v\|^2_{\l2}}
        \leq \delta
    \end{equation}
    Further note that, $\{\frac{1}{\lambda_i}\}_{i=1}^\infty$ and 
    $\{\frac{1}{\tlambda_i}\}_{i=1}^\infty$ are the eigenvalues of the operators 
    $L^{-1}$ and $\tl^{-1}$, respectively. 
    Therefore from \textit{Weyl's Inequality} 
    and \eqref{eq:operator_norm_L_tilde_L_inverse}
    we have:
    \begin{equation}
        \label{eq:weyls_inequality}
        \sup_{i} \left|\frac{1}{\lambda_i} - \frac{1}{\tlambda_i}\right| \leq \|L^{-1} - \tl^{-1}\| \leq \delta
    \end{equation}
    Therefore, for all $i \in \N$, 
    we have that 
    $\frac{1}{\tlambda_i} \in [\frac{1}{\lambda_i} - \delta, \frac{1}{\lambda_i} + \delta]$, i.e., 
    all the eigenvalues of $\tl^{-1}$
    are within $\delta$ of the eigenvalue
    of $L^{-1}$.
    which therefore implies that the difference between $k^{th}$
    eigenvalues is,
    \begin{equation*}
        \frac{1}{\tlambda_{k}} - \frac{1}{\lambda_{k+1}} 
        \geq \frac{1}{\lambda_{k}} - \frac{1}{\lambda_{k+1}} - \delta
    \end{equation*}
    Since the operators $L^{-1}$, $\tilde{L}^{-1}$ are bounded, 
    the Davis-Kahan $\sin \Theta$ theorem~\citep{davis1970rotation} 
    can be used to conclude that:
    \begin{equation}
        \label{eq:davis_kahan_form_1}
        \|\sin \Theta(\Keigenfunctions, \Keigenfunctionsapprox)\| = 
        \|P_k - \tilde{P}_{k}\| \leq  \frac{\|L^{-1} - \tl^{-1}\|}{\gamma - \delta} \leq \frac{\delta}{\gamma - \delta}
    \end{equation}
    where $\| \cdot \|$ is understood to be the operator norm, 
    and
    $\gamma = \gammavalue$.
    Therefore for any function $g \in \h$
    we have
    \begin{align*}
        \|P_k g - \tilde{P}_k g\|_{\l2} 
        &\leq \|P_k - \tilde{P}_k \|\|g\|_{\l2} \\
        &\leq \frac{ \|L^{-1} - \tl^{-1}\|}{\gamma - \delta}\|g\|_{\l2}
    \end{align*}
    By \eqref{eq:davis_kahan_form_1}, we then get
    $\|P_k g - \tilde{P}_k g\|_{\l2} 
        \leq \frac{\delta}{\gamma - \delta} \|g\|_{\l2}$, which finishes the proof. 
\end{proof}


Finally, we show that repeated applications of $\tl$ to $\fnn-f$ have also bounded norms: 
\begin{lemma}
[Bounding norms of applications of $\tl$]
\label{lamma:fnn_feigenapprox_close}
The functions $\fnn$ and $f$
satisfy:
\begin{enumerate}
    \item $\|\tl^{n}(\fnn - \fkeigen)\|_{\l2} \leq (n!)^2 \cdot C^n (\epsilon_{\spn} + \epsilon_{\nn})$
    \item $\|\tl^{n}(\fnn - \fkeigenapprox)\|_{\l2} \leq (n!)^2 \cdot C^n (\epsilon_{\spn} + \epsilon_{\nn}) 
        + \constantnumber\left(1 + \constantdeltagamma \right)\lambda_k^n\|\errorf\|_{\l2}$
\end{enumerate}
where
$\delta = \deltavalue$.
\end{lemma}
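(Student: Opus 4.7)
The plan is to treat the two inequalities separately. Part (1) reduces to an iterated Leibniz-style expansion of the differential operator $\tl^n$ combined with the derivative-closeness of $\fnn$ and $\fkeigen$ to $f$ from assumptions (i) and (iii); part (2) then reduces to part (1) plus a spectral argument using the subspace-alignment bound of Lemma~\ref{lemma:perturbation_bound}.

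For part (1), I would start from the expansion
\begin{equation*}
\tl g \;=\; -\sum_{i,j=1}^d \tilde{a}_{ij}\,\partial_{ij} g \;-\; \sum_{j=1}^d \Bigl(\sum_{i=1}^d \partial_i \tilde{a}_{ij}\Bigr)\partial_j g \;+\; \tilde{c}\,g,
\end{equation*}
which writes $\tl g$ as a sum of at most $2d^2+1$ summands, each a product of a bounded coefficient with a derivative of $g$ of order at most $2$. Iterating $n$ times and applying the Leibniz product rule at each step, $\tl^n g$ becomes a linear combination of terms of the form $\beta_\alpha \partial^\alpha g$ with $|\alpha|\le 2n$; a combinatorial count gives at most $(n!)^2$ such terms (one factorial from ordering the $n$ applications of $\tl$, another from Leibniz expansions of products), each with $\linfty$-coefficient bounded by $C^n$. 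Applied to $g = \fnn - \fkeigen$, the triangle inequality combined with assumptions (i) and (iii) yields $\|\partial^\alpha g\|_{\l2} \le \epsilon_{\spn} + \epsilon_{\nn}$ for every multi-index $\alpha$, which together with the term count gives the stated bound $(n!)^2 C^n(\epsilon_{\spn} + \epsilon_{\nn})$.

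For part (2), I would use the decomposition $\fnn - \fkeigenapprox = (\fnn - \fkeigen) + (\fkeigen - \fkeigenapprox)$ and the triangle inequality, so that part (1) handles the first term. For the second, I would write $\tl^n(\fkeigen - \fkeigenapprox) = \tl^n \fkeigen - \tl^n \fkeigenapprox$ and control each piece separately. Since $\fkeigenapprox \in \Keigenfunctionsapprox$ and $\tl$ preserves this subspace with eigenvalues at most $\tlambda_k$ (the $\tl$-analogue of Lemma~\ref{lemma:bounded_operator_norm}), $\|\tl^n \fkeigenapprox\|_{\l2} \le \tlambda_k^n \|f\|_{\l2}$, and $\tlambda_k$ is within a multiplicative $(1+O(\delta))$ factor of $\lambda_k$ by the Weyl-type inequality $|1/\lambda_k - 1/\tlambda_k|\le \delta$ used in the proof of Lemma~\ref{lemma:perturbation_bound}. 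For $\tl^n \fkeigen$, the analogous trick is to compare to $L^n \fkeigen$, which lies in the $L$-invariant subspace $\Keigenfunctionsmath$ and satisfies $\|L^n \fkeigen\|_{\l2} \le \lambda_k^n\|f\|_{\l2}$, while the difference $\|(\tl^n - L^n)\fkeigen\|_{\l2}$ is controlled by telescoping $\tl^n - L^n = \sum_{i=0}^{n-1} \tl^i(\tl - L)L^{n-1-i}$ together with the relative operator bound of Lemma~\ref{lemma:L_tl_inner_product_close} and the subspace misalignment bound $\|(P_k - \tilde{P}_k)f\|_{\l2}\le \frac{\delta}{\gamma-\delta}\|f\|_{\l2}$ from Lemma~\ref{lemma:perturbation_bound}. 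Collecting the four resulting pieces produces the stated $4(1 + \frac{\delta}{\gamma - \delta})\lambda_k^n\|f\|_{\l2}$ term.

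The main obstacle I anticipate is precisely in bounding $\tl^n$ on a function that lies in $\Keigenfunctionsmath$ but not in $\Keigenfunctionsapprox$: $\tl$ does not preserve $\Keigenfunctionsmath$, so naive iteration could amplify high-frequency components far beyond $\tlambda_k^n$, and Lemma~\ref{lemma:L_tl_inner_product_close} only supplies a relative bound in terms of bilinear forms rather than operator norms. Converting this into an $\l2$ bound on $\|(\tl^n - L^n)\fkeigen\|_{\l2}$ along the telescoping sum requires carefully exploiting ellipticity (to control $Lu$ in terms of $u$ inside $\Keigenfunctionsmath$) and the smoothness of $\fkeigen$, together with the eigenvalue-gap $\gamma$ appearing in the Davis--Kahan estimate, in order to close the recursion with the claimed constants.
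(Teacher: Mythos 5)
Your part (1) is essentially the paper's proof: the paper proves a general bound $\|\tl^n g\|_{\l2} \leq (n!)^2 C^n \max_{|\alpha|\leq n+2}\|\partial^\alpha g\|_{\l2}$ (Lemma~\ref{lemma:upper_bound_order_n}, itself via an operator chain-rule lemma), then applies it to $g=\fnn - \fkeigen$ with the assumed uniform derivative bound. Your combinatorial count is somewhat informal about where exactly $(n!)^2$ comes from, but the approach is the same.

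Part (2) has a real gap, and it is exactly the one you flag in your last paragraph without resolving. You decompose $\fnn-\fkeigenapprox = (\fnn-\fkeigen) + (\fkeigen-\fkeigenapprox)$ as the paper does, but then try to control $\tl^n\fkeigen$ via the telescoping identity $\tl^n - L^n = \sum_{i} \tl^i(\tl-L)L^{n-1-i}$. This fails: each summand has $(\tl-L)$ applied to a function in $\Keigenfunctions$, and while $\langle(\tl-L)u,u\rangle\le\delta\langle Lu,u\rangle$ holds, this is a quadratic-form bound relative to the unbounded operator $L$, not an $\l2$ operator norm bound on $\tl-L$; in particular $(\tl-L)w$ for $w\in\Keigenfunctions$ need not lie in $\Keigenfunctions$, $\Keigenfunctionsapprox$, or any other subspace on which $\tl$ is controlled, so the subsequent $\tl^i$ is genuinely uncontrolled. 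The eigenvalue gap $\gamma$ does not help here either, since that only governs the alignment of the two eigenspaces and not how $\tl$ acts outside $\Keigenfunctionsapprox$. The missing idea is the paper's ``peeling'' lemma (Lemma~\ref{lemma:peeling_lemma}): writing $\tl=L(I+\Sigma)$ with $\|\Sigma\|\le\delta$ (valid by Lemma~\ref{lemma:L_tl_inner_product_close} part (2), since $\Sigma = L^{-1}\tl - I$ is a \emph{bounded} operator), one expands $L^{-n}\tl^n$ into $2^n$ words and uses the similarity trick $\|L^{-n}\circ A\| = \|L^{-n}\circ A\circ L\circ L^{-1}\|$ to peel factors of $\Sigma$ one at a time, obtaining $\|L^{-n}\tl^n\|\leq(1+\delta)^n\leq 1+2n\delta$ under the constraint $n\delta\leq 1/10$. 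This yields the multiplicative factorization $\tl^n=(I+\tilde{\Sigma})L^n$ with $\|\tilde{\Sigma}\|\leq 2n\delta$, and symmetrically $L^n=(I+\hat{\Sigma})\tl^n$. The paper then never bounds $\tl^n\fkeigen$ on its own; it always pairs $L^n$ with arguments in $\Keigenfunctions$ (or small differences from Lemma~\ref{lemma:fkeigen_fkeigenapprox_davis_kahan}) and $\tl^n$ with arguments in $\Keigenfunctionsapprox$, converting between the two using $\tilde{\Sigma},\hat{\Sigma}$. Without that factorization, your argument cannot close, and the constraint $T\lesssim 1/(\delta\min(\lambda_k,1))$ in Theorem~\ref{thm:main_result}, which is what keeps $n\delta$ small enough for the exponential $(1+\delta)^n$ to stay bounded, never enters your proposal.
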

\begin{proof}
    For Part 1, by  Lemma~\ref{lemma:upper_bound_order_n} we have that 
    \begin{equation}
        \label{eq:lemma_15_answer_fnn_fkeigen}
        \|\tl^{n}(\fnn - \fkeigen)\|_{\l2} \leq (n!)^2 \cdot C^n \max_{\alpha:|\alpha|\leq n+2} \|\partial^\alpha (\fnn - \fkeigen)\|_{\l2}
    \end{equation}
    From Assumptions \numassumptions, for any multi-index $\alpha$ we have:
    \begin{align*}
        \|\partial^\alpha \fnn - \partial^\alpha\fkeigen\|_{\l2} 
        &\leq \|\partial^\alpha\fnn - \partial^\alpha f\|_{\l2} 
        + \|\partial^\alpha f - \partial^\alpha \fkeigen\|_{\l2} \\
        &\leq \epsilon_{\nn} + \epsilon_{\spn} \numberthis \label{eq:partial_alpha_fnn_fkeigen}
    \end{align*}
    Combining \eqref{eq:lemma_15_answer_fnn_fkeigen} and \eqref{eq:partial_alpha_fnn_fkeigen} we get
    the result for Part 1.
    
    For Part 2 we have,
    \begin{align}
        \|\tl^{n}(\fkeigenapprox - \fnn)\|_{\l2} 
            &= \|\tl^{n}(\fkeigenapprox - \fkeigen + \fkeigen - \fnn)\|_{\l2}\\
            &\leq  \|\tl^{n}(\fkeigenapprox - \fkeigen )\|_{\l2} 
                + \|\tl^{n}\left(\fkeigen - \fnn\right)\|_{\l2}  \label{eq:twoterp2}
    \end{align}
    Note that from Lemma~\ref{lemma:L_tl_inner_product_close} part (2.) we have
    that $\|L^{-1} \tl - I\|\leq \delta$ (where $\|\cdot\|$ denotes the operator norm). 
    This implies that there exists an operator $\Sigma$, 
    such that $\|\Sigma\| \leq \delta$ and we can express $\tl$ as:
    \begin{align*}
        \tl = L (I + \Sigma)
    \end{align*}
    
    We will show that there exists a $\tilde{\Sigma}$, s.t. 
    $\|\tilde{\Sigma}\| \leq n\errorLntildeLn$
    and $\tl^{n} = (I + \tilde{\Sigma})L^{n}$.
    Towards that, we will denote
    $L^{-n} := \underbrace{L^{-1} \circ L^{-1} \circ \cdots L^{-1}}_{\text{n times}}$ 
    and show that 
    \begin{equation}
        \left\|L^{-n}\tl^{n}\right\| \leq  1 + n \errorLntildeLn
        \label{eq:nthpower}
    \end{equation}    
    We have:
    
    \begin{align*}
        \left\|L^{-n}\tl^{n}\right\|
        &= \left\| L^{-n} \left(L(I + \Sigma)\right)^{n}\right\|\\
        &= \left\|L^{-n}\left(L^{n} 
        + \sum_{j=1}^n L^{j-1} \circ (L \circ \Sigma) \circ
            L^{n-j} + \cdots + (L\circ\Sigma)^{n}
            \right) \right\| \\
        &= \left\|I + \sum_{j=1}^n L^{-n} \circ L^{j-1} \circ \Sigma \circ
            L^{n-j} + \cdots + 
            L^{-n} \circ (L\circ\Sigma)^{(n)}
             \right\| \\
        &\stackrel{(1)}{\leq}
        1 + \left\|\sum_{j=1}^n L^{-n} \circ L^{j-1} \circ \Sigma \circ
            L^{n-j}\right\| + \cdots  + \|L^{-n} \circ (L\circ\Sigma)^{n}\|\\
        &\stackrel{(2)}{\leq}
        1 + \sum_{i=1}^n \binom{n}{i} \delta^i \\
        &= (1 + \errorLntildeLnold)^n \\
        &\stackrel{(3)}{\leq}
        e^{n \errorLntildeLnold} \\
        &\leq 1 + 2 n \errorLntildeLnold
    \end{align*}
    where (1) follows from triangle inequality,
    (2) follows from Lemma \ref{lemma:peeling_lemma},
    (3) follows from $1+x \leq e^x$,
    and the last part follows from $n\delta \leq 1/10$ and Taylor expanding $e^x$.
    Next, since $L$ and $\tl$ are elliptic operators, we have
    $\|L^{-n}\tl^{n}\| = \|\tl^{n}L^{-n}\|$. From this, it immediately follows that there exists a $\tilde{\Sigma}$, 
    s.t. $\tl^{n} = (I + \tilde{\Sigma})L^{n}$ with  $\|\tilde{\Sigma}\| \leq n\errorLntildeLn$. 
    
    Plugging this into the first term of \eqref{eq:twoterp2}, we have 
    \begin{align*}
        \|\tl^{n}(\fkeigenapprox - \fkeigen)\|_{\l2} 
        &= \|\tl^{n}\fkeigenapprox - \tl^{n}\fkeigen\|_{\l2} \\
        &= \|\tl^{n}\fkeigenapprox - (I + \tilde{\Sigma})L^{n}\fkeigen\|_{\l2} \\
        &\leq \|\tl^{n}\fkeigenapprox - L^{n}\fkeigen\|_{\l2} 
         + \|\tilde{\Sigma} L^{n} \fkeigen\|_{\l2}\\
          &\leq \|\tl^{n}\fkeigenapprox - L^{n}\fkeigen\|_{\l2} 
         + \|\tilde{\Sigma}\|\|L^{n} \fkeigen\|_{\l2}\\ 
        &\leq \|\tl^{n}\fkeigenapprox - L^{n}\fkeigen\|_{\l2}
          + n\errorLntildeLn\lambda_k^{n}\|\fkeigen\|_{\l2} 
        \numberthis \label{eq:step_1_properties_lemma}
    \end{align*}
    
    The first term in first term in \eqref{eq:step_1_properties_lemma} can be expanded as follows:
    \begin{align*}
    \|\tl^{n}\fkeigenapprox - L^{n}\fkeigen\|_{\l2} &=  \|\tl^{n}\fkeigenapprox - L^n \fkeigenapprox +  L^n \fkeigenapprox+ L^{n}\fkeigen\|_{\l2} \\
    &\leq \|\tl^{n}\fkeigenapprox - L^n \fkeigenapprox\| + \|L^n \fkeigenapprox- L^{n}\fkeigen\|_{\l2} 
    \numberthis \label{eq:step_2_properties_lemma:triangle_ineq}
    \end{align*}
    We'll consider the two terms in turn. 
    
    For the first term, the same proof as that of \eqref{eq:nthpower} shows that there exists an operator $\hat{\Sigma}$, s.t.  $\|\hat{\Sigma}\|\leq 2n\delta$ and 
    $L^n=(I + \hat{\Sigma})\tl^n$. Hence, we have:
    \begin{align*}
    \|\tl^{n}\fkeigenapprox - L^n \fkeigenapprox\| &= \|\tl^{n}\fkeigenapprox - (I+\hat{\Sigma}) \tl^n \fkeigenapprox\|  \\
    &= \|\hat{\Sigma} \tl^n \fkeigenapprox\| \\
    &\leq  \tilde{\lambda}_{k}^n \|\hat{\Sigma}\| \|\fkeigenapprox\|_{\l2} \\
    &\leq  2n\delta \tilde{\lambda}_{k}^n \|f\|_{\l2}  \qquad (\because \|\fkeigenapprox\|_{\l2} \leq \|f\|_{\l2})
    \numberthis \label{eq:step_2_1}
    \end{align*}

    For the second term in \eqref{eq:step_1_properties_lemma} we have: 
    \begin{align}
        \label{eq:Ln_fkeigenapprox_fkeigen_1}
        \|L^n (\fkeigenapprox-\fkeigen)\|_{\l2} 
        &\leq \sup_{v: v = v_1 - v_2, v_1 \in \Phi_k, v_2 \in \tilde{\Phi}_k} \frac{\|L^n v \|_{\l2}}{\|v\|_{\l2}}   \| \fkeigenapprox-\fkeigen\|_{\l2}
    \end{align}
    To bound the first factor we have:
    \begin{align*}
        \|L^n v \|_{\l2} &= \|L^n (v_1-v_2) \|_{\l2}\\
        &\leq  \|L^n v_1\|_{\l2} + \|L^n v_2\|_{\l2} \\ &= \|L^n v_1\|_{\l2} + \|(I+\hat{\Sigma}) \tl^n v_2\|_{\l2} \\ 
        &\leq \lambda_k^n\|v_1\|_{\l2} +  \tlambda^n_{k} \|I+\hat{\Sigma}\|_2  \|v_2\|_{\l2} \\
        &\leq (\lambda_k^n + \tlambda^n_{k} (1+2n\delta))\|v\|_{\l2}
    \end{align*}
    where we use the fact that $\|v_1\|_{\l2}, \|v_2\|_{\l2} \leq \|v\|_{\l2}$ and $\|\hat{\Sigma}\|\leq 2n\delta$.
    Hence, we can bound 
    \begin{equation}
        \label{eq:Ln_fkeigenapprox_fkeigen_2}
        \sup_{v: v = v_1 - v_2, v_1 \in \Phi_k, v_2 \in \tilde{\Phi}_k} \frac{\|L^n v \|_{\l2}}{\|v\|_{\l2}}
        \leq (\lambda^n_{k} + \tlambda^n_{k} (1+2n\delta) )
    \end{equation}
    From \eqref{eq:Ln_fkeigenapprox_fkeigen_2} and 
    Lemma~\ref{lemma:fkeigen_fkeigenapprox_davis_kahan} we have:
    \begin{align*}
        \|L^n (\fkeigenapprox-\fkeigen)\|_{\l2}  
        &\leq \sup_{v: v = v_1 - v_2, v_1 \in \Phi_k, v_2 \in \tilde{\Phi}_k} \frac{\|L^n v \|_{\l2}}{\|v\|_{\l2}}   \| \fkeigenapprox-\fkeigen\|_{\l2} \\
        &\leq (\lambda^n_{k} + \tlambda^n_{k}(1+2n\delta) )\constantdeltagamma\|f\|_{\l2}
        \numberthis \label{eq:step_2_2}
    \end{align*}
    Therefore from \eqref{eq:step_2_1} and \eqref{eq:step_2_2}, 
    we can upper bound $\|\tl^n(\fkeigenapprox - \fkeigen)\|_{\l2}$ using 
    \eqref{eq:step_1_properties_lemma} as follows:
    \begin{align*}
        \|\tl^n(\fkeigenapprox - \fkeigen)\|_{\l2} 
        &\leq
        \|\tl^n\fkeigenapprox - L^n\fkeigen\|_{\l2}  + 2n\delta \lambda_k^n\|f\|_{\l2}\\
            &\leq 2n\delta\tlambda_k^n  \|f\|_{\l2} 
                + (\lambda_k^n + \tilde{\lambda}^n_k (1+2n\delta) )\constantdeltagamma \|f\|_{\l2} 
                + 2n\delta \lambda_k^n\|f\|_{\l2}\\
            &\stackrel{(i)}{\leq} (1+2n\delta\lambda_k)\left(1
                + (1+2n\delta)\right)\constantdeltagamma \lambda_k^n\|f\|_{\l2}
                + 2n\delta \lambda_k^n\|f\|_{\l2}\\
            &\stackrel{(ii)}{\leq} 4\left(1 + \constantdeltagamma\right) \lambda_k^n\|f\|_{\l2}
    \end{align*}
    Here in $(i)$ we use the result from Lemma~\ref{lemma:power_n_eigenvalue_relation} and write 
    $\frac{\tlambda^n_k}{\lambda_k^n} \leq 1 + 2n\delta\lambda_k$. 
    In $(iii)$, we use $n \leq T$ and the fact that $2T\min(1,\lambda_k)\delta \leq 1/10 \leq 1$

    Therefore, finally we have:
    $$\|\tl^{n}\fkeigenapprox - L^{n}\fkeigen\|_{\l2}
    \leq 
    4\left(\constantdeltagamma + 1\right)\lambda_k^n\|\errorf\|_{\l2} $$
    Combining with the result for Part 1,
    Therefore we have the following:
    \begin{align*}
        \|\tl^{n}(\fkeigenapprox - \fnn)\|_{\l2} 
            \leq (n!)^2 \cdot C^n(\epsilon_{\spn} + \epsilon_{\nn})
            + 4\left(1 + \constantdeltagamma \right)\lambda_k^n\|\errorf\|_{\l2}
    \end{align*}
\end{proof}

\section{Technical Lemmas: Manipulating Operators}
Before we state the lemmas we introduce some common notation used throughout this section.
We denote $L^{n} = \underbrace{L \circ L \circ \cdots \circ L}_{\text{n times}}$.
Further we use $L_k$ to denote the operator with $\partial_k a_{ij}$ for all $i, j \in [d]$
and $\partial_k c$ as coefficients, that is:
$$L_k u = \sum_{i,j=1}^d - \left(\partial_k a_{ij}\right) \partial_{ij} u 
        -\sum_{i,j=1}^d \partial_k\left(\partial_i a_i\right)\partial_j u + (\partial_k c)u$$
Similarly the operator $L_{kl}$ is defined as:
$$L_{kl} u = \sum_{i,j=1}^d - \left(\partial_{kl} a_{ij}\right) \partial_{ij} u 
        -\sum_{i,j=1}^d \partial_{kl}\left(\partial_i a_i\right)\partial_j u + (\partial_{kl} c)u$$

\begin{lemma}
    \label{lemma:power_n_eigenvalue_relation}
    Given $\varphi_i$ and $\tvarphi_i$ for all $i \in [k]$ are top $k$ 
    eigenvalues of operators $L$ and $\tl$ respectively, 
    such that $\|L^{-1} - \tl^{-1}\|$ is bounded. 
    Then for all $n \in \N$ we have that 
    $$\tlambda^n_i \leq (1 + \hat{e})\lambda^n_i$$
    where $ i\in [k]$ and $|\hat{e}| \leq 2n\delta \lambda_k$ and 
    $\delta = \deltavalue$.
\end{lemma}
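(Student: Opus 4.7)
The plan is to reduce the claim to elementary scalar inequalities after invoking the spectral information on the bounded operators $L^{-1}$ and $\tilde L^{-1}$. Since $L$ and $\tilde L$ are both elliptic and self-adjoint, their inverses are compact, self-adjoint operators whose eigenvalues are exactly $\{1/\lambda_i\}$ and $\{1/\tilde\lambda_i\}$ respectively. The hypothesis of the lemma (together with Lemma~\ref{lemma:L_tl_inner_product_close}(2), which is proved earlier in the excerpt) gives $\|L^{-1}-\tilde L^{-1}\|\leq \delta$ as an operator norm bound.

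First I would apply Weyl's inequality for bounded self-adjoint operators to conclude that, for every $i$, $|1/\lambda_i - 1/\tilde\lambda_i|\leq \delta$, exactly as is done in the proof of Lemma~\ref{lemma:perturbation_bound}. Rearranging the one-sided bound $1/\tilde\lambda_i \geq 1/\lambda_i - \delta$ gives $\tilde\lambda_i \leq \lambda_i/(1-\delta\lambda_i)$, valid because the regime $T\delta \leq \frac{1}{20\min(\lambda_k,1)}$ used throughout the paper ensures $\delta\lambda_i \leq \delta\lambda_k \leq 1/2$. Combining with the scalar inequality $1/(1-x)\leq 1+2x$ for $x\in[0,1/2]$ gives $\tilde\lambda_i \leq \lambda_i(1+2\delta\lambda_i)\leq \lambda_i(1+2\delta\lambda_k)$.

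Raising this to the $n$-th power yields
\begin{equation*}
\tilde\lambda_i^n \leq \lambda_i^n (1+2\delta\lambda_k)^n \leq \lambda_i^n\,e^{2n\delta\lambda_k}.
\end{equation*}
Under the smallness hypothesis $2n\delta\lambda_k \leq 1$ (which follows from $n\leq T$ and the global bound on $T$), the elementary estimate $e^y \leq 1+2y$ for $y\in[0,1]$ gives $e^{2n\delta\lambda_k}\leq 1+ 4n\delta\lambda_k$, so writing $\hat e := \tilde\lambda_i^n/\lambda_i^n - 1$ we get $|\hat e|\leq O(n\delta\lambda_k)$, which is the content of the claim (the precise constant $2$ versus $4$ depends on which scalar linearization is used, and the statement should be read as $|\hat e|\lesssim n\delta\lambda_k$; tightening to the literal constant $2$ just requires replacing $e^y\leq 1+2y$ with the tighter $e^y\leq 1+y+y^2$ together with the absorption $y^2 \leq y$ when $y\leq 1$).

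The only real obstacle is bookkeeping: we must verify that the regime $\delta\lambda_k, n\delta\lambda_k \leq 1$ is in force throughout (so that both linearization steps are legitimate) and that the one-sided direction of Weyl's inequality is the relevant one. There is no deep analytic content beyond Weyl's inequality applied to the bounded operator perturbation $L^{-1}-\tilde L^{-1}$; the rest is scalar calculus consistent with the paper's standing assumption $2T\min(1,\lambda_k)\delta \leq 1/10$ that is invoked at the end of Lemma~\ref{lamma:fnn_feigenapprox_close}.
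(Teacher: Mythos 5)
Your proof follows the same route as the paper: Weyl's inequality for the bounded self-adjoint perturbation $L^{-1}-\tl^{-1}$ gives $|1/\lambda_i - 1/\tlambda_i|\leq\delta$, then one converts to a multiplicative bound $\tlambda_i\leq(1+c\,\delta\lambda_i)\lambda_i$ and raises to the $n$-th power, invoking the smallness regime to linearize. The only real divergence is cosmetic: the paper carries out the $n$-th power step via the telescoping factorization $a^n - b^n = (a-b)\sum_j a^j b^{n-1-j}$ followed by $\sum_j(1+\tilde e_i)^j\leq n e^{n|\tilde e_i|}$, while you collapse directly to $(1+2\delta\lambda_k)^n\leq e^{2n\delta\lambda_k}$; both reduce to the same exponential linearization at the end. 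One point in your favor: you correctly use $\frac{1}{1-x}\leq 1+2x$ for $x\in[0,1/2]$, whereas the paper's intermediate line $\tlambda_i\leq(1+\delta\lambda_i)\lambda_i$ implicitly uses $\frac{1}{1-x}\leq 1+x$, which is false for $x>0$; this is why your final constant comes out as $4n\delta\lambda_k$ rather than the paper's stated $2n\delta\lambda_k$, and the discrepancy is on the paper's side, not yours. The order of magnitude $O(n\delta\lambda_k)$ is what is actually used downstream (in Lemma~\ref{lamma:fnn_feigenapprox_close}), so the constant does not matter, and your argument is sound.
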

\begin{proof}
    From \eqref{eq:operator_norm_L_tilde_L_inverse} and Weyl's inequality 
    we have for all $i \in \N$
    \begin{align*}
        \label{eq:weyls_inequality}
        \sup_{i} \left|\frac{1}{\lambda_i} - \frac{1}{\tlambda_i}\right| \leq \|L^{-1} - \tl^{-1}\| \leq \delta
    \end{align*}
    From this, we can conclude that:
    \begin{align*} &\left|\tlambda_i - \lambda_i \right| \leq \delta \lambda_i \tlambda_i\\
        \implies& \tlambda_i(1 - \delta \lambda_i) \leq \lambda_i \\
        \implies& \tlambda_i \leq \frac{\lambda_i}{(1 - \delta\lambda_i)}\\
        \implies& \tlambda_i \leq (1 + \delta \lambda_i)\lambda_i
    \end{align*}
    Writing $\tlambda_i = (1 + \tilde{e}_i)\lambda_i$ (where $\tilde{e}_i = \delta\lambda_i$), we have 
    \begin{align*}
        \left|\tlambda^n_i - \lambda_i^n\right| &= \left|((1 + \tilde{e}_i)\lambda_i)^n - \lambda_i^n\right| \\
        &= \left|\lambda_i^n((1+\tilde{e}_i)^n - 1)\right| \\ 
        &\stackrel{(1)}{\leq} \lambda_i^n |\tilde{e}|_i \left|\sum_{j=1}^n (1+\tilde{e}_i)^j\right|\\
        &\stackrel{(2)}{\leq} \lambda_i^n n |\tilde{e}_i| e^{n |\tilde{e}_i|}\\
        &\stackrel{(3)}{\leq}\lambda_i^n n |\tilde{e}_i| (1+ |2n\tilde{e}_i|) \\
        &\leq 2 \lambda_i^n n |\tilde{e}_i| 
    \end{align*}
    where (1) follows from the factorization $a^n - b^n = (a-b)(\sum_{i=0}^{n-1} a^i b^{n-i-i})$,
    (2) follows from $1+x \leq e^x$,
    and (3) follows from $n |\tilde{e}_i| \leq 1/20$ and Taylor expanding $e^x$. 
    Hence, there exists a $\hat{e}_i$, s.t. $\tlambda^n_i = (1 + \hat{e}_i)\lambda^n_i$
    and $|\hat{e}_i| \leq 2n |\tilde{e}_i|$ (i.e., $|\hat{e}_i| \leq 2n \delta \lambda_i$).
    Using the fact that 
    $\lambda_i \leq \lambda_k$ for all $i\in [k]$
    completes the proof.
\end{proof}

\begin{lemma}[Operator Chain Rule]
    \label{lemma:operator_chain_rule}
    Given an elliptic operator $L$,
    for all $v \in C^{\infty}(\Omega)$ we have the following
    \begin{equation}
        \label{eq:operator_chain_rule_nabla_relation}
        \nabla_k L^{n}u
            = \sum_{i=1}^n \left(L^{n-i}\circ L_k \circ L^{i-1}\right)(u)
            + L^{n}(\nabla_{k} u)
    \end{equation}
    \begin{align}
        \label{eq:operator_chain_rule_nabla_squred_relation}
        \begin{split}
            \nabla_{kl}(L^{n}u)
            &= \sum_{\substack{i, j \\ i < j}}\left(L^{n-i} \circ L_k \circ L^{j-i-1} \circ L_l \circ L^{j-1}\right)u\\
            &+\sum_{\substack{i, j \\ i > j}}\left(L^{n-j} \circ L_k \circ L^{i-j-1} \circ L_l \circ L^{i-1} \right)u \\
            &+ \sum_{i}\left(L^{n-i} \circ L_{kl} \circ L^{i-1}\right)u
            + L^{n}(\nabla_{kl}u)
        \end{split}
    \end{align}
    where we assume that $L^{(0)} = I$.
\end{lemma}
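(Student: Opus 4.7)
The plan is to prove both identities by induction on $n$, built on two commutator relations describing how partial derivatives interact with $L$ and $L_k$. First I would establish the single commutator
$$\nabla_k(L\phi) = L(\nabla_k \phi) + L_k\phi$$
for $\phi \in C^\infty(\Omega)$. This is a direct product-rule calculation: expanding $L\phi = -\sum_{ij} a_{ij}\partial_{ij}\phi - \sum_{ij}(\partial_i a_{ij})\partial_j\phi + c\phi$ and differentiating in $x_k$, the terms in which $\partial_k$ lands on $\phi$ reassemble into $L(\nabla_k\phi)$ (using $a_{ij}, c \in C^\infty$ and commutativity of partial derivatives on smooth $u$), while the terms in which $\partial_k$ lands on the coefficients $a_{ij}, \partial_i a_{ij}, c$ coincide exactly with $L_k\phi$ by definition.

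Identity (1) then follows by induction on $n$. The base case $n=1$ is exactly the single commutator. For the inductive step, I would write
$$\nabla_k L^{n+1} u = \nabla_k L(L^n u) = L(\nabla_k L^n u) + L_k L^n u$$
and apply the inductive hypothesis to $\nabla_k L^n u$, picking up all summands $L^{(n+1)-i} \circ L_k \circ L^{i-1} u$ for $i \le n$ (each wrapped in the extra outer $L$), together with the new $i = n+1$ term $L_k L^n u$, plus the tail $L^{n+1}\nabla_k u$.

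For (2), I would first derive the second commutator $\nabla_l(L_k\phi) = L_k(\nabla_l\phi) + L_{kl}\phi$, which follows by the same calculation as Step 1 applied to $L_k$ in place of $L$, using the definition of $L_{kl}$ as the operator whose coefficients are $\partial_{kl}$ of those of $L$. Then applying $\nabla_l$ to identity (1) and using both commutators repeatedly pushes $\nabla_l$ through each operator in each summand $L^{n-i}\circ L_k \circ L^{i-1}u$: every crossing of an $L$ at some position $j$ spawns an additional $L_l$ at position $j$, crossing the central $L_k$ at position $i$ spawns an $L_{kl}$ there, and crossing the final $L^n\nabla_k u$ term similarly sprouts $L_l$ insertions. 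Collecting by the relative position of the inserted $L_l$ (position $j$) versus the $L_k$ (position $i$) produces the $i<j$ and $i>j$ sums, while insertions at position $i$ produce the diagonal $L_{kl}$ sum, and the untouched term is $L^n\nabla_{kl}u$.

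The main obstacle is the combinatorial bookkeeping in (2): one must show that the two asymmetric sums over $i<j$ and $i>j$, plus the diagonal $L_{kl}$ sum, exhaust all contributions without double counting and match the precise nested composition in the statement (the degree count $(n-j)+1+(j-i-1)+1+(i-1) = n$ strongly suggests a transposition in the written indices that needs to be resolved carefully). A cleaner alternative I would keep in reserve is induction on $n$ using the one-step second-order identity $\nabla_{kl}(L\phi) = L\nabla_{kl}\phi + L_k\nabla_l\phi + L_l\nabla_k\phi + L_{kl}\phi$, obtained by applying Step 1 in both indices; this decomposes $\nabla_{kl}L^{n+1}u$ into four contributions from the outermost $L$, each pairing cleanly with the corresponding block of terms at level $n$ wrapped by an outer $L$.
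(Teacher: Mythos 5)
Your plan for \eqref{eq:operator_chain_rule_nabla_relation} is exactly what the paper does: establish the one-step commutator $\nabla_k(L\phi)=L(\nabla_k\phi)+L_k\phi$ by differentiating the expanded form of $L$, then induct on $n$ by writing $\nabla_k L^{n}u=\nabla_k\bigl(L(L^{n-1}u)\bigr)$, applying the commutator to the outermost $L$, and invoking the inductive hypothesis.

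For \eqref{eq:operator_chain_rule_nabla_squred_relation}, your \emph{primary} plan---differentiate the identity \eqref{eq:operator_chain_rule_nabla_relation} by $\nabla_l$ and push $\nabla_l$ through each composition via the commutators $\nabla_l L=L\nabla_l+L_l$ and $\nabla_l L_k=L_k\nabla_l+L_{kl}$---is a legitimate route, but it requires careful Leibniz-rule bookkeeping across a nested composition and across the outer sum over $i$. The \emph{reserve} alternative you describe is precisely the paper's proof: derive the one-step second-order commutator $\nabla_{kl}(L\phi)=L(\nabla_{kl}\phi)+L_k(\nabla_l\phi)+L_l(\nabla_k\phi)+L_{kl}\phi$ (the paper computes it directly as the $n=1$ base case), then induct on $n$ by peeling off the outermost $L$: the four pieces of the commutator applied to $L^{n-1}u$ each match up with a block of the level-$n$ formula once the inductive hypothesis and the already-proved \eqref{eq:operator_chain_rule_nabla_relation} are substituted. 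Your instinct that this is cleaner is right, and I would lead with it rather than hold it in reserve.

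You are also right to flag the index bookkeeping as suspect. In the term $L^{n-i}\circ L_k\circ L^{j-i-1}\circ L_l\circ L^{j-1}$ the exponents of $L$ sum to $(n-i)+(j-i-1)+(j-1)=n+2(j-i)-2$, which only equals $n-2$ when $j=i$; and the two sums over $i<j$ and $i>j$ as written are related by the mere relabeling $i\leftrightarrow j$ and therefore duplicate each other, with no term in which $L_l$ appears to the left of $L_k$. The intended formula almost certainly puts $L_l$ (respectively $L_k$) at position $i$ or $j$ counted from the right, with leading exponent $n-\max(i,j)$ and trailing exponent $\min(i,j)-1$, and with $L_k,L_l$ exchanged between the two off-diagonal sums. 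This is a transcription error in the statement (and it propagates mechanically through the paper's inductive step); your proof plan produces the corrected indices automatically, so carrying it out would also serve to fix the statement.

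Two small things to make sure you state explicitly when writing this up: (i) the computation of $\nabla_k(L\phi)$ uses that $a_{ij},c\in C^\infty$ and that mixed partials of smooth $u$ commute, as you note; (ii) the second commutator $\nabla_l(L_k\phi)=L_k(\nabla_l\phi)+L_{kl}\phi$ is not quite the same calculation as for $L$, since $L_k$ already has differentiated coefficients---but it does follow by the same token from the definitions of $L_k$ and $L_{kl}$ given in the paper.
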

\begin{proof}
   We show the proof using induction on $n$. To handle the base case, for $n = 1$, we have
   \begin{align*}
       \nabla_{k} (Lu) &= \nabla_{k}\left( -\divergence(A\nabla u) + cu\right)\\
       &= \nabla_k \left(-\sum_{ij}a_{ij} \partial_{ij} u - \sum_{ij}\partial_i a_{ij}\partial_j u + cu\right)\\
       &= \left(-\sum_{ij}a_{ij}\partial_{ij}(\partial_k u) - \sum_{ij}\partial_i a_{ij} \partial_j \partial_k u + c \partial_k u \right) \\
        &+ \left(-\sum_{ij}\partial_k a_{ij} \partial_{ij}u - \sum_{ij} \partial_i \partial_k a_{ij} \partial_j u + \partial_k c u\right)\\
       &= L(\nabla_k u) + L_k u \numberthis \label{eq:operator_inequality_beta_1}
   \end{align*}
    Similarly $n=1$ and $k,l \in [d]$, 
   \begin{align*}
       \nabla_{kl} (Lu) &= \nabla_{kl} \left(-\divergence(A\nabla u) + cu\right)\\
       &= \nabla_{kl} \left(-\sum_{ij}a_{ij} \partial_{ij} u - \sum_{ij}\partial_i a_{ij}\partial_j u + cu\right)\\
       &= \left(-\sum_{ij}a_{ij}\partial_{ij}(\partial_{kl} u) - \sum_{ij}\partial_i a_{ij} \partial_j \partial_{kl} u + c \partial_{kl}u \right)\\
        &+\left(-\sum_{ij}\partial_{k} a_{ij} \partial_{ij}\partial_l u 
            - \sum_{ij} \partial_i \partial_{k} a_{ij} \partial_j \partial_l u + \partial_{k} c \partial_l u\right)\\
        &+\left(-\sum_{ij}\partial_{l} a_{ij} \partial_{ij}\partial_k u 
            - \sum_{ij} \partial_i \partial_{l} a_{ij} \partial_j \partial_k u + \partial_{l} c \partial_k u\right)\\
        &+\left(-\sum_{ij}\partial_{kl} a_{ij} \partial_{ij}u - \sum_{ij} \partial_i \partial_{kl} a_{ij} \partial_j u + \partial_{kl} c u\right)\\
       &= L(\nabla_{kl} u) + L_k (\nabla_l u) + L_l (\nabla_k u) + L_{kl} u \numberthis \label{eq:operator_inequality_beta_2}
   \end{align*}
   
For the inductive case, assume that for all $m < n$, \eqref{eq:operator_chain_rule_nabla_relation} and 
   \eqref{eq:operator_chain_rule_nabla_squred_relation} hold.
   Then, for any $k \in [d]$ we have:
   \begin{align*}
       \nabla_k (L^{n}u) &= \nabla_k \left(L \circ L^{n-1}(u)\right)\\
       &= L\left(\nabla_k (L^{n-1} u)\right) + L_k\left(L^{n-1}u\right)\\
       &= L\left(\sum_{i=1}^{n-1} \left(L^{n-1-i}\circ L_{k} \circ L^{i-1}\right)u + L^{n-1}(\nabla_k u)\right) + L_k\left(L^{n-1}\right)u\\
       &= \sum_{i=1}^n \left(L^{n-i}\circ L_k \circ L^{i-1}\right)(u)
        + L^{n}(\nabla_k u) \numberthis \label{eq:operator_inequality_final_beta_1}
   \end{align*}
   Similarly,  for all $k,l \in [d]$ we have:
   \begin{align*}
       \nabla_{kl} (L^{n}u) &= \nabla_{kl} \left(L \circ L^{n-1}(u)\right)\\
       &= L\left(\nabla_{kl} (L^{n-1} u)\right) 
       + L_k\left(\nabla_l\left(L^{n-1}u\right) \right) 
       + L_l\left(\nabla_k\left(L^{n-1}u\right) \right) 
       +  L_{kl}\left(L^{n-1}u\right)  \\
        &= L\Bigg(\sum_{\substack{i, j \\ i < j}}^{n-1}\left(L^{n-1-i} \circ L_k \circ L^{j-i-1} \circ L_l \circ L^{j-1}\right)u\\
        &\qquad\; +\sum_{\substack{i, j \\ i > j}}^{n-1}\left(L^{n-1-j} \circ L_k \circ L^{i-j-1} \circ L_l \circ L^{i-1} \right)u \\
        &\qquad\; + \sum_{i=1}^{n-1}\left(L^{n-1-i} \circ L_{kl} \circ L^{i-1}\right)u
        + L^{n-1}(\nabla_{kl}u) \Bigg) \\
        & + L_k \Bigg( \sum_{i=1}^{n-1} \left(L^{n-1-i}\circ L_l \circ L^{i-1}\right)(u) 
        + L^{n-1}(\nabla_l u) \Bigg)
            \qquad \text{(from \eqref{eq:operator_inequality_final_beta_1})}
        \\
        & + L_l \Bigg( \sum_{i=1}^{n-1} \left(L^{n-1-i}\circ L_k \circ L^{i-1}\right)(u)
        + L^{n-1}(\nabla_k u) \Bigg)
            \qquad \text{(from \eqref{eq:operator_inequality_final_beta_1})}
        \\
       &+  L_{kl}\left(L^{n-1}u\right)  \\
        &= \sum_{\substack{i, j \\ i < j}}^n\left(L^{n-i} \circ L_k \circ L^{j-i-1} \circ L_l \circ L^{j-1}\right)u\\
        &\qquad +\sum_{\substack{i, j \\ i > j}}^n\left(L^{n-j} \circ L_k \circ L^{i-j-1} \circ L_l \circ L^{i-1} \right)u \\
        &\qquad +\sum_{i}^n\left(L^{n-i} \circ L_{kl} \circ L^{i-1}\right)u
        + L^{n}(\nabla_{kl}u)
        \numberthis \label{eq:operator_inequality_final_beta_2}
   \end{align*}
    By induction, the claim follows.
\end{proof}

\begin{lemma}
    \label{lemma:upper_bound_order_1}
    For all $u \in C^{\infty}(\Omega)$
    then for all $k, l \in [d]$ the following upper bounds hold,
    \begin{equation}
        \label{eq:Lu_C_upper_bound}
        \|Lu\|_{\l2} \leq C \max_{\alpha:|\alpha| \leq 2} \|\partial^\alpha u\|_{\l2}
    \end{equation}
    \begin{equation}
        \label{eq:nabla_k_Lu_C_upper_bound}
        \|\nabla_k (Lu)\|_{\l2} \leq 2 \cdot C \max_{\alpha:|\alpha| \leq 3} \|\partial^\alpha u\|_{\l2}
    \end{equation}
    and
    \begin{equation}
        \label{eq:nabla_square_k_Lu_C_upper_bound}
        \|\nabla_{kl} (Lu)\|_{\l2} \leq 4 \cdot C \max_{\alpha:|\alpha| \leq 4} \|\partial^\alpha u\|_{\l2}
    \end{equation}
    where 
    $$C := \Cvalue.$$
\end{lemma}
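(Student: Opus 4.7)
The plan is to unfold $Lu = -\sum_{i,j=1}^d a_{ij}\partial_{ij}u - \sum_{i,j=1}^d(\partial_i a_{ij})\partial_j u + cu$ and bound each summand by the triangle inequality together with the trivial estimate $\|fg\|_{\l2}\le \|f\|_{\linf}\|g\|_{\l2}$. For the first inequality there are $d^2$ summands of the form $a_{ij}\partial_{ij}u$, another $d^2$ of the form $(\partial_i a_{ij})\partial_j u$, and a single summand $cu$. Each factor involving $a_{ij}$ or $c$ is bounded in $L^\infty$ by the uniform constant appearing in the definition of $C$ (which takes a max over multi-indices of order up to $3$ for $a_{ij}$ and up to $2$ for $c$), while each factor involving $u$ is a partial derivative of order at most $2$, hence controlled by $\max_{|\alpha|\le 2}\|\partial^\alpha u\|_{\l2}$. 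Summing the $2d^2+1$ contributions gives the claim.

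For the second inequality, I would invoke the operator chain rule from Lemma~\ref{lemma:operator_chain_rule} with $n=1$, which yields $\nabla_k(Lu) = L(\nabla_k u) + L_k u$. The first term is controlled by \eqref{eq:Lu_C_upper_bound} applied to $\nabla_k u$, costing one extra order of derivative on $u$ and giving $C\max_{|\alpha|\le 3}\|\partial^\alpha u\|_{\l2}$. The second term $L_k u$ has the same algebraic shape as $Lu$ but with $a_{ij}$ replaced by $\partial_k a_{ij}$ and $c$ replaced by $\partial_k c$; the $L^\infty$ norms of these coefficients are still absorbed into $C$ (since the max in the definition of $C$ goes up to $|\alpha|\le 3$ for $a_{ij}$ and $|\alpha|\le 2$ for $c$), so the same argument as in the base case bounds it by $C\max_{|\alpha|\le 2}\|\partial^\alpha u\|_{\l2}\le C\max_{|\alpha|\le 3}\|\partial^\alpha u\|_{\l2}$. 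Adding the two contributions produces the factor of $2$.

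For the third inequality I would use the $n=1$ version of \eqref{eq:operator_chain_rule_nabla_squred_relation}, namely $\nabla_{kl}(Lu) = L(\nabla_{kl}u) + L_k(\nabla_l u) + L_l(\nabla_k u) + L_{kl}u$. Applying \eqref{eq:Lu_C_upper_bound} to $\nabla_{kl}u$ gives $C\max_{|\alpha|\le 4}\|\partial^\alpha u\|_{\l2}$. The two middle terms are handled analogously to $L_k u$ in the previous step (their coefficients are $\partial_k a_{ij},\partial_{ik}a_{ij},\partial_k c$ etc., all within the budget of $C$) applied to $\nabla_l u$ or $\nabla_k u$, each bounded by $C\max_{|\alpha|\le 3}\|\partial^\alpha u\|_{\l2}$. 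The last term $L_{kl}u$ involves coefficients $\partial_{kl}a_{ij},\partial_{ikl}a_{ij},\partial_{kl}c$, which sit at the edge of the max in $C$ (order $3$ in $a_{ij}$, order $2$ in $c$), and thus is bounded by $C\max_{|\alpha|\le 2}\|\partial^\alpha u\|_{\l2}$. Summing the four pieces and using monotonicity in the order of derivatives produces the factor of $4$.

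There is no real obstacle here; the only thing to be careful about is bookkeeping, specifically verifying that every derivative of the coefficients encountered along the way (up to one or two derivatives on $a_{ij}$ and $c$ beyond those already present in $L$) is within the multi-index range used to define $C$, and that each $u$-factor picks up at most one additional order of differentiation per $\nabla_k$ in front of $Lu$. Both of these checks match the exponents $2$, $3$, $4$ on the right-hand sides.
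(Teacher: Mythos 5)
Your proposal is correct and follows essentially the same route as the paper: expand $Lu$ and bound each term with the $L^\infty$--$L^2$ H\"older estimate to get \eqref{eq:Lu_C_upper_bound}, then invoke the $n=1$ cases of the operator chain rule (Lemma~\ref{lemma:operator_chain_rule}), namely $\nabla_k(Lu)=L(\nabla_k u)+L_k u$ and $\nabla_{kl}(Lu)=L(\nabla_{kl}u)+L_k(\nabla_l u)+L_l(\nabla_k u)+L_{kl}u$, bound each piece the same way, and count the summands to produce the factors $2$ and $4$. Your bookkeeping of coefficient-derivative orders (up to $3$ on $a_{ij}$, up to $2$ on $c$) and $u$-derivative orders ($2$, $3$, $4$) matches the paper's, so there is nothing to flag.
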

\begin{proof}
    We first show the upper bound on $\|Lu\|_{\l2}$:
    \begin{align*}
        \|Lu\|_{\l2}
        &\leq \left\|-\sum_{i,j=1}^d a_{ij}\partial_{ij}u - \sum_{i,j=1}^d \partial_{i}a_{ij}\partial_j u + cu\right\|_{\l2}\\
        &\leq^{(1)} \underbrace{(2d^2 + 1)\max\left\{\max_{i,j}\|\partial_{i}a_{ij}\|_{\linf}, \max_{i,j}\|a_{ij}\|_{\linf}, \|c\|_{\linf}\right\}}_{C_1}
            \max_{\alpha:|\alpha|\leq 2}\|\partial^\alpha u\|_{\l2}\\
        &\leq C_1
            \max_{\alpha:|\alpha|\leq 2}\|\partial^\alpha u\|_{\l2}
        \numberthis \label{eq:Lu_C_1_upper_bound}
    \end{align*}
    where (1) follows by H\"older.
    
    Proceeding to $\|\nabla_k (Lu)\|_{\l2}$, from Lemma~\ref{lemma:upper_bound_order_n} we have
    {\small
    \begin{align*}
        \|\nabla_{k}(Lu)\|_{\l2} &\leq \|L_{k}u\|_{\l2} + \|L(\nabla_{k}u)\|_{\l2}\\
        &\leq \left\|-\sum_{i,j=1}^d\partial_{k}a_{ij}\partial_{ij}u - \sum_{i,j=1}^d\partial_{ik}a_{ij}\partial_{j}u + \partial_{k}cu\right\|_{\l2}\\
            &+\left\| -\sum_{i,j=1}^da_{ij}\partial_{ijk}u - \sum_{i,j=1}^d\partial_ia_{ij} \partial_{jk}u + c\partial_{k}u\right\|_{\l2}\\
        &\leq (2d^2 + 1)\max\left\{\max_{\alpha:|\alpha|\leq 2}\max_{i,j}\|\partial^{\alpha}a_{ij}\|_{\linf}, \|\partial_{k} c\|_{\linf}\right\}
                \max_{\alpha:|\alpha|\leq 2}\|\partial^\alpha u\|_{\l2}\\
        &+ (2d^2 + 1)\max\left\{\max_{\alpha:|\alpha|\leq 1}\max_{i,j}\|\partial^{\alpha}a_{ij}\|_{\linf}, \|c\|_{\linf}\right\}
                \max_{\alpha:|\alpha|\leq 3}\|\partial^\alpha u\|_{\l2}\\
        \implies  \|\nabla_{k}(Lu)\|_{\l2}
        &\leq 2 \cdot \underbrace{(2d^2 + 1)\max\left\{\max_{\alpha:|\alpha|\leq 2}\max_{i,j}\|\partial^{\alpha}a_{ij}\|_{\linf},
                \max_{\alpha:|\alpha|\leq 1}\|\partial^\alpha c\|_{\linf}\right\}}_{C_2}
                \max_{\alpha:|\alpha|\leq 3}\|\partial^\alpha u\|_{\l2}\\
        &\leq 2 \cdot C_2 
                \max_{\alpha:|\alpha|\leq 3}\|\partial^\alpha u\|_{\l2}
        \numberthis \label{eq:nabla_k_Lu_C_2_upper_bound}
    \end{align*}
    }
    We use the result from Lemma~\ref{lemma:operator_chain_rule} (equation 
    \eqref{eq:operator_inequality_beta_2}), 
    to upper bound the quantity $\|\nabla_{kl}(Lu)\|_{\l2}$
    {\small
    \begin{align*}
        \|\nabla_{kl}(Lu)\|_{\l2} &\leq \|L_{kl}u\|_{\l2} + \|L_k(\nabla_lu)\|_{\l2} + \|L_l(\nabla_k u)\|_{\l2} + \|L(\nabla_{kl}u)\|_{\l2}\\
        &\leq \left\|-\sum_{i,j=1}^d\partial_{kl}a_{ij}\partial_{ij}u - \sum_{i,j=1}^d\partial_{ikl}a_{ij}\partial_{j}u + \partial_{kl}cu\right\|_{\l2}\\
            &+ \left\| -\sum_{i,j=1}^d\partial_k a_{ij}\partial_{ij}\partial_lu - \sum_{i,j=1}^d\partial_i\partial_k a_{ij} \partial_{j}\partial_lu + \partial_k c\partial_{l}u\right\|_{\l2}\\
            &+ \left\| -\sum_{i,j=1}^d\partial_l a_{ij}\partial_{ij}\partial_ku - \sum_{i,j=1}^d\partial_i\partial_l a_{ij} \partial_{j}\partial_ku + \partial_l c\partial_{k}u\right\|_{\l2}\\
            &+ \left\| -\sum_{i,j=1}^da_{ij}\partial_{ijkl}u - \sum_{i,j=1}^d\partial_ia_{ij} \partial_{jkl}u + c\partial_{kl}u\right\|_{\l2}\\
        &\leq (2d^2 + 1)\max\left\{\max_{\alpha:|\alpha|\leq 3}\max_{i,j}\|\partial^{\alpha}a_{ij}\|_{\linf}, \|\partial_{kl} c\|_{\linf}\right\}
                \max_{\alpha:|\alpha|\leq 2}\|\partial^\alpha u\|_{\l2}\\
        &+ 2(2d^2 + 1)\max\left\{\max_{\alpha:|\alpha|\leq 2}\max_{i,j}\|\partial^{\alpha}a_{ij}\|_{\linf}, \|c\|_{\linf}\right\}
                \max_{\alpha:|\alpha|\leq 3}\|\partial^\alpha u\|_{\l2}\\
        &+ (2d^2 + 1)\max\left\{\max_{\alpha:|\alpha|\leq 2}\max_{i,j}\|\partial^{\alpha}a_{ij}\|_{\linf}, \|c\|_{\linf}\right\}
                \max_{\alpha:|\alpha|\leq 4}\|\partial^\alpha u\|_{\l2}\\
        \implies  \|\nabla_{kl}(Lu)\|_{\l2}
        &\leq 4 \cdot \underbrace{(2d^2 + 1)\max\left\{\max_{\alpha:|\alpha|\leq 3}\max_{i,j}\|\partial^{\alpha}a_{ij}\|_{\linf},
                \max_{\alpha:|\alpha|\leq 2}\|\partial^\alpha c\|_{\linf}\right\}}_{C_3}
                \max_{\alpha:|\alpha|\leq 4}\|\partial^\alpha u\|_{\l2}\\
        &\leq 4 \cdot C_3 
                \max_{\alpha:|\alpha|\leq 4}\|\partial^\alpha u\|_{\l2}
        \numberthis \label{eq:nabla_kl_Lu_C_3_upper_bound}
    \end{align*}
    }
    Since $C_1 \leq C_2 \leq C_3$, we define $C:=C_3$ and therefore
    from equations \eqref{eq:Lu_C_1_upper_bound},
         \eqref{eq:nabla_k_Lu_C_2_upper_bound} and \eqref{eq:nabla_kl_Lu_C_3_upper_bound}
    the claim follows.
    
    Further, we note that from \eqref{eq:nabla_k_Lu_C_2_upper_bound}, we also have that
    \begin{equation}
        \label{eq:upper_bound_first_order_L_terms}
        \|L_k(u)\|_{\l2}, \|L(\nabla_k u)\|_{\l2} \leq C \max_{\alpha: |\alpha|\leq 3}\|\partial^\alpha u\|_{\l2}
    \end{equation}
    and similarly from \eqref{eq:nabla_kl_Lu_C_3_upper_bound} we have that,
    \begin{equation}
        \label{eq:upper_bound_second_order_L_terms}
        \|L_{kl}(u)\|_{\l2}, \|L_{k}(\nabla_l u)\|_{\l2},
        \|L_{l}(\nabla_k u)\|_{\l2}, \|L(\nabla_{kl}u)\|_{\l2}
        \leq C \max_{\alpha: |\alpha|\leq 4}\|\partial^\alpha u\|_{\l2}
    \end{equation}
\end{proof}

\begin{lemma}
    \label{lemma:upper_bound_order_n}
    For all $u \in C^{\infty}(\Omega)$
    and $k, l \in [d]$
    then for all $n \in \N$ we have the following upper bounds,
    \begin{equation}
        \|L^{n}u\|_{\l2} \leq (n!)^2 \cdot C^n \max_{\alpha: |\alpha| \leq n + 2} \|\partial^\alpha u\|_{\l2}
    \end{equation}
    \begin{equation}
        \|\nabla_k (L^{n}u)\|_{\l2} \leq (n+1) \cdot (n!)^2 \cdot C^n \max_{\alpha: |\alpha| \leq n + 2} \|\partial^\alpha u\|_{\l2}
    \end{equation}
    \begin{equation}
        \|\nabla_{kl} (L^{n}u)\|_{\l2} \leq ((n+1)!)^2 \cdot C^n \max_{\alpha: |\alpha| \leq n + 3} \|\partial^\alpha u\|_{\l2}
    \end{equation}
    where $C = \Cvalue$.
\end{lemma}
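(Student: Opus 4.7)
The plan is to prove all three bounds simultaneously by strong induction on $n$. The base case $n=1$ coincides exactly with Lemma~\ref{lemma:upper_bound_order_1} (observing that $(1!)^2 = 1$, $(1+1)\cdot(1!)^2 = 2$, and $((1+1)!)^2 = 4$ match the constants there). For the inductive step, I would expand each of $L^{n}u$, $\nabla_k L^{n}u$, and $\nabla_{kl} L^{n}u$ using the operator chain rule (Lemma~\ref{lemma:operator_chain_rule}) and then bound each resulting summand by combining the inductive hypothesis with the single-step estimates \eqref{eq:Lu_C_upper_bound}--\eqref{eq:nabla_square_k_Lu_C_upper_bound} recorded in Lemma~\ref{lemma:upper_bound_order_1} together with the auxiliary estimates \eqref{eq:upper_bound_first_order_L_terms} and \eqref{eq:upper_bound_second_order_L_terms}.

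For the first inequality, I would write $L^{n}u = L(L^{n-1}u)$ and apply \eqref{eq:Lu_C_upper_bound} to obtain $\|L^{n}u\|_{\l2} \leq C\max_{|\alpha|\leq 2}\|\partial^\alpha L^{n-1}u\|_{\l2}$. The maximum is realized by the $\nabla_{kl}$ term, which by the inductive hypothesis is at most $(n!)^2 C^{n-1} \max_{|\alpha|\leq n+2}\|\partial^\alpha u\|_{\l2}$. Multiplying through by $C$ produces the claimed $(n!)^2 C^{n} \max_{|\alpha|\leq n+2}\|\partial^\alpha u\|_{\l2}$. The other two bounds on $M_2(L^{n-1}u)$ are strictly smaller by the structure of the inductive constants, so they do not affect the final estimate.

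For the second and third inequalities, I would invoke the chain-rule identities \eqref{eq:operator_chain_rule_nabla_relation} and \eqref{eq:operator_chain_rule_nabla_squred_relation} to decompose $\nabla_k L^{n}u$ into $n+1$ summands of the form $L^{n-i}\circ L_k \circ L^{i-1}u$ plus $L^{n}(\nabla_k u)$, and similarly to decompose $\nabla_{kl}L^{n}u$ into $O(n^2)$ summands involving one $L_{kl}$ or two $L_k,L_l$ factors. Each summand is then handled by applying the first (already-proven) bound to the outer $L^{n-i}$ factor and the single-step estimates \eqref{eq:upper_bound_first_order_L_terms}, \eqref{eq:upper_bound_second_order_L_terms} to the interior $L_k$/$L_{kl}$ and $L(\nabla_k \cdot), L(\nabla_{kl}\cdot)$ pieces. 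The extra derivative order in the $M_{n+3}(u)$ bound for the $\nabla_{kl}$ case arises precisely because one of the interior factors consumes an additional derivative compared to $L$ itself.

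The main obstacle is the combinatorial bookkeeping required to collapse the $O(n^2)$ summands into the clean factorial prefactors $(n+1)\cdot(n!)^2$ and $((n+1)!)^2$. Concretely, the sums that arise take the shape $\sum_{i} ((n-i)!)^2 ((i-1)!)^2 C^{n}$ and $\sum_{i<j}((n-j)!)^2((j-i-1)!)^2((i-1)!)^2 C^{n}$; matching these to the claimed constants will require identifying a telescoping or convexity-based bound (for instance, observing that $(a!)^2 (b!)^2 \leq ((a+b)!)^2 /\binom{a+b}{a}^2$ so that the sums telescope against the binomial coefficients). Everything else is a careful accounting exercise, but this combinatorial collapse is the one step that is not automatic from the chain rule together with Lemmas~\ref{lemma:operator_chain_rule} and~\ref{lemma:upper_bound_order_1}.
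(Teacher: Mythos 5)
Your high-level plan---strong induction with base case Lemma~\ref{lemma:upper_bound_order_1}, expansion via the operator chain rule (Lemma~\ref{lemma:operator_chain_rule}), and combining the inductive hypothesis with single-step estimates---is the same as the paper's, and your treatment of the first inequality (write $L^n u = L(L^{n-1}u)$, apply \eqref{eq:Lu_C_upper_bound}, and invoke the third inductive bound at step $n-1$) matches the paper's argument exactly.

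Where you diverge, and where I think there is a genuine gap, is in the handling of the chain-rule summands for the second and third inequalities. You propose to bound $\|(L^{n-i}\circ L_k\circ L^{i-1})u\|_{\l2}$ by applying the (already-proven) first inequality to the outer $L^{n-i}$ and then the single-step estimates \eqref{eq:upper_bound_first_order_L_terms}--\eqref{eq:upper_bound_second_order_L_terms} to the interior pieces. But the first inequality, applied to $L^{n-i}v$ with $v = L_k L^{i-1}u$, returns a quantity involving $\max_{|\alpha| \leq n-i+2}\|\partial^\alpha v\|_{\l2}$, i.e.\ higher-order partials of $L_k L^{i-1}u$ up to order $n-i+2$. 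Lemma~\ref{lemma:upper_bound_order_1} and its auxiliary estimates \eqref{eq:upper_bound_first_order_L_terms}, \eqref{eq:upper_bound_second_order_L_terms} control $L$, $L_k$, $L_{kl}$, $L\nabla_k$, $L\nabla_{kl}$ themselves only in $\l2$, not $\partial^\alpha$ of those expressions for $|\alpha| > 0$. So the composition you envision does not produce the factor you wrote down and the proof cannot proceed this way without additional machinery.

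The telescoping/convexity machinery you anticipate (sums of $((n-i)!)^2((i-1)!)^2$ controlled via $(a!)^2(b!)^2 \leq ((a+b)!)^2/\binom{a+b}{a}^2$) is also not what the paper does, and is not needed. The paper bounds \emph{every} summand in the chain-rule expansion by the same quantity $(n!)^2 C^n \max_{\alpha}\|\partial^\alpha u\|_{\l2}$, and then simply counts: there are at most $n$ summands (resp.\ at most $n(n+1)$) of $L_k$-type, plus $n$ of $L_{kl}$-type, plus the single $L^n \nabla_k u$ (resp.\ $L^n\nabla_{kl}u$) term. The elementary identity $n(n+1) + n + 1 = (n+1)^2$ then gives $(n+1)(n!)^2$ and $((n+1)!)^2$ with no telescoping at all. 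The conceptual content the paper relies on is a uniform per-summand bound, not a sum that needs to be collapsed; your plan would, if anything, yield a \emph{different} (and formally tighter) constant, but it is not backed by lemmas the paper actually makes available.

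As a side note, the paper's own proof asserts the uniform per-summand bound without detailed justification, so the fact that your proposal has a gap here is not surprising; but the specific route you outline (outer $L^{n-i}$ then single-step to interior) is not the route the paper takes, and as written it would not close.
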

\begin{proof}
    We prove the Lemma by induction on $n$. The base case $n=1$ follows from Lemma~\ref{lemma:upper_bound_order_1}, along with the fact that $\max_{\alpha:|\alpha|\leq 2}\|\partial^\alpha u\|_{\l2} \leq \max_{\alpha:|\alpha|\leq 3}\|\partial^\alpha u\|_{\l2}$.
    
    To show the inductive case, assume that the claim holds for all $m \leq (n-1)$.
    By Lemma \ref{lemma:upper_bound_order_1}, we have
    \begin{align*}
        \|L^{n}u\|_{\l2} &= \|L(L^{n-1}u)\|_{\l2} \\
        &\leq \left\|-\sum_{i,j=1}^d a_{ij}\partial_{ij}(L^{n-1}u) 
            - \sum_{i,j=1}^d \partial_{i}a_{ij}\partial_j (L^{n-1}u) + c(L^{n-1}u)\right\|_{\l2}\\
        &\leq C \cdot \max\left\{\|L^{n-1}u\|_{\l2}, \max_{i} \|\nabla_{i} (L^{n-1}u)\|_{\l2}, \max_{i,j}\|\nabla_{ij}(L^{n-1}u)\|_{\l2}\right\}\\
        &\leq C\cdot (n!)^2 \cdot C^{n-1} \max_{\alpha:|\alpha|\leq (n-1)+3}\|\partial^\alpha u\|_{\l2}
        \end{align*} 
        Thus, we have
        $$ \|L^{n}u\|_{\l2} 
        \leq (n!)^2 \cdot C^{n} \max_{\alpha:|\alpha|\leq n+2}\|\partial^\alpha u\|_{\l2}
        $$ 
    as we need. 
    
    Similarly, for $k \in [d]$, we have:
    \begin{align*}
        \|\nabla_{k}(L^{n}u)\|_{\l2} 
            &\leq \sum_{i=1}^n \left\|\left(L^{n-i}\circ L_k \circ L^{i-1}\right)(u)\right\|_{\l2} + \|L^{n}(\nabla_k u)\|_{\l2}\\
            &\leq (n) \cdot (n!)^2 \cdot C^n \max_{\alpha: |\alpha|\leq n+2}\|\partial^\alpha u\|_{\l2} 
                + (n!)^2 \cdot C^n \max_{\alpha:|\alpha|\leq n+2} \|\partial^\alpha u\|_{\l2}\\
            &\leq (n+1) \cdot (n!)^2 \cdot C^n \max_{\alpha: |\alpha|\leq n+2}\|\partial^\alpha u\|_{\l2} 
            \numberthis \label{upper_bound_n_nabla_final}
    \end{align*}
    Finally, for $k,l \in [d]$ we have
    \begin{align*}
        \|\nabla_{kl}(L^{n}u)\|_{\l2} 
        &\leq \sum_{\substack{i, j \\ i < j}}\left\|\left(L^{n-i} \circ L_k \circ L^{j-i-1} \circ L_l \circ L^{j-1}\right)u\right\|_{\l2}\\
        &+\sum_{\substack{i, j \\ i > j}}\left\|\left(L^{n-j} \circ L_k \circ L^{i-j-1} \circ L_l \circ L^{i-1} \right)u\right\|_{\l2}\\
        &+ \sum_{i}\left\|\left(L^{n-i} \circ L_{kl} \circ L^{i-1}\right)u\right\|_{\l2}
        + \|L^{n}(\nabla_{kl}u)\|_{\l2}\\
        &\leq n(n +1) \cdot (n!)^2 \cdot C^n \max_{\alpha:|\alpha|\leq n+2}\|\partial^\alpha u\|_{\l2} \\
        &+ n \cdot (n!)^2 \cdot C^n \max_{\alpha:|\alpha|\leq n+2}\|\partial^\alpha u\|_{\l2} 
        + C^n \max_{\alpha:|\alpha|\leq n+3}\|\partial^\alpha u\|_{\l2} \\
        \implies 
        \|\nabla_{kl}(L^{n}u)\|_{\l2} 
        &\leq \left((n+1)!\right)^2 \cdot C^n \max_{\alpha:|\alpha|\leq n+3}\|\partial^\alpha u\|_{\l2} \\
        \numberthis \label{upper_bound_n_nabla_square_final}
    \end{align*}
    Thus, the claim follows.
\end{proof}

\begin{lemma}
    \label{lemma:peeling_lemma}
    Let $A_n^{i}$,  $i \in [n]$ be defined as a composition of 
    $(n-i)$ applications of $L$ and $i$ applications of $L \circ \Sigma$ (in any order), s.t. $\|\Sigma\| \leq \delta$. Then, we have: 
    \begin{equation}
        \label{eq:LinvA_lemma_main_inequality}
        \|L^{-n}A_n^{i}\| \leq  \delta^i
    \end{equation}
\end{lemma}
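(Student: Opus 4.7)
The plan is to induct on $i$, the number of $L\circ\Sigma$ factors in the composition $A_n^i$. The base case $i=0$ is immediate: $A_n^0=L^n$, so $L^{-n}A_n^0=I$ has norm $1=\delta^0$.

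For the inductive step, I write $A_n^i = B_1\circ\cdots\circ B_n$ with each $B_k\in\{L,\, L\circ\Sigma\}$ and locate the leftmost occurrence of $L\circ\Sigma$, say at position $q\in[n]$; the preceding factors are all $L$, so
\[
    A_n^i \;=\; L^{q-1}\circ (L\circ\Sigma)\circ A_{n-q}^{i-1}
    \;=\; L^q \circ \Sigma \circ A_{n-q}^{i-1},
\]
where $A_{n-q}^{i-1}$ is a composition of the remaining $n-q$ factors, containing exactly $i-1$ occurrences of $L\circ\Sigma$. Multiplying by $L^{-n}$ on the left and cancelling $L^q$ yields the reduction
\[
    L^{-n}A_n^i \;=\; L^{-(n-q)}\,\Sigma\, A_{n-q}^{i-1}.
\]
The aim from here is to separate one factor of $\|\Sigma\|\le\delta$ and apply the inductive hypothesis to the residual $L^{-(n-q)}A_{n-q}^{i-1}$, which is bounded by $\delta^{i-1}$; combining gives the target $\delta\cdot\delta^{i-1}=\delta^i$. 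If $\Sigma$ commuted with $L^{-1}$ this would be trivial, since one could rewrite $L^{-(n-q)}\Sigma A_{n-q}^{i-1} = \Sigma \, L^{-(n-q)} A_{n-q}^{i-1}$ and invoke submultiplicativity.

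The main obstacle is exactly the non-commutativity between $\Sigma$ and $L^{-1}$: the submultiplicative estimate $\|L^{-(n-q)}\Sigma A_{n-q}^{i-1}\|\le\|L^{-(n-q)}\|\,\|\Sigma\|\,\|A_{n-q}^{i-1}\|$ is useless because $A_{n-q}^{i-1}$ contains copies of the unbounded operator $L$. I would resolve this by passing to the spectral decomposition of the self-adjoint positive operator $L$ (whose resolvent is compact in the PDE setting of the paper) and arguing at the level of matrix coefficients in its eigenbasis: the $n$ copies of $L^{-1}$ in the prefix are in bijection with the $n$ copies of $L$ hidden in the factors $B_k$ (each factor contributes exactly one $L$), and the combinatorial matching can be carried out factor-by-factor so that after complete cancellation only the $i$ occurrences of $\Sigma$ remain. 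Each contributes at most $\|\Sigma\|\le\delta$ in operator norm on $L^2(\Omega)$, yielding $\|L^{-n}A_n^i\|\le\delta^i$ and closing the induction.
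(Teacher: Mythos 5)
Your reduction $L^{-n}A_n^i = L^{-(n-q)}\,\Sigma\,A_{n-q}^{i-1}$ is correct, and you correctly identify the non-commutativity of $\Sigma$ and $L^{-1}$ as the obstacle. But the proposed resolution is a genuine gap, not just a detail left to the reader. The phrase ``pass to the spectral decomposition of $L$ and match the $n$ copies of $L^{-1}$ with the $n$ copies of $L$ factor-by-factor, leaving only the $\Sigma$'s'' is precisely what non-commutativity forbids: $\Sigma$ is an arbitrary bounded operator, so it is \emph{not} diagonal in the eigenbasis of $L$, and the products $L^{-a}\,\Sigma\,L^{b}$ appearing when you try to slide the $L^{-1}$'s past the $\Sigma$'s do not simplify. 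Indeed, since $L$ is unbounded, conjugating $\Sigma$ by powers of $L$ can blow up the \emph{operator norm} of $\Sigma$ arbitrarily even though it preserves its spectrum; your ``each $\Sigma$ contributes at most $\|\Sigma\|\le\delta$'' step is exactly where this is swept under the rug. So the argument as written does not close.

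The ingredient you are missing is the one the paper's proof (inducting on $n$, not $i$) actually uses: peel off the \emph{rightmost} factor $B_n \in \{L, L\circ\Sigma\}$, and then move a single $L^{-1}$ from the far left to the far right by observing that $L^{-n}A_{n-1}^{\,\cdot}B_n$ and $L^{-(n-1)}A_{n-1}^{\,\cdot}B_n L^{-1}$ are similar (conjugate by $L$). After that conjugation, $B_n L^{-1}$ is either $I$ or $L\Sigma L^{-1}$, which is a \emph{single} conjugated copy of $\Sigma$, and one applies the inductive hypothesis to $L^{-(n-1)}A_{n-1}^{\,\cdot}$. This one-factor-at-a-time conjugation is what lets you extract exactly one $\delta$ per perturbed slot without ever confronting the global ``matching'' you appeal to. You would need to import this similarity step into your induction on $i$ (e.g.\ to convert $L^{-(n-q)}\Sigma A_{n-q}^{i-1}$ into something the inductive hypothesis applies to) before your proof could be completed.
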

\begin{proof}
    We prove the above claim by induction on $n$.
    
    For $n = 1$ we have two cases. If $A^{(1)} = L \circ \Sigma$, we have:
    $$ \| L^{-1} \circ L \circ \Sigma \| \leq \delta$$
    If $A^{(1)} = L$ we have:
    $$ \| L^{-1} L \| = 1$$ 
    
    Towards the inductive hypothesis, assume that for $m \leq n-1$ and $i \in [n-1]$ it holds that,
    \begin{align*}
        \|L^{n-1}A_{n-1}^{i}\| \leq  \delta^i
    \end{align*}
    
    For $n$, we will have two cases.
    First, if $A_{n}^{i+1} = A_{n-1}^{i} \circ L \circ  \Sigma$, 
    by submultiplicativity of the operator norm, as well as the fact that similar operators have identical spectra (hence equal operator norm) we have:
    \begin{align*}
        \|L^{-n} \circ A^{i+1}_n\|  
        &= \|L^{-1} \circ L^{-(n-1)} \circ A_{n-1}^{(i)} \circ L \circ  \Sigma\|\\
        &= \|L^{-(n-1)} \circ A_{n-1}^{i} \circ L \circ  \Sigma \circ L^{-1}\|\\
            &\leq \delta \|L^{-(n-1)}A_{(n-1)}^{i-1}\|\|L\circ \Sigma \circ L^{-1}\| \\
            &\leq \delta^{i} \delta = \delta^{i+1}
    \end{align*}
    so the inductive claim is proved. 
    In the second case, $A_{n}^{i} = A_{n-1}^{i}L$ and we have, by using the fact that the similar operators have identical spectra:
    \begin{align*}
        \|L^{-n} \circ A_{n}^{i} \circ L\| &= \|L^{-(n-1)}\circ A_{n-1}^{i}\circ L  \circ L^{-1}\|\\
        &= \|L^{-(n-1)} \circ A_{n-1}^{i}\| 
        \leq \delta^i
    \end{align*}
    where the last inequality follows by the inductive hypothesis.
\end{proof}

\end{document}